\def\real{\mathbb{R}}
\def\R{\mathbb{R}}
\def\pdim{p}
\def\parset{\Omega}
\def\norm#1{\|#1\|}
\def\loss{\mathcal{L}}
\def\numobs{n}
\def\pdim{d}
\def\spindex{s}
\theoremstyle{plain}
\newtheorem{proposition}{Proposition}
\newtheorem{theo}{Theorem}[section]
\newtheorem{lem}{Lemma}[section]
\newtheorem{prop}{Proposition}[section]
\newtheorem{cor}{Corollary}[section]
\newtheorem{corollary}{Corollary}
\theoremstyle{definition} 
\newtheorem{nota}{Notation}[section]
\newtheorem{de}{Definition}[section]
\newtheorem{exa}{Example}[section]
\newtheorem{as}{Assumption}[section]
\newtheorem{alg}{Algorithm}[section]
\newcommand{\btheo}{\begin{theo}}
\newcommand{\bde}{\begin{de}}
\newcommand{\ble}{\begin{lem}}
\newcommand{\bpr}{\begin{prop}}
\newcommand{\bno}{\begin{nota}}
\newcommand{\bex}{\begin{exa}}
\newcommand{\bcor}{\begin{cor}}
\newcommand{\spro}{\begin{proof}}
\newcommand{\bas}{\begin{as}}
\newcommand{\balg}{\begin{alg}}
\newcommand{\etheo}{\end{theo}}
\newcommand{\ede}{\end{de}}
\newcommand{\ele}{\end{lem}}
\newcommand{\epr}{\end{prop}}
\newcommand{\eno}{\end{nota}}
\newcommand{\eex}{\end{exa}}
\newcommand{\ecor}{\end{cor}}
\newcommand{\fpro}{\end{proof}}
\newcommand{\eas}{\end{as}}
\newcommand{\ealg}{\end{alg}}
\theoremstyle{plain}
\newtheorem{theos}{Theorem}
\newtheorem{props}{Proposition}
\newtheorem{lems}{Lemma}
\newtheorem{cors}{Corollary}
\theoremstyle{definition}
\newtheorem{exas}{Example}
\newtheorem{algs}{Algorithm}
\newtheorem{asss}{Assumption}
\newtheorem{defns}{Definition}
\newcommand{\btheos}{\begin{theos}}
\newcommand{\etheos}{\end{theos}}
\newcommand{\bprops}{\begin{props}}
\newcommand{\eprops}{\end{props}}
\newcommand{\bdes}{\begin{defns}}
\newcommand{\edes}{\end{defns}}
\newcommand{\blems}{\begin{lems}}
\newcommand{\elems}{\end{lems}}
\newcommand{\bcors}{\begin{cors}}
\newcommand{\ecors}{\end{cors}}
\newcommand{\bexs}{\begin{exas}}
\newcommand{\eexs}{\end{exas}}
\newcommand{\balgs}{\begin{algs}}
\newcommand{\ealgs}{\end{algs}}
\newcommand{\bass}{\begin{asss}}
\newcommand{\eass}{\end{asss}}
\renewcommand{\P}{\ensuremath{\mathbb{P}}}
\newcommand{\ip}[2]{\langle #1, #2\rangle}
\newtheorem{theorem}{Theorem}
\newtheorem{lemma}{Lemma}
\newtheorem{assumption}{Assumption}
\newcommand{\order}{\mathcal{O}}
\newcommand{\defn}{\ensuremath{:  =}}
\newcommand{\inprod}[2]{\ensuremath{\langle #1 , \, #2 \rangle}}
\newcommand{\normal}[1]{\ensuremath{\mathcal{N}(0,#1)}}
\newcommand{\sample}{\ensuremath{z}}
\long\def\@makecaption#1#2{
        \vskip 0.8ex
        \setbox\@tempboxa\hbox{\small {\bf #1:} #2}
        \parindent 1.5em  
        \dimen0=\hsize
        \advance\dimen0 by -3em
        \ifdim \wd\@tempboxa >\dimen0
                \hbox to \hsize{
                        \parindent 0em
                        \hfil 
                        \parbox{\dimen0}{\def\baselinestretch{0.96}\small
                                {\bf #1.} #2
                                } 
                        \hfil}
        \else \hbox to \hsize{\hfil \box\@tempboxa \hfil}
        \fi
        }
\newenvironment{carlist}
 {\begin{list}{$\bullet$}
 {\setlength{\topsep}{0in} \setlength{\partopsep}{0in}
  \setlength{\parsep}{0in} \setlength{\itemsep}{\parskip} \addtolength{\itemsep}{-0.15cm}
  \setlength{\leftmargin}{0.07in} \setlength{\rightmargin}{0.08in}
  \setlength{\listparindent}{0in} \setlength{\labelwidth}{0.08in}
  \setlength{\labelsep}{0.1in} \setlength{\itemindent}{0.05in}}}
 {\end{list}}
\newcommand{\bcar}{\begin{carlist}}
\newcommand{\ecar}{\end{carlist}}
\newenvironment{proof-of-theorem}[1][{}]{\noindent{\bf Proof of
    Theorem~{#1}} \hspace*{1em}}{\qed\smallskip\\}
\newlength{\widebarargwidth}
\newlength{\widebarargheight}
\newlength{\widebarargdepth}
\DeclareRobustCommand{\widebar}[1]{%
  \settowidth{\widebarargwidth}{\ensuremath{#1}}%
  \settoheight{\widebarargheight}{\ensuremath{#1}}%
  \settodepth{\widebarargdepth}{\ensuremath{#1}}%
  \addtolength{\widebarargwidth}{-0.3\widebarargheight}%
  \addtolength{\widebarargwidth}{-0.3\widebarargdepth}%
  \makebox[0pt][l]{\hspace{0.3\widebarargheight}%
    \hspace{0.3\widebarargdepth}%
    \addtolength{\widebarargheight}{0.3ex}%
    \rule[\widebarargheight]{0.95\widebarargwidth}{0.1ex}}%
  {#1}}
\long\def\comment#1{}
\newcommand{\qpar}{\ensuremath{q}}
\newcommand{\radq}{\ensuremath{R_\qpar}}
\newcommand{\Parset}{\ensuremath{\parset}}
\newcommand{\iter}[1]{\ensuremath{\itercon(#1)}}
\newcommand{\paramtil}{\ensuremath{\widetilde{\param}}}
\newcommand{\Exs}{\ensuremath{\mathbb{E}}}
\newcommand{\MESS}{\xi_\totiters}
\newcommand{\Sset}{\ensuremath{S}}
\newcommand{\Tset}{\ensuremath{A}}
\newcommand{\SsetComp}{\ensuremath{\Sset^c}}
\newcommand{\TsetComp}{\ensuremath{\Tset^c}}
\newcommand{\CovMat}{\ensuremath{\Sigma}}
\newcommand{\kdim}{\ensuremath{k}}
\newcommand{\sigmin}{\ensuremath{\sigma_{\operatorname{\scriptsize{min}}}}}
\newcommand{\Ball}{\ensuremath{\mathbb{B}}}
\newcommand{\regpar}{\ensuremath{\lambda}}
\newcommand{\lossrsc}{\ensuremath{\gamma}}
\newcommand{\thetahat}{\ensuremath{\widehat{\theta}}}
\newcommand{\PREFACT}{\ensuremath{c_0}}
\newcommand{\widgraph}[2]{\includegraphics[keepaspectratio,width=#1]{#2}}
\newcommand{\Loss}{\ensuremath{\loss}}
\newcommand{\LossN}{\ensuremath{\loss_{\numobs}}}
\newcommand{\plaincon}{\ensuremath{c}}
\newcommand{\stochgrad}[1]{\ensuremath{g^{#1}}}
\newcommand{\plstochgrad}{\ensuremath{g}}
\newcommand{\pval}{\ensuremath{p}}
\newcommand{\TOTEPOCHPLAIN}{\ensuremath{K}}
\newcommand{\TOTEPOCH}{\ensuremath{\TOTEPOCHPLAIN_{\totiters}}}
\newcommand{\dualpar}{\ensuremath{\mu}}
\newcommand{\diter}[1]{\ensuremath{\dualpar^{#1}}}
\newcommand{\E}{\ensuremath{\mathbb{E}}}
\newcommand{\prox}{\ensuremath{\psi}}
\newcommand{\proxbound}{\ensuremath{A_{\prox}}}
\newcommand{\stepsize}{\ensuremath{\alpha}}
\newcommand{\mystep}[1]{\ensuremath{\alpha^{#1}}}
\newcommand{\sign}{\ensuremath{\operatorname{sign}}}
\newcommand{\phihinge}{\ensuremath{\phi_{\mbox{\tiny{hin}}}}}
\newcommand{\philog}{\ensuremath{\phi_{\mbox{\tiny{log}}}}}
\newcommand{\xavg}{\ensuremath{\bar{\param}}}
\newcommand{\param}{\ensuremath{\theta}}
\newcommand{\paramother}{\ensuremath{\tilde{\param}}}
\newcommand{\epochopt}[1]{\ensuremath{\widehat{\param}_{#1}}}
\newcommand{\opt}{\ensuremath{\param^*}}
\renewcommand{\iter}[1]{\ensuremath{\param^{#1}}}
\newcommand{\radius}{\ensuremath{R}}
\newcommand{\proxcenter}[1]{\ensuremath{y_{#1}}}
\newcommand{\reggrad}[1]{\ensuremath{\nu^{#1}}}
\newcommand{\samplex}{\ensuremath{x}}
\newcommand{\sampley}{\ensuremath{y}}
\newcommand{\samplew}{\ensuremath{w}}
\newcommand{\xbound}{\ensuremath{B}}
\newcommand{\F}{\ensuremath{\mathcal{F}}}
\newcommand{\kstar}{\ensuremath{{k^*}}}
\newcommand{\DelAvg}{\ensuremath{\Delta^*}}
\newcommand{\DelBarAvg}{\ensuremath{\widehat{\Delta}}}
\newcommand{\DelIt}[1]{\ensuremath{\Delta_{#1}}}
\newcommand{\lips}{\ensuremath{G}}
\newcommand{\noise}{\ensuremath{\sigma}}
\newcommand{\gaussnoise}{\ensuremath{\eta}}
\newcommand{\graderr}{\ensuremath{e}}
\newcommand{\mygraderr}[1]{\ensuremath{\graderr^{#1}}}
\newcommand{\devcon}{\ensuremath{\omega}}
\newcommand{\newdevcon}{\ensuremath{\delta}}
\newcommand{\sampleY}{\ensuremath{Y}}
\newcommand{\sampleX}{\ensuremath{X}}
\newcommand{\coneslack}{\ensuremath{\norm{\opt_{\Sset^c}}_1}}
\newcommand{\totalslacksq}{\ensuremath{\varepsilon^2(\opt;\Sset, \slopfac)}}
\newcommand{\totalslacksqsimple}{\ensuremath{\varepsilon^2(\opt; \Sset)}}
\newcommand{\ellqtol}{\ensuremath{\varphi}}
\newcommand{\slopfac}{\ensuremath{\tau}}
\newcommand{\const}{\ensuremath{c}}
\newcommand{\realrsc}{\ensuremath{\overline{\lossrsc}}}
\newenvironment{proof-of-lemma}[1][{}]{\noindent{\bf Proof of Lemma {#1}}
  \hspace*{1em}}{\qed\smallskip\\}
\newenvironment{proof-of-proposition}[1][{}]{\noindent{\bf
    Proof of Proposition {#1}}
  \hspace*{1em}}{\qed\smallskip\\}
\newenvironment{proof-of-corollary}[1][{}]{\noindent{\bf
    Proof of Corollary {#1}}
  \hspace*{1em}}{\qed\smallskip\\}
\newcommand{\myalg}{Regularization Annealed epoch Dual AveRaging}
\newcommand{\myalgshort}{RADAR}
\newcommand{\CovMax}{\ensuremath{\rho(\CovMat)}}
\newcommand{\liptil}{\ensuremath{\widetilde{\lips}}}
\newcommand{\noisetil}{\ensuremath{\widetilde{\noise}}}
\newcommand{\compositegrad}[1]{\ensuremath{{\widehat{\plstochgrad}}^{#1}}}
\newcommand{\finalparam}[1]{\ensuremath{\thetahat_{#1}}}
\newcommand{\totiters}{\ensuremath{T}}
\newcommand{\slackiters}{\ensuremath{\kappa}}
\newcommand{\xnoise}{\ensuremath{\gaussnoise_{\samplex}}}
\newcommand{\SparseSet}{\ensuremath{\mathbb{K}}}
\newcommand{\Kstar}{\ensuremath{\TOTEPOCH^*}}
\newcommand{\Sbar}{\ensuremath{{S^c}}}
\newcommand{\snndel}[1]{}
\newcommand{\usedim}{\ensuremath{d}}
\newcommand{\Sample}{\ensuremath{Z}}
\newcommand{\SamSpace}{\ensuremath{\mathcal{Z}}}
\newcommand{\Lossbar}{\ensuremath{\widebar{\Loss}}}
\newcommand{\FIGSIZE}{0.45\textwidth}
\newcommand{\TERMONE}{\Phi_1} \newcommand{\TERMTWO}{\Phi_2} 
\newcommand{\HACKLRSC}{$2'$}
\newcommand{\BHACKLRSC}{$\mathbf{2'}$}
\begin{document}

\begin{center}
{\LARGE{{\bf{Stochastic optimization and sparse statistical recovery:
        \\ An optimal algorithm for high dimensions}}}}

\vspace*{.2in}

\begin{tabular}{ccc}
  Alekh Agarwal$^\dagger$
&
  Sahand Negahban$^\ddag$
&
  Martin J. Wainwright$^{\star, \dagger}$ \\
  \texttt{alekh@eecs.berkeley.edu} & 
  \texttt{sahandn@mit.edu} &
  \texttt{wainwrig@stat.berkeley.edu}
\end{tabular}

\vspace*{.2in}

\begin{tabular}{cc}
  Department of Statistics$^\star$, and & Department of EECS$^\ddag$\\
  Department of EECS$^\dagger$, & Massachusetts Institute of Technology\\
  University of California, Berkeley, CA &   Cambridge, MA  
\end{tabular}

\vspace*{.2in}

\today

\vspace*{.2in}

\begin{abstract}
  We develop and analyze stochastic optimization algorithms for
  problems in which the expected loss is strongly convex, and the
  optimum is (approximately) sparse. Previous approaches are able to
  exploit only one of these two structures, yielding an
  $\order(\pdim/T)$ convergence rate for strongly convex objectives in
  $\pdim$ dimensions, and an $\order(\sqrt{(\spindex \log \pdim)/T})$
  convergence rate when the optimum is $\spindex$-sparse. Our
  algorithm is based on successively solving a series of
  $\ell_1$-regularized optimization problems using Nesterov's dual
  averaging algorithm. We establish that the error of our solution
  after $T$ iterations is at most $\order((\spindex \log\pdim)/T)$,
  with natural extensions to approximate sparsity. Our results apply
  to locally Lipschitz losses including the logistic, exponential,
  hinge and least-squares losses. By recourse to statistical minimax
  results, we show that our convergence rates are optimal up to
  multiplicative constant factors. The effectiveness of our approach
  is also confirmed in numerical simulations, in which we compare to
  several baselines on a least-squares regression problem.
\end{abstract}

\end{center}

\section{Introduction}

Stochastic optimization algorithms have many desirable features for
large-scale machine learning, and accordingly have been the focus of
renewed and intensive study in the last several years (e.g., see the
papers~\cite{ShalevSiSr07,BottouBo07,DuchiSi09c,Xiao10} and references
therein). The empirical efficiency of these methods is backed with
strong theoretical guarantees, providing sharp bounds on their
convergence rates.  These convergence rates are known to depend on the
structure of the underlying objective function, with faster rates
being possible for objective functions that are smooth and/or
(strongly) convex, or optima that have desirable features such as
sparsity.  More precisely, for an objective function that is
\emph{strongly convex}, stochastic gradient descent enjoys a
convergence rate ranging from $\order(1/T)$, when features vectors are
extremely sparse, to $\order(\pdim/T)$ when feature vectors are
dense~\cite{HazanKaKaAg06,NemirovskiJuLaSh09,HazanKa11a}. Such results
are of significant interest, because the strong convexity condition is
satisfied for many common machine learning problems, including
boosting, least squares regression, support vector machines and
generalized linear models, among other examples.

A complementary type of condition is that of \emph{sparsity}, either
exact or approximate, in the optimal solution.  Sparse models have
proven useful in many application areas (see the overview
papers~\cite{Donoho00, NegRavWaiYu09, BuhlmannGe2011} and references
therein for further background), and many optimization-based
statistical procedures seek to exploit such sparsity via
$\ell_1$-regularization.  A significant feature of optimization
algorithms for sparse problems is their mild logarithmic scaling with
the problem
dimension~\cite{NemirovskiYu83,Shalev-ShwartzTe11,DuchiSi09c,Xiao10}.
More precisely, it is known~\cite{NemirovskiYu83,Shalev-ShwartzTe11}
that when the optimal solution $\opt$ has at most $\spindex$ non-zero
entries, appropriate versions of the stochastic mirror descent
algorithm converge at a rate $\order(\spindex\sqrt{(\log\pdim)/T})$.
Srebro et al.~\cite{SrebroSrTe10} exploit the smoothness of many
common loss functions; in application to sparse linear regression,
their analysis yields improved rates of the form
$\order(\gaussnoise\sqrt{(\spindex\log\pdim)/T})$, where $\gaussnoise$
is the noise variance.  While the $\sqrt{\log \pdim}$ scaling of the
error makes these methods attractive in high dimensions, observe that
the scaling with respect to the number of iterations is relatively
slow---namely, $\order(1/\sqrt{T})$ as opposed to the $\order(1/T)$
rate possible for strongly convex problems.

Many optimization problems encountered in practice exhibit \emph{both}
features: the objective function is strongly convex, and the optimum
is sparse, or more generally, well-approximated by a sparse vector.
This fact leads to the natural question: is it possible to design an
algorithm for stochastic optimization that enjoys the best features of
both types of structure?  More specifically, the algorithm should have
a $\order(1/T)$ convergence rate, but simultaneously enjoy the mild
logarithmic dependence on dimension.  The main contribution of this
paper is to answer this question in the affirmative, and in
particular, to analyze a new algorithm that has convergence rate
$\order( (\spindex \log\pdim )/T )$ for a strongly convex problem with
an $\spindex$-sparse optimum in $\pdim$ dimensions.  Moreover, using
information-theoretic techniques, we prove that this rate is
\emph{unimprovable} up to constant factors, meaning that no algorithm
can converge at a substantially faster rate. \\

The algorithm proposed in this paper builds off recent work on
multi-step methods for strongly convex
problems~\cite{JuditskyNes10,HazanKa11a,LanGh2010}, but involves some
new ingredients that are essential to obtain optimal rates for
statistical problems with sparse optima. In particular, instead of
performing updates on the same objective, we form a sequence of
objective functions by decreasing the amount of regularization as the
optimization algorithm proceeds.  From a statistical viewpoint, this
reduction is quite natural: at the initial stages, the algorithm has
seen only a few samples, and so should be regularized more heavily,
whereas at later stages when the effective sample size is much larger,
the regularization should be appropriately attenuated. Each step of
our algorithm can be computed efficiently with a closed form update
rule in many common examples. In summary, the outcome of our
development is an \emph{optimal one-pass} algorithm for many
structured statistical problems in high dimensions, and with
computational complexity linear in the sample size. Numerical
simulations confirm our theoretical predictions regarding the
convergence rate of the algorithm, and also establish its superiority
compared to regularized dual averaging~\cite{Xiao10} and stochastic
gradient descent algorithms. They also confirm that a direct
application of the multi-step method of Juditsky and
Nesterov~\cite{JuditskyNes10} is inferior to our algorithm, meaning
that our gradual decrease of regularization is quite critical. In
order to keep our presentation focused, we restrict our attention to
multi-step variants of the dual averaging algorithm; however, it is
worth noting that similar results can also be achieved for mirror
descent as well as Nesterov's accelerated gradient
methods~\cite{Nesterov07} by combining our results with recent work in
the optimization literature~\cite{LanGh2010}.  Although this paper
focuses exclusively on problems involving the recovery of a sparse
vector, similar ideas can be extended extend to other low-dimensional
structures such as group-sparse vectors and low-rank matrices, as
discussed in the statistical context~\cite{NegRavWaiYu09}.


\section{Problem set-up and algorithm description}
\label{SecBackground}

Given a subset $\Parset \subseteq \real^\usedim$ and a random variable
$Z$ taking values in a space $\SamSpace$, we consider an optimization
problem of the form
\begin{align}
\label{eqn:objective}
\opt & \in \arg \min_{\param \in \Parset} \E [\Loss(\param;\Sample)],
\end{align}
where $\Loss: \Parset \times \SamSpace \rightarrow \real$ is a given
loss function.  As is standard in stochastic optimization, we do not
have direct access to the \emph{expected loss function}
$\Lossbar(\param) \defn \Exs[\Loss(\param; \Sample)]$, nor to its
subgradients.  Rather, for a given a query point $\param \in \Parset$,
we observe a \emph{stochastic subgradient}, meaning a random vector
$\plstochgrad(\param) \in \real^\usedim$ such that $\E
[\plstochgrad(\param)] \in \partial \Lossbar(\param)$.  We then seek
to approach the optimum of the population objective $\Lossbar$ using a
sequence of these stochastic subgradients.

The goal of this paper is to design algorithms that are suitable for
solving the problem~\eqref{eqn:objective} when the optimum $\opt$ is
sparse.  In the simplest case, the optimum $\opt$ might be
\emph{exactly $\spindex$-sparse}, meaning that it has at most
$\spindex$ non-zero entries.  Our analysis is actually much more
general than this exact sparsity setting, in that we provide oracle
inequalities that apply to arbitrary vectors, and also guarantee fast
rates for vectors that are approximately sparse.  More precisely, for
any given subset $\Sset \subseteq \{1, \ldots, \pdim \}$ of
cardinality $|\Sset| = \spindex$, we provide upper bounds on the
optimization error that scale linearly with $\spindex$, and also
involve the residual term $\|\opt_{S^c}\|_1 \defn \sum_{j \in S^c}
|\opt_j|$.  For a general optimum $\opt$, the best bound is obtained
by choosing the subset $\Sset$ appropriately so as to balance these
two contributions to the error.

\subsection{Algorithm description}

In order to solve a sparse version of the
problem~\eqref{eqn:objective}, our strategy is to consider a sequence
of regularized problems of the form
\begin{equation}
  \label{eqn:regobj}
  \epochopt{\regpar} = \arg \min_{\param \in \Parset'}
  f_\regpar(\param) \quad \mbox{where} \quad f_\regpar(\param) \defn
  \Lossbar(\param) + \regpar \norm{\param}_1.
\end{equation}
Given a total number of iterations $\totiters$, our algorithm involves
a sequence of $\TOTEPOCH$ different epochs, where the regularization
parameter $\lambda > 0$ and the constraint set $\Parset' \subset
\Parset$ change from epoch to epoch.  More precisely, the epochs are
specified by:
\begin{itemize}
\item a sequence of natural numbers $\{T_i\}_{i=1}^{\TOTEPOCH}$, where
  $T_i$ specifies the length of the $i^{th}$ epoch and
  $\sum_{i=1}^{\TOTEPOCH} T_i = \totiters$,
\item a sequence of positive regularization weights
  $\{\regpar_i\}_{i=1}^{\TOTEPOCH}$, and
\item a sequence of positive radii $\{R_i\}_{i=1}^{\TOTEPOCH}$ and
  $\usedim$-dimensional vectors $\{\proxcenter{i}\}_{i=1}^{\TOTEPOCH}$,
  which specify the constraint set 
\begin{align}
\label{EqnDefnConSet}
\Parset(R_i) & \defn \big \{ \param \in \Parset \, \mid \, \|\param -
\proxcenter{i}\|_\pval \leq R_i \big \}
\end{align}
that is used throughout the $i^{th}$ epoch.
\end{itemize}
We initialize the algorithm in the first epoch with $\proxcenter{1} =
0$, and with any radius $R_1$ that is an upper bound on $\|\opt\|_1$.
The norm $\|\cdot\|_\pval$ used in defining the constraint set
$\Omega(R_i)$ is specified by \mbox{$\pval = \frac{2 \log
    \pdim}{2\log\pdim - 1}$,} where the rationale for this particular
choice will be provided momentarily. \\

At a high level, the goal of the $i^{th}$ epoch is to perform the
update $\proxcenter{i} \mapsto \proxcenter{i+1}$, in such a way that
we are guaranteed that $\|\proxcenter{i+1} - \opt\|^2_1 \leq
R^2_{i+1}$ for each $i = 1, 2, \ldots$.  We choose the radii so as to
decay geometrically as $R^2_{i+1} = R^2_i/2$, so that upon termination
of the $\TOTEPOCH^{th}$ epoch, we have $\|\proxcenter{\TOTEPOCH} -
\opt\|^2_1 \leq R^2_i/2^{\TOTEPOCH}$.  In order to perform the update
$\proxcenter{i} \mapsto \proxcenter{i+1}$, we run $T_i$ rounds of the
stochastic dual averaging algorithm~\cite{Nesterov09} on the
regularized objective function
\begin{align}
\label{eqn:epochregobj}
\min_{\param \in \Parset(R_i)} \big \{ \Lossbar(\param) + \regpar_i
\norm{\param}_1 \big \}.
\end{align}
When applied to this objective function in the $i^{th}$ epoch, the
dual averaging algorithm operates on stochastic subgradients of the
cost function $\Lossbar(\param) + \regpar_i \norm{\param}_1$, and
using a sequence of step sizes $\{\mystep{t}\}_{t=0}^{T_i}$, it
generates two sequences of vectors $\{\diter{t}\}_{t=0}^{T_i}$ and
$\{\iter{t}\}_{t=0}^{T_i}$, initialized as $\diter{0} = 0$ and
$\iter{0} = \proxcenter{i}$.  At iteration $t = 0, 1, \ldots, T_i$, we
let $\stochgrad{t}$ be a stochastic subgradient of $\Lossbar$ at
$\iter{t}$, and we let $\reggrad{t}$ be any element of the
subdifferential of the $\ell_1$-norm $\|\cdot\|_1$ at $\iter{t}$.
Consequently, the vector $\E[\stochgrad{t}] + \regpar_i \reggrad{t}$
is an element of the sub-differential of $\Lossbar(\param) + \regpar_i
\|\param\|_1$ at $\iter{t}$.  The stochastic dual average update at
time $t$ performs the mapping $(\diter{t}, \iter{t}) \mapsto
(\diter{t+1}, \iter{t+1})$ via the recursions
\begin{subequations}
\label{EqnOverallDual}
\begin{align}
\diter{t+1} & = \diter{t} + \stochgrad{t} + \regpar_i \reggrad{t},
\quad \mbox{and} \\
\label{eqn:regdualavg}
\iter{t+1} & = \arg \min_{\param \in \Parset(R_i)} \big\{ \mystep{t+1}
\inprod{\diter{t+1}}{\param} + \prox_{\proxcenter{i},
  \radius_i}(\param) \big\},
\end{align}
\end{subequations}
where the prox function $\prox$ is specified
below~\eqref{EqnDefnProx}. The pseudocode describing the overall
procedure is given in Algorithm~\ref{alg:epochdualavg}. 

In the stochastic dual averaging updates~\eqref{EqnOverallDual}, we
use the prox function
\begin{equation}
\label{EqnDefnProx}
  \prox_{\proxcenter{i}, \radius_i} (\param) =
  \frac{1}{2(p-1)\radius_i^2}\,\norm{\param - \proxcenter{i}}_p^2,
  \quad \mbox{where} \quad \pval = \frac{2\log \pdim}{2\log\pdim - 1}.
\end{equation}
This particular choice of the prox-function and the specific value of
$p$ ensure that the function $\prox$ is strongly convex with respect
to the $\ell_1$-norm; it has been used previously for sparse
stochastic optimization (see
e.g.~\cite{NemirovskiYu83,Shalev-ShwartzTe11,DuchiShSiTe10}). In most
of our examples, we consider the parameter space $\Parset = \R^\pdim$
and owing to our choice of the prox-function and the feasible set in
the update~\eqref{eqn:regdualavg}, we can compute $\iter{t+1}$ from
$\diter{t+1}$ in closed form.  See Appendix~\ref{AppClosed} for
further details.
\begin{algorithm}[h]
\caption{\myalg \quad (\myalgshort)}
  \begin{algorithmic}
    \REQUIRE Epoch length schedule $\{T_i\}_{i=1}^{\TOTEPOCH}$, initial
    radius $\radius_1$, step-size multiplier $\stepsize$,
    prox-function $\prox$, initial prox-center $\proxcenter{1}$,
    regularization parameters $\lambda_i$.
    \FOR{Epoch $i=1,2, \ldots, \TOTEPOCH$}
   \STATE Initialize $\diter{0} = 0$ and $\iter{0} = \proxcenter{i}$.
    \FOR{Iteration $t=0,1,\ldots, T_i-1$}
   \STATE Update $(\diter{t}, \iter{t}) \mapsto (\diter{t+1},
   \iter{t+1})$ according to rule~\eqref{EqnOverallDual} with step
   size $\mystep{t} = \stepsize/\sqrt{t}$.
   \ENDFOR
   \STATE Set $\proxcenter{i+1} = \frac{\sum_{t=1}^{T_i}
     \iter{t}}{T_i}$.
   \STATE Update $\radius^2_{i+1} = \radius^2_i/2$.
   \ENDFOR
   \STATE \textbf{Return} $\proxcenter{\TOTEPOCH+1}$
  \end{algorithmic}
  \label{alg:epochdualavg}
\end{algorithm}

It is worth noting that our update rule, in taking the subgradient of the
$\ell_1$-norm, is different from previous approaches inspired by
Nesterov's composite minimization strategy~\cite{Nesterov07}, which
compute a prox-mapping involving the
$\ell_1$-norm~\cite{DuchiSi09c,Xiao10,DuchiShSiTe10}. Our results do
extend in an obvious way to computing such an exact composite
prox-mapping. However, even when $\Parset = \R^\pdim$, computing this
exact prox-mapping with the $\ell_p$ norm constraint in our update
rule~\eqref{eqn:regdualavg} has a complexity $\order(\pdim^2)$, which
is prohibitive in high dimensions.  In contrast, our update enjoys an
$\order(\pdim)$ complexity.

\subsection{Conditions}
\label{sec:conditions}

Having defined our algorithm, we now discuss the conditions on the
objective function \mbox{$\Lossbar(\param) = \Exs[\Loss(\param; Z)]$}
and stochastic gradients that underlie our analysis.  We begin with
two conditions on the objective function.

\begin{assumption}[Local Lipschitz condition]
\label{ass:explips}
For each $\radius > 0$, there is a constant $\lips = \lips(\radius)$
such that
\begin{align}
\label{EqnLocalLipschitz}
|\Lossbar(\param) - \Lossbar(\paramother)| & \leq \lips \, \|\param -
\paramother\|_1
\end{align}
for all $\param, \paramother \in \Omega$ such that $\norm{\param -
  \opt}_1 \leq \radius$ and $\norm{\paramother - \opt}_1 \leq
\radius$. 
\end{assumption}
\noindent For instance, this condition holds whenever $\norm{\nabla
  \Lossbar(\param)}_\infty \leq \lips$ for all $\param$ such that
$\norm{\param}_1 \leq \radius$.  In the sequel, we provide several
examples of loss functions whose gradients are bounded in this
$\ell_\infty$-sense. \\

As mentioned, our goal is to obtain fast rates for objective functions
that are (locally) strongly convex.  Accordingly, our next step is to
provide a formal definition of this condition:
\begin{assumption}[Local strong convexity (LSC)]
  \label{ass:rsc}
The function $\Lossbar: \Parset \rightarrow \real$ satisfies an
$\radius$-local form of strong convexity (LSC) if there is a
non-negative constant $\lossrsc = \lossrsc(\radius)$ such that
\begin{align}
\label{EqnRSC}
 \Lossbar(\paramother) & \geq \Lossbar(\param) + \inprod{\nabla
   \Lossbar(\param)}{\paramother - \param} + \frac{\lossrsc}{2}
 \norm{\param - \paramother}_2^2.
\end{align}
for any $\param, \paramother \in \Parset$ with $\norm{\param}_1 \leq
\radius$ and $\norm{\paramother}_1 \leq \radius$.
\end{assumption}

\noindent Some of our results concerning stochastic optimization for
finite pools of examples are based upon a further weakening of the
local strong convexity condition, which will be referred to as local
restricted strong convexity (see equation~\eqref{EqnLRSC}).  Such
conditions have been used before in both statistical and
optimization-theoretic analyses of sparse high-dimensional
problems~\cite{AgarwalNeWa10,BickelRiTs09,BuhlmannGe2011,NegRavWaiYu09}. \\

Our final condition concerns the mechanism that produces the
stochastic subgradient $\plstochgrad(\param)$ of the cost function
$\Lossbar$ at $\param \in \Parset$.  It is a tail condition on the
error $\graderr(\param) \defn \plstochgrad(\param) -
\Exs[\plstochgrad(\param)]$.

\begin{assumption}[Sub-Gaussian stochastic gradients]
  \label{ass:subgauss}
For all $\param$ such that $\norm{\param - \opt}_1 \leq \radius$,
there is a constant $\noise = \noise(\radius)$ such that
\begin{align}
\label{EqnSubGauss}
\E \big[ \exp(\norm{\graderr(\param)}_\infty^2/\noise^2) \big ] \leq
\exp(1).
\end{align}
\end{assumption}
\noindent Clearly, this condition holds whenever the error vector
$\graderr(\param)$ has bounded components.  More generally, the
bound~\eqref{EqnSubGauss} holds with $\noise^2 = \order(\log \pdim)$
whenever each component of the error vector has sub-Gaussian
tails.\footnote{A zero mean random variable $Z$ is sub-Gaussian with
  parameter $\gamma$ if $\Exs[e^{t Z}] \leq \exp(\gamma^2 t^2/2)$ for
  all $t \in \real$.}

\subsection{Some illustrative examples}
\label{sec:examples}

In this section, we describe some examples that satisfy the previously
stated conditions.  These examples also help to clarify how the
parameters of interest can be computed or bounded in different applied
scenarios.

\begin{exas}[Classification under Lipschitz losses] 
\label{example:lipschitz}
In binary classification, the samples consist of pairs $\sample =
(\samplex, \sampley) \in \R^\pdim \times \{-1,1\}$.  The vector
$\samplex$ represents a set of $\pdim$ features or covariates, used to
predict the class label $\sampley$.  One way in which to predict the
label is via a linear classifier, which makes classification decisions
according to the rule $\samplex \mapsto
\sign(\inprod{\param}{\samplex})$.  The vector of weights $\param \in
\real^\pdim$ is estimated by minimizing an appropriately chosen loss
function, of which many take the form $\Loss(\param; \sample) =
\phi(\sampley \, \inprod{\theta}{\samplex})$ for a function $\phi:
\real \rightarrow \real_+$.  Common choices include the hinge loss
function
\begin{align}
\label{EqnDefnHinge}
\phihinge(\alpha) & \defn \underbrace{\max \{ 1 -\alpha, 0
  \}}_{(1-\alpha)_+},
\end{align}
which underlies the support vector machine, or the logistic loss
function \mbox{$\philog( \alpha) = \log(1 + \exp(-\alpha))$.}

Given a distribution $\P$ over $\SamSpace$, which can be either the
population distribution or the empirical distribution over a finite
sample, a common strategy is to draw $(\samplex_t, \sampley_t) \sim
\P$ at iteration $t$, and use the (stochastic) subgradient
$\stochgrad{t} = \nabla \Loss(\param;(\samplex_t, \sampley_t)) =
\samplex_t \sampley_t \phi'(\sampley_t
\inprod{\param}{\samplex_t})$. We now illustrate how the assumptions
of Section~\ref{sec:conditions} are satisfied in this setting.

\begin{itemize}
  \item \textbf{Locally Lipschitz:} In both of the above examples, the
    underlying function $\phi$ is actually globally Lipschitz with
    parameter one. Thus, we have the bound
 \begin{equation*}
      \lips \leq \E[|\phi'\big( \sampley
        \inprod{\param}{\samplex} \big)| \; \norm{\samplex}_\infty]
      \leq \E\norm{\samplex}_\infty.
    \end{equation*}
    Often, the data satisfies the normalization
    $\norm{\samplex}_\infty \leq \xbound$, in which case we get $\lips
    \leq \xbound$. More generally, we often have sub-Gaussian or
    sub-exponential tail conditions~\cite{BulKoz} on the marginal
    distribution of each coordinate of $\samplex$, in which case the
    same condition holds with $\lips = \order(\sqrt{\log\pdim}))$ or
    $\lips = \order(\log \pdim)$ respectively.

\item \textbf{LSC:} When the expectation~\eqref{eqn:objective} is
  under the population distribution, the above examples satisfy the
  local strong convexity condition. Here we focus on the example of
  the logistic loss, where $\psi(\alpha) \defn \philog''(\alpha) \; =
  \; \exp(\alpha)/ (1+\exp(\alpha))^2$ is its second derivative.

Considering the case of zero-mean covariates, and letting $\sigmin
(\CovMat)$ denote the minimum eigenvalue of the covariance matrix
$\CovMat = \Exs[\samplex \samplex^T]$, a second-order Taylor series
expansion yields
 \begin{equation*}
 \Lossbar(\paramother) - \Lossbar(\param) - \ip{\nabla
   \Lossbar(\param)}{\paramother - \param} = \frac{\psi
   (\ip{\paramtil}{\samplex})}{2}\norm{\CovMat^{1/2}(\param -
   \paramother)}_2^2 \; \stackrel{(i)}{\geq} \frac{\psi (\xbound
   \radius) \sigmin(\CovMat)}{2} \norm{\param - \paramother}_2^2,
 \end{equation*}
 where $\paramtil = a\param + (1-a)\paramother$ for some $a \in
 (0,1)$.  Note that the lower bound (i) follows from H\"older's
 inequality---that is, $| \ip{\paramtil}{\samplex} | \leq
 \norm{\paramtil}_1 \norm{\samplex}_{\infty} \leq
 \radius\xbound$. Putting together the pieces, we conclude that
 $\lossrsc \geq \psi(\xbound\radius)\sigmin(\CovMat)$ in this example.
 When sampling from a finite pool, we require an analogous condition,
 known as restricted strong convexity, to hold for the sample version;
 see Section~\ref{sec:lipschitz-sample} for further details.
\item \textbf{Sub-Gaussian gradients:} For covariates bounded in
  expectation \mbox{$\E\norm{\samplex}_\infty \leq \xbound$,} this
  condition is also relatively straightforward to verify.  A simple
  calculation identical to the verification of the Lipschitz condition
  above yields that
\begin{equation*}
\norm{\graderr(\param)}_\infty = \norm{\nabla \Loss(\param;(\samplex,
  \sampley)) - \nabla \Lossbar(\param)}_\infty \leq \norm{\nabla
  \Loss(\param;(\samplex, \sampley))}_\infty + \norm{\nabla
  \Lossbar(\param)}_\infty \leq 2\xbound.
    \end{equation*}
Thus, by setting $\noise^2 = (2 \xbound)^2$, we find that $\E \exp
\left( \frac{\norm{\graderr(\param)}^2_\infty}{4 \xbound^2} \right)
\leq \exp(1)$.  If instead of a boundedness condition, we assume that
elements of the vector $\noise$ have sub-Gaussian tails, then the
condition will hold with $\noise^2 = \order(\log \pdim)$, using
standard results on sub-Gaussian maxima (e.g.,~\cite{BulKoz}).

\end{itemize}
\end{exas}


\noindent We now turn to the problem of least-squares regression.
\bexs[Least-squares regression]
\label{example:leastsquares}
In the regression set-up, the samples are of the form \mbox{$\sample =
  (\samplex, \sampley) \in \R^{\pdim} \times \R$,} and the
least-squares estimator is obtained by minimizing the quadratic loss
$\Loss(\param; (\samplex, \sampley)) = (\sampley -
\ip{\param}{\samplex})^2/2$. To illustrate the conditions more
clearly, let us suppose to start, relaxing this condition momentarily,
that our samples are generated according to a linear model
\begin{equation}
\label{eqn:linearmodel}
\sampley = \ip{\samplex}{\opt} + \samplew,
\end{equation} 
where $\samplew \sim \normal{\gaussnoise^2}$ is observation noise, and
the covariate vectors $\samplex$ are zero-mean with covariance matrix
$\CovMat = \Exs[\samplex \samplex^T]$.  Under this condition, we have
\begin{align*}
\Lossbar(\param) \; = \; \E \big[ \Loss(\param; (\samplex, \sampley) )
  \big] \; = \; \frac{1}{2} (\param - \opt)^T \CovMat (\param - \opt)
\; = \; \frac{1}{2} \| \CovMat^{1/2} (\param - \opt) \|_2^2.
\end{align*}
Consequently, the minimizer of $\Lossbar$ is given by $\opt$, the true
parameter in the linear model~\eqref{eqn:linearmodel}.  We now proceed
to verify that our conditions from Section~\ref{sec:conditions} are
satisfied for this model.

\begin{itemize}
\item \textbf{Locally Lipschitz:} For the quadratic loss, we no longer
  have a global Lipschitz condition; instead, the local Lipschitz
  parameter $\lips(\radius)$ depends on the radius $\radius$, and the
  covariance matrix $\CovMat$ via the quantity \mbox{$\CovMax \defn
    \max_{j} \CovMat_{jj}$}.  More specifically, we have
\begin{equation*}
\lips(\radius) \stackrel{(a)}{\leq} \norm{\CovMat (\param -
  \opt)}_\infty \; \leq \; \max_{j,k} |\CovMat_{jk}| \; \norm{\param -
  \opt}_1 \stackrel{(b)}{\leq} \CovMax \radius,
\end{equation*}
where step (a) exploits H\"{o}lder's inequality, and inequality (b)
follows since $\CovMat$ is a positive semidefinite matrix.
\item \textbf{LSC:} Again, let us consider the case when $\Lossbar$ is
  defined via an expectation taken under the population
  distribution. We then have
\begin{equation*}
    \Lossbar(\paramother) - \Lossbar(\param) - \ip{\nabla
      \Lossbar(\param)}{\paramother - \param} =
    \frac{\norm{\CovMat^{1/2}(\param - \paramother)}_2^2}{2} \geq
    \frac{\sigmin(\CovMat)}{2}\norm{\param - \paramother}_2^2,
  \end{equation*}
so that $\lossrsc = \sigmin(\CovMat)$ in this example.
\item \textbf{Sub-Gaussian gradients:} Once again we assume that the
  design is bounded in $\ell_\infty$-norm, that is
  $\norm{\samplex}_\infty \leq \xbound$. It can be shown that
  Assumption~\ref{ass:subgauss} is satisfied with
  \begin{equation*}
    \noise^2(\radius) = 24 \xbound^4 \radius^2 + 36 \xbound^2
    \gaussnoise^2.
  \end{equation*}
  See Section~\ref{sec:proof-leastsquares} for details of this
  calculation.
\end{itemize}

In practice, the linear model assumption~\eqref{eqn:linearmodel} is
not likely to hold exactly, but the validity of our three conditions
can still be established under reasonable tail conditions on the
covariate-response pair.  In particular, it can be shown that same
local Lipschitz condition continues to hold with $\lips(\radius) =
2\CovMax\radius$, and the RSC condition also remains unchanged.  In
order to establish the sub-Gaussian condition
(Assumption~\ref{ass:subgauss}), we need to make assumptions about the
tail behavior of our samples $(\samplex_t, \sampley_t)$. It suffices
to assume that the distribution of the vectors $\samplex$ and the
conditional distribution $\sampleY|\sampleX$ is also
sub-Gaussian. Under these conditions, obtaining explicit bounds on
$\noise_i$ in terms of these sub-Gaussian parameters is analogous to
our calculations above, and is omitted here.  \eexs


\section{Main results and their consequences}
\label{sec:results}

We are now in a position to state the main results of this work,
regarding the convergence properties of
Algorithm~\ref{alg:epochdualavg}. Below we present two sets of
results.  Our first result (Theorem~\ref{thm:lipschitz}) applies to
problems for which the Lipschitz and sub-Gaussianity assumptions hold
over the entire parameter set $\Parset$, and the RSC assumption holds
uniformly for all $\param$ satisfying $\norm{\param}_1 \leq
\radius_1$, where $\radius_1$ is the initial radius. Examples include
classification with globally Lipschitz losses, such as the hinge and
logistic losses discussed in Example~\ref{example:lipschitz}. Our
second result (Theorem~\ref{thm:leastsquares}) applies to
least-squares loss, which is not Lipschitz on $\real^\pdim$, and
requires a somewhat more delicate treatment. Both our results follow
from a common machinery, and build off of standard convergence results
for the dual averaging algorithm~\cite{Nesterov09,Xiao10}. For stating
our results, we will assume that Assumptions~\ref{ass:explips}
and~\ref{ass:subgauss} hold with constants $\lips_i \defn
\lips(\radius_i)$ and $\noise_i \defn \noise(\radius_i)$ at epoch
$i$. Given a constant $\devcon > 0$ governing the probability of error
in our results, we also define $\devcon^2_i = \devcon^2 + 24\log i$ at
epoch $i$. Both the theorems below are based on the choice of epoch
lengths
\begin{align}
  \label{eqn:epochlength-simple}
  T_i & \defn \const_1\,\left
  [\frac{\spindex^2}{\lossrsc^2\radius_i^2} \left( (\lips_i^2 +
    \noise_i^2) \log \pdim + \devcon_i^2\noise_i^2\right) + \log \pdim
    \right],
\end{align}
where $\const_1$ is a suitably chosen universal constant.

\subsection{Optimal rates for Lipschitz losses}

We begin with a setting quite standard in the optimization literature,
in which the loss function is globally Lipschitz and the noise in our
stochastic gradients is uniformly sub-Gaussian. More formally, we
assume that there are constants $(\lips, \noise)$ such that,
independently of the choice of radius $\radius$,
Assumptions~\ref{ass:explips} and~\ref{ass:subgauss} hold with
$\lips(\radius) \equiv \lips$ and $\noise(\radius) \equiv \noise$.
There are many common examples in machine learning where these
assumptions are met, some of which are outlined in
Example~\ref{example:lipschitz}. We also use $\lossrsc$ to denote the
strong convexity constant $\lossrsc(\radius_1)$ in
Assumption~\ref{ass:rsc}.

For a total of $\totiters$ iterations in
Algorithm~\ref{alg:epochdualavg}, we state our results for the
parameter $\finalparam{\totiters} = \proxcenter{(\TOTEPOCH+1)}$ where
we recall that $\TOTEPOCH$ is the total number of epochs completed in
$\totiters$ iterations.

\subsubsection{Main theorem and some remarks}

Our main theorem provides an upper bound on the convergence rate on
our algorithm as a function of the iteration number $\totiters$,
dimension $\pdim$, strong convexity constant $\lossrsc$, Lipschitz
constant $\lips$, and some additional terms involving the sparsity of
the optimal solution $\opt$.  More precisely, for each subset $\Sset
\subseteq \{1, 2, \ldots, \pdim\}$ of cardinality $\spindex$, we
define the quantity
\begin{align}
\label{eqn:totalslackdefn-simple}
  \totalslacksqsimple & \defn  \frac{\coneslack^2}{\spindex},
\end{align}
where $\|\opt_{\Sset^c}\|_1 = \sum_{j \in \Sset^c} |\opt_j| = \sum_{j
  \notin \Sset} |\opt_j|$ is the $\ell_1$-norm of terms outside
$\Sset$. The behavior of this quantity can be used to measure the
degree of sparsity in the optimum $\opt$. For instance, we have
$\totalslacksqsimple = 0$ if and only if $\opt$ is supported on
$\Sset$. Given a constant $\devcon > 0$, we also define the shorthand
\begin{equation}
  \slackiters_{\totiters} = \log_2 \left[\frac{\lossrsc^2 \radius_1^2
      \totiters}{\spindex^2((\lips^2 + \noise^2) \log \pdim +
      \devcon^2 \noise^2)}\right] \log \pdim.
  \label{eqn:slackiters}
\end{equation}
With this notation, we have the following result:
\begin{theorem}
 \label{thm:lipschitz}
 Suppose the expected loss $\Lossbar$ satisfies
 Assumptions~\ref{ass:explips}---~\ref{ass:subgauss} with parameters
 \mbox{$\lips(\radius) \equiv \lips$,} $\lossrsc$ and
 \mbox{$\noise(\radius) \equiv \noise$} respectively, and we perform
 updates~\eqref{EqnOverallDual} with epoch
 lengths~\eqref{eqn:epochlength-simple}, and regularization/steplength
 parameters
 \begin{equation}
    \regpar_i^2 = \frac{\radius_i \lossrsc}{\spindex
      \sqrt{T_i}}\sqrt{(\lips^2 + \noise^2)\log\pdim + \devcon_i^2
      \noise^2}, \quad \mbox{and} \quad \mystep{t} = 5 \radius_i
    \sqrt{\frac{\log \pdim}{(\lips^2 + \regpar_i^2 + \noise^2)t}}.
    \label{eqn:paramsettings-simple}
  \end{equation}
Then for any subset $\Sset \subseteq \{1,\dots,\pdim\}$ of cardinality
$\spindex$ and any $\totiters \geq 2\slackiters_{\totiters}$, there is
a universal constant $\PREFACT$ such that
\begin{align}
\label{EqnEpochBound}
\norm{\finalparam{\totiters} - \opt}_2^2 & \leq \PREFACT \, \left[
  \frac{\spindex}{\lossrsc^2\totiters} \left((\lips^2 + \noise^2) \log
  \pdim + \noise^2(\devcon^2 + \log
  \frac{\slackiters_{\totiters}}{\log \pdim}) \right) + \totalslacksqsimple
  \right]
\end{align}
with probability at least \mbox{$1 - 6\exp(-\devcon^2/12)$.}
\end{theorem}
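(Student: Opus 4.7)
My plan is to induct on the epoch counter $i$, maintaining the invariant $\norm{\proxcenter{i} - \opt}_1 \le \radius_i$ with high probability at the start of each epoch. Because the radii shrink geometrically as $\radius_{i+1}^2 = \radius_i^2/2$, this will propagate to $\norm{\proxcenter{\TOTEPOCH+1} - \opt}_1^2 \le \radius_1^2\, 2^{-\TOTEPOCH}$ after $\TOTEPOCH$ epochs, and the claimed $\ell_2$ bound~\eqref{EqnEpochBound} will then be read off from the single-epoch analysis of the final epoch. A preliminary calculation using the geometric growth of $T_i \propto 1/\radius_i^2$ will relate $\totiters = \sum_{i=1}^{\TOTEPOCH} T_i$ to $T_{\TOTEPOCH}$ up to universal constants and identify $\TOTEPOCH \approx \slackiters_\totiters/\log\pdim$, with $\slackiters_\totiters$ as in~\eqref{eqn:slackiters}.

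The heart of the argument is the single-epoch step. Conditioned on the invariant at epoch $i$, the entire run of the epoch stays in $\Parset(\radius_i)$, which also contains $\opt$, so I will apply the standard high-probability convergence analysis for stochastic dual averaging~\cite{Nesterov09,Xiao10} to the regularized objective $f_{\regpar_i}(\thetapar) = \Lossbar(\thetapar) + \regpar_i\norm{\thetapar}_1$ over this set. The prox function~\eqref{EqnDefnProx} is strongly convex of modulus $\Theta(1/\radius_i^2)$ with respect to $\norm{\cdot}_1$ at our choice of $p$, so the step size $\mystep{t} = 5\radius_i\sqrt{\log\pdim/((\lips^2+\regpar_i^2+\noise^2)t)}$ will produce an in-expectation bound on $f_{\regpar_i}(\proxcenter{i+1}) - f_{\regpar_i}(\opt)$ of order $\radius_i\sqrt{(\lips^2+\regpar_i^2+\noise^2)\log\pdim/T_i}$ against the (non-minimizing) comparator $\opt$. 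A Freedman-type inequality applied to the martingale $\sum_t \inprod{\graderr(\iter{t})}{\iter{t} - \opt}$, controlled via Assumption~\ref{ass:subgauss} together with the iterate bound $\norm{\iter{t}-\opt}_1 = O(\radius_i)$ inherited from the constraint $\iter{t}\in\Parset(\radius_i)$, will lift this to a high-probability statement paying an additional $\radius_i\noise\devcon_i/\sqrt{T_i}$ with probability at least $1 - \exp(-\devcon_i^2/12)$.

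Next, I will convert this objective gap into $\ell_1$- and $\ell_2$-error bounds by combining local strong convexity with the decomposability of the $\ell_1$-norm. Setting $\Delta = \proxcenter{i+1} - \opt$ and splitting along any $\Sset$ of cardinality $\spindex$ yields
\[
\tfrac{\lossrsc}{2}\norm{\Delta}_2^2 + \regpar_i\bigl(\norm{\Delta_{\Sset^c}}_1 - \norm{\Delta_{\Sset}}_1\bigr) \;\le\; \bigl(f_{\regpar_i}(\proxcenter{i+1}) - f_{\regpar_i}(\opt)\bigr) + 2\regpar_i\coneslack,
\]
from which $\norm{\Delta_{\Sset}}_1 \le \sqrt{\spindex}\norm{\Delta}_2$ will give coupled bounds $\norm{\Delta}_2^2 \lesssim (\text{epoch gap})/\lossrsc + \spindex\regpar_i^2/\lossrsc^2 + \totalslacksqsimple$ and $\norm{\Delta}_1^2 \lesssim \spindex\norm{\Delta}_2^2 + \spindex\,\totalslacksqsimple$. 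The choice $\regpar_i^2 \asymp (\lossrsc\radius_i/(\spindex\sqrt{T_i}))^2\bigl((\lips^2+\noise^2)\log\pdim + \devcon_i^2\noise^2\bigr)$ in~\eqref{eqn:paramsettings-simple} balances the expectation and deviation contributions of the epoch gap, and the epoch length~\eqref{eqn:epochlength-simple} will be tuned so that the resulting $\ell_1$ bound on $\Delta$ is at most $\radius_i^2/2 = \radius_{i+1}^2$, closing the induction modulo the irreducible approximation slack.

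The hardest part will be obtaining a single high-probability statement that holds \emph{simultaneously} over all epochs without a prohibitive union-bound penalty. Each epoch fails with probability at most $\exp(-\devcon_i^2/12)$; the trick is that the schedule $\devcon_i^2 = \devcon^2 + 24\log i$ makes this at most $e^{-\devcon^2/12}/i^2$, so a union bound over the $\TOTEPOCH$ epochs costs only $\bigl(\sum_{i\ge 1} i^{-2}\bigr) e^{-\devcon^2/12}$, yielding the constant prefactor $6$ in the claim. The additive logarithmic term $\log(\slackiters_\totiters/\log\pdim)$ in~\eqref{EqnEpochBound} is then precisely $\devcon_{\TOTEPOCH}^2 - \devcon^2 = 24\log\TOTEPOCH$ inherited from the final epoch after re-expressing $\TOTEPOCH$ in terms of $\totiters$ via~\eqref{eqn:slackiters}. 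A secondary subtlety to verify is that using the $\ell_1$-subgradient $\reggrad{t}$ in~\eqref{EqnOverallDual}, rather than a composite prox-mapping, does not spoil the rate: the extra deterministic contribution $\regpar_i$ to the subgradient bound simply absorbs into the $\regpar_i^2$ terms already present in the step-size calculation.
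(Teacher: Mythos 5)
Your per-epoch machinery and bookkeeping are essentially the paper's: the dual-averaging bound within an epoch against a feasible comparator, the conversion to $\ell_1$/$\ell_2$ error via local strong convexity together with decomposability of the $\ell_1$-norm, tuning $\regpar_i$ and $T_i$ so the $\ell_1$ error halves each epoch, and the schedule $\devcon_i^2=\devcon^2+24\log i$ to make the union bound over epochs summable. The one stylistic difference is that you compare directly to $\opt$ and apply LSC at $\opt$, whereas the paper routes the $\ell_2$ bound through the epoch minimizer $\epochopt{i}$ (Lemmas~\ref{lemma:opterror} and~\ref{lemma:rsc-simple}) and then uses the triangle inequality; both routes work, and yours is slightly leaner in the exactly sparse case. (A minor omission: the ``standard high-probability dual-averaging bound'' also requires a tail bound on $\sum_t \mystep{t-1}\norm{\graderr(\iter{t})}_\infty^2$, i.e.\ the paper's Lemma~\ref{lemma:errsquarebound}(a), not only the linear martingale term you mention; this is routine under Assumption~\ref{ass:subgauss}.)

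The genuine gap is the regime in which the geometric radius schedule overtakes the approximation error. Your induction invariant $\norm{\proxcenter{i}-\opt}_1\le\radius_i$ can only be re-established while $\radius_i^2\gtrsim\spindex\,\totalslacksqsimple=\coneslack^2$: the per-epoch $\ell_1$ bound on $\proxcenter{i+1}-\opt$ has an irreducible floor of order $\coneslack$, so once $\radius_i$ falls below this scale the halved radius no longer dominates the error and $\opt$ may cease to be feasible for $\Parset(\radius_{i+1})$. But the theorem is asserted for an arbitrary subset $\Sset$ (hence arbitrary $\coneslack>0$) and for every $\totiters\ge 2\slackiters_{\totiters}$, and the output is the prox-center after \emph{all} $\TOTEPOCH$ epochs, so you cannot simply read the bound off the single-epoch analysis of the final epoch—that analysis presupposes feasibility of $\opt$ in that epoch. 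The paper closes this with a separate argument: the critical epoch $\Kstar$ defined in~\eqref{eqn:Kstar}, Lemma~\ref{lemma:badepochell1}, and Lemma~\ref{lemma:badepochell2}, which show (despite the objective $f_i$ changing from epoch to epoch) that the function gaps $f_i(\proxcenter{i+1})-f_i(\proxcenter{i})$ decay geometrically in $i-\Kstar$ and hence that the $\ell_2$ error after any later epoch stays within a constant factor of its value at epoch $\Kstar$, which is already $O(\totalslacksqsimple)$. Without an argument playing this role, your proof establishes the claim only when $\coneslack=0$, or for $\totiters$ small enough that $\radius_{\TOTEPOCH}^2\gtrsim\coneslack^2$, not the stated oracle inequality.
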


As with earlier work on multi-step methods for strongly convex
objectives~\cite{JuditskyNes10,LanGh2010}, the theorem predicts an
overall convergence rate of $O(\frac{1}{\lossrsc^2 \totiters})$; under
our assumptions, this rate is known to be the best
possible~\cite{NemirovskiYu83}.  Apart from this scaling, the other
interesting factors in the bound are the logarithmic scaling in the
dimension $\pdim$, and the trade-off between the two terms: the first
of which scales linearly in a chosen sparsity level $\spindex$, and
the second term $\totalslacksqsimple$ represents a form of
approximation error.  As a concrete instance, if the optimum $\opt$ is
actually supported on a subset $A \subset \{1, 2, \ldots, \pdim \}$ of
cardinality $\spindex= |A|$, then choosing $S = A$ in the
bound~\eqref{EqnEpochBound} yields an overall convergence rate of
$\order(\frac{\spindex \log \pdim}{\lossrsc^2 \, \totiters})$.

It is worthwhile comparing the convergence rate in
Theorem~\ref{thm:lipschitz} to alternative methods. A standard
approach to minimizing the objective~\eqref{eqn:objective} would be to
perform stochastic gradient descent directly on the objective, instead
of considering our sequence of regularized
problems~\eqref{eqn:regobj}.  Under the assumptions of
Theorem~\ref{thm:lipschitz}, the expected loss is strongly convex with
respect to the $\ell_2$-norm, so that stochastic gradient descent
(SGD) would converge at a rate $\order((\liptil^2 +
\noisetil^2)/(\lossrsc^2 \totiters))$, for constants $\liptil$ and
$\noisetil$ chosen to satisfy the bounds
\begin{align}
\norm{\nabla \Lossbar(\param)}_2 \leq \liptil, \quad \mbox{and} \quad
\E \Big[\exp \big(\norm{\graderr(\param)}_2^2/\noisetil^2 \big) \Big]
\leq \exp(1).
\end{align}
Under the assumptions of Example~\ref{example:lipschitz}, we find that
it suffices to choose $\liptil^2 = \xbound \pdim$ and similarly for
$\noisetil$, so that SGD would converge at rate
$\order(\frac{\pdim}{\lossrsc^2 \totiters})$.  This generic guarantee
scales linearly in the problem dimension $\pdim$, and fails to exploit
any sparsity inherent to the problem.  The key difference between this
naive application of stochastic gradient descent and our approach is
that since we minimize a regularized objective~\eqref{eqn:regobj}, our
iterates tend to be (approximately) sparse. As a result, we have a
form of local strong convexity not only in $\ell_2$-norm but also with
respect to $\ell_1$-norm; this is a key observation in exploiting
sparsity and strong convexity at the same time.  Another standard
approach is to perform mirror descent or dual averaging, using the
same prox-function as Algorithm~\ref{alg:epochdualavg} but without
breaking it up into epochs. As mentioned in the introduction, this
vanilla single-step method fails to exploit the strong convexity of
our problem and obtains the inferior convergence rate $\order(\spindex
\, \sqrt{\log \pdim / \totiters})$.

A different proposal, closer in spirit to our approach, is to minimize
a similar regularized objective~\eqref{eqn:regobj}, but with a fixed
value of $\regpar$ instead of the decreasing schedule of $\regpar_i$
used in Theorem~\ref{thm:lipschitz}. In fact, it can be obtained as a
simple consequence of our proofs that setting $\regpar = \noise
\sqrt{\log \pdim / \totiters}$ leads to an overall convergence rate of
$\order \left(\frac{1}{\lossrsc^2} \frac{\spindex \log
  \pdim}{\totiters} \right)$, a result analogous to the guarantee of
Theorem~\ref{thm:lipschitz}.  Indeed, this procedure can be understood
as applying the algorithm of Juditsky and
Nesterov~\cite{JuditskyNes10} to the problem~\eqref{eqn:objective},
but where the bounds are obtained using the additional technical
machinery introduced in this paper. However, with this fixed setting
of $\regpar$, the initial epochs tend to be much longer than required
for reducing the error by a factor of one half.  Indeed, our setting
of $\regpar$ is based on minimizing the upper bound at each epoch, and
leads to substantially improved performance in our numerical
simulations as well.  The benefits of slowly decreasing the
regularization in the context of deterministic optimization were also
noted in the recent work of Xiao and Zhang~\cite{XiaoZh12}.

It is instructive to further simplify the the bounds by making further
assumptions, allowing us to quantify these terms concretely.  We do in
the following sections.

\subsubsection{Some illustrative corollaries}

We start with a corollary for the setting where the optimum $\opt$ is
supported on a subset $\Sset$ of cardinality $\spindex$, where
$\spindex$ is a sparsity index between $1$ and $\pdim$.  For these
corollaries, so as to facilitate comparison with minimax lower bounds,
we use $\newdevcon = \devcon^2/(\log \pdim)$ as the parameter in
specifying the high-probability guarantees. Under these conditions, we
recall our earlier notation
$\slackiters_{\totiters}$~\eqref{eqn:slackiters} further simplifies to
\begin{equation*}
  \slackiters_{\totiters} = \log_2 \left[\frac{\lossrsc^2 \radius_1^2
      \totiters}{\spindex^2\log \pdim (\lips^2 + \noise^2 (1 +
      \newdevcon))}\right] \log \pdim.
\end{equation*}
Within this setup, we have the following corollary of
Theorem~\ref{thm:lipschitz}.

\begin{corollary}
\label{cor:lipschitz-pop-exact}
Under the conditions of Theorem~\ref{thm:lipschitz}, assume further
that $\opt$ takes non-zero values only on a subset $\Sset \subseteq
\{1,\dots,d\}$ of size $\spindex$. Then for all $\totiters \geq
2\slackiters_{\totiters}$, there is a universal constant $\PREFACT$
such that
\begin{align}
\norm{\finalparam{\totiters} - \opt}_2^2 & \leq \PREFACT \, \left[
  \frac{\{\lips^2 + \noise^2(1 + \newdevcon) \}}{\lossrsc^2} \; \frac{
    \spindex \log \pdim}{\totiters} + \frac{\spindex
    \noise^2}{\lossrsc^2 \totiters} \log
  \frac{\slackiters_{\totiters}}{\log\pdim} \right]
\end{align}
with probability at least \mbox{$1 - 6\exp(-\newdevcon \log
  \pdim/12)$.}
\end{corollary}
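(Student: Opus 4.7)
The proof is essentially a direct specialization of Theorem~\ref{thm:lipschitz}, so my plan is to simply instantiate the general bound at the particular choices of $\Sset$ and $\devcon$ demanded by the corollary and simplify.

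First, I would fix the subset $\Sset$ in Theorem~\ref{thm:lipschitz} to be exactly the support of $\opt$, which has cardinality $\spindex$ by hypothesis. With this choice, the approximation-error term vanishes: by definition we have $\totalslacksqsimple = \coneslack^2/\spindex$, and since $\opt$ is supported on $\Sset$, every coordinate $\opt_j$ with $j \in \Sset^c$ is zero, so $\|\opt_{\Sset^c}\|_1 = 0$ and hence $\totalslacksqsimple = 0$. This kills the second summand in~\eqref{EqnEpochBound}, reducing the bound to
\begin{equation*}
\norm{\finalparam{\totiters} - \opt}_2^2 \leq \PREFACT \, \frac{\spindex}{\lossrsc^2 \totiters}\left[(\lips^2 + \noise^2)\log \pdim + \noise^2\Big(\devcon^2 + \log \frac{\slackiters_{\totiters}}{\log \pdim}\Big)\right].
\end{equation*}

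Next, I would carry out the substitution $\devcon^2 = \newdevcon \log \pdim$, which is precisely the reparametrization the corollary uses to state the high-probability guarantee in terms of $\newdevcon$. Plugging this in, the bracketed quantity becomes
\begin{equation*}
(\lips^2 + \noise^2)\log \pdim + \noise^2 \newdevcon \log \pdim + \noise^2 \log \frac{\slackiters_{\totiters}}{\log \pdim} \;=\; \big(\lips^2 + \noise^2(1+\newdevcon)\big)\log \pdim + \noise^2 \log \frac{\slackiters_{\totiters}}{\log \pdim},
\end{equation*}
and distributing the factor $\spindex/(\lossrsc^2 \totiters)$ yields the two summands displayed in the statement of the corollary. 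For the expression defining $\slackiters_{\totiters}$, observe that the denominator appearing in~\eqref{eqn:slackiters} simplifies identically: $(\lips^2 + \noise^2)\log \pdim + \devcon^2 \noise^2 = \log \pdim \cdot (\lips^2 + \noise^2(1+\newdevcon))$, which is exactly the form used in the restatement of $\slackiters_{\totiters}$ just before the corollary.

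Finally, the probability guarantee $1 - 6\exp(-\devcon^2/12)$ from Theorem~\ref{thm:lipschitz} becomes $1 - 6\exp(-\newdevcon \log \pdim / 12)$ after the same substitution $\devcon^2 = \newdevcon \log \pdim$, matching the statement. There is no real obstacle here; the corollary is an unpacking rather than a new argument, so my only task is to be careful that the iteration-count condition $\totiters \geq 2\slackiters_{\totiters}$ carries through verbatim, which it does since $\slackiters_{\totiters}$ has the same value under either parametrization.
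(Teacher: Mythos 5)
Your proposal is correct and matches the paper's own argument: the paper likewise obtains this corollary directly from Theorem~\ref{thm:lipschitz} by taking $\Sset$ to be the support of $\opt$ so that $\totalslacksqsimple = 0$, and reparametrizing via $\newdevcon = \devcon^2/\log\pdim$. Nothing further is needed.
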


The corollary follows directly from Theorem~\ref{thm:lipschitz} by
noting that $\totalslacksqsimple = 0$ under our assumptions. It is
useful to note that the results on recovery for generalized linear
models presented here match (up to $\order(\log\log \totiters$ factors)
those that have been developed in the statistics
literature~\cite{NegRavWaiYu09,Geer08}, which are optimal under the
assumptions on the design vectors.

Theorem~\ref{thm:lipschitz} also applies to the case when the optimum
$\opt$ is not exactly sparse, but only approximately so. Such notions
of approximate sparsity can be formalized by enforcing a certain decay
rate on the magnitudes, when ordered from smallest to largest.  Here
we consider the notion of $\ell_\qpar$-``ball'' sparsity: given a
parameter $\qpar \in (0,1]$ and a radius $\radq$, consider the set of
  all vectors such that
\begin{align}
\Ball_\qpar(\radq) & \defn \Big \{\param \in \R^\pdim \; \mid \;
\sum_{j=1}^{\pdim} |\param_j|^\qpar \leq \radq \Big \}.
\end{align}
For $\qpar = 1$, this set reduces to an $\ell_1$-ball, whereas for
$\qpar \in (0,1)$, it is a non-convex but star-shaped set contained
within the $\ell_1$-ball. With these assumptions, our earlier notation
$\slackiters_{\totiters}$ further simplifies to
\begin{equation*}
  \slackiters_{\totiters} = \log_2 \left[\frac{\radius_1^2}{\radq^2}
    \left(\frac{\lossrsc^2 \totiters}{\log \pdim (\lips^2 + \noise^2
      (1 + \newdevcon) )} \right)^{-\qpar} \right] \log \pdim.
\end{equation*}
The following corollary captures the convergence of our updates for
such problems.
\begin{corollary}
\label{cor:lipschitz-pop-weak}
 Under the conditions of Theorem~\ref{thm:lipschitz}, suppose moreover
 that $\opt \in \Ball_\qpar(\radq)$ for some $\qpar \in (0,1]$. Then
   there is a universal constant $\PREFACT$ such that for all
   $\totiters \geq 2\slackiters_{\totiters}$, we have
\begin{align}
 \norm{\finalparam{\totiters} - \opt}_2^2 & \leq \PREFACT \; \radq \;
 \left[ \left \{ \frac{\{ \lips^2 + \noise^2 (1 + \newdevcon) \} \log
     \pdim }{\lossrsc^2 \totiters} \right \}^{1-\qpar/2} + \left(
   \frac{\noise^2}{\lossrsc^2 \totiters} \right)^{1-\qpar/2}\,
   \frac{\radq}{((1 + \newdevcon)\log \pdim)^{\qpar/2}}\, \log
   \frac{\slackiters_{\totiters}}{\log \pdim} \right].
\end{align}
with probability at least \mbox{$1 - 6 \exp(-\newdevcon \log \pdim/12)$.}
\end{corollary}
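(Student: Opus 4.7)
The plan is to apply Theorem~\ref{thm:lipschitz} with an appropriately chosen subset $\Sset \subseteq \{1,\dots,\pdim\}$ of cardinality $\spindex$, and then to optimize the choice of $\spindex$ so as to balance the two terms in the bound~\eqref{EqnEpochBound}. Since $\opt$ is not exactly sparse, we must handle the residual term $\totalslacksqsimple = \|\opt_{\Sset^c}\|_1^2 / \spindex$ through the $\ell_\qpar$-ball constraint $\opt \in \Ball_\qpar(\radq)$.

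First I would choose $\Sset$ via a hard-thresholding rule. Specifically, for a threshold $\tau > 0$ to be selected, set $\Sset = \Sset(\tau) \defn \{j : |\opt_j| > \tau\}$. The $\ell_\qpar$-ball constraint immediately gives the two standard bounds
\begin{align*}
|\Sset(\tau)| \;\leq\; \radq\,\tau^{-\qpar}, \qquad \text{and} \qquad
\|\opt_{\Sset^c}\|_1 \;=\; \sum_{j : |\opt_j| \leq \tau} |\opt_j|
\;=\; \sum_{j : |\opt_j| \leq \tau} |\opt_j|^{\qpar} |\opt_j|^{1-\qpar}
\;\leq\; \radq\,\tau^{1-\qpar},
\end{align*}
so that $\spindex = |\Sset(\tau)| \leq \radq\,\tau^{-\qpar}$ and $\totalslacksqsimple \leq \radq^2 \tau^{2(1-\qpar)}/\spindex \leq \radq\,\tau^{2-\qpar}$. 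Substituting these into the bound~\eqref{EqnEpochBound} of Theorem~\ref{thm:lipschitz} yields an estimate of the form
\begin{align*}
\|\finalparam{\totiters} - \opt\|_2^2 \;\leq\; \PREFACT \left[\frac{\radq\,\tau^{-\qpar}}{\lossrsc^2 \totiters}\bigl(M + N\bigr) + \radq\,\tau^{2-\qpar}\right],
\end{align*}
where I abbreviate $M \defn (\lips^2+\noise^2)\log\pdim + \newdevcon\,\noise^2\log\pdim$ (the leading sub-Gaussian/Lipschitz term) and $N \defn \noise^2 \log(\slackiters_{\totiters}/\log\pdim)$ (the slack term from the epoch schedule).

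The central step is to choose $\tau$ to balance the two summands. Setting the two contributions equal gives $\tau^2 = (M+N)/(\lossrsc^2 \totiters)$; however, to recover the form stated in the corollary, with the two error sources appearing as additive contributions, I would instead pick $\tau^2$ so as to balance only the dominant $M$-term, namely $\tau^2 \asymp M/(\lossrsc^2\totiters) = \{\lips^2+\noise^2(1+\newdevcon)\}\log\pdim/(\lossrsc^2\totiters)$, and then carry the residual contribution of $N$ separately. With this $\tau$ one obtains
\begin{align*}
\radq\,\tau^{2-\qpar} \;=\; \radq\!\left(\frac{\{\lips^2+\noise^2(1+\newdevcon)\}\log\pdim}{\lossrsc^2\totiters}\right)^{1-\qpar/2},
\end{align*}
which is exactly the first term of the stated bound; meanwhile the $N$-contribution $\spindex N/(\lossrsc^2\totiters)$, after substituting $\spindex = \radq\,\tau^{-\qpar}$ and unpacking $(\gamma^2 T/M)^{\qpar/2}$, produces the second term involving $(\noise^2/(\lossrsc^2\totiters))^{1-\qpar/2}/((1+\newdevcon)\log\pdim)^{\qpar/2}\cdot \log(\slackiters_{\totiters}/\log\pdim)$.

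The main obstacle, and the part requiring care, is the bookkeeping needed to express the $N$-contribution cleanly: one must split $\sigma^2 = (\sigma^2)^{1-\qpar/2}(\sigma^2)^{\qpar/2}$ and absorb the $(\sigma^2)^{\qpar/2}$ into $M^{-\qpar/2} \leq ((1+\newdevcon)\log\pdim)^{-\qpar/2}\noise^{-\qpar}$ (valid since $M \geq \noise^2(1+\newdevcon)\log\pdim$). One must also verify that the chosen $\tau$ actually produces a valid integer cardinality $\spindex$ at most $\pdim$ (the case $\spindex > \pdim$ is trivially handled by taking $\Sset = \{1,\dots,\pdim\}$, in which $\totalslacksqsimple = 0$ and one recovers Corollary~\ref{cor:lipschitz-pop-exact}), and that the lower bound $\totiters \geq 2\slackiters_{\totiters}$ with the updated value of $\slackiters_{\totiters}$ holds with our choice of $\spindex$. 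The probability bound $1-6\exp(-\newdevcon\log\pdim/12)$ carries over from Theorem~\ref{thm:lipschitz} unchanged via $\devcon^2 = \newdevcon\log\pdim$.
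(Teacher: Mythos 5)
Your proposal is correct and follows essentially the same route as the paper: pick $\Sset$ by thresholding (the paper takes the $\radq\ellqtol^{-\qpar}$ largest coordinates of $\opt$), use the standard $\ell_\qpar$-ball bounds $|\Sset|\leq \radq\tau^{-\qpar}$ and $\norm{\opt_{\Sset^c}}_1\leq \radq\tau^{1-\qpar}$, substitute into Theorem~\ref{thm:lipschitz}, and balance with $\tau^2 \asymp \{(\lips^2+\noise^2)\log\pdim+\devcon^2\noise^2\}/(\lossrsc^2\totiters)$, which under $\devcon^2=\newdevcon\log\pdim$ is exactly the paper's choice of $\ellqtol^*$, with your explicit handling of the $\noise^2\log(\slackiters_{\totiters}/\log\pdim)$ term being what the paper dismisses as ``some algebra.'' One small repair: the inequality $\radq^2\tau^{2(1-\qpar)}/\spindex\leq\radq\tau^{2-\qpar}$ requires $\spindex\geq\radq\tau^{-\qpar}$, which the strictly-above-threshold set need not satisfy, so define $\Sset$ (as the paper does) to be the top $\lceil\radq\tau^{-\qpar}\rceil$ coordinates in magnitude, after which both bounds hold and the rest of your argument goes through unchanged.
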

Note that as $\qpar$ ranges over the interval $[0,1]$, reflecting the
degree of sparsity, the convergence rate ranges from
$\order(1/\totiters)$ for $\qpar = 0$ corresponding to exact sparsity,
to $\order(1 / \sqrt{\totiters})$ for $\qpar = 1$.  This is a rather
interesting trade-off, showing in a precise sense how convergence
rates vary quantitatively as a function of the underlying sparsity.
While it might seem like the worsening of rates as $\qpar$ increases
towards one defeats our original goal of obtaining fast rates by
leveraging strong convexity of our problem, this phenomenon is
unavoidable due to existing lower bounds.  More specifically, the
results on recovery for generalized linear models presented here
exactly match those that have been developed in the statistics
literature~\cite{NegRavWaiYu09,Geer08}, which are optimal under our
assumptions on the design vectors. The reason for this phenomenon is
that our goal of obtaining logarithmic dependence with the dimension
$\pdim$ requires strong convexity of the objective with respect to
$\ell_1$-norm, while our LSC assumption only guarantees strong
convexity with respect to the $\ell_2$-norm. For a sparse optimum
$\opt$, the local strong convexity assumption also translates into the
desired $\ell_1$-strong convexity, but the constant deteriorates as
$\qpar$ increases from zero to one.

\subsubsection{Stochastic optimization over finite pools}
\label{sec:lipschitz-sample}

A common setting for the application of stochastic optimization
methods in machine learning is when one has a finite pool of examples,
say $\{\sample_1, \ldots, \sample_\numobs \}$, and the goal is to
compute
\begin{align}
  \label{eqn:sampleobjective}
  \opt & \in \arg \min_{\param \in \Parset} \Big \{ \frac{1}{\numobs}
  \sum_{i=1}^\numobs \Loss(\param; \sample_i) \Big \}.
\end{align}
In this setting, a stochastic gradient $\plstochgrad(\param)$ can be
obtained by drawing a sample $\sample_j$ at random \emph{with
  replacement} from the pool $\{\sample_1, \ldots, \sample_\numobs
\}$, and returning the gradient $\nabla \Loss(\param; \sample_j)$,
which is unbiased as an estimate of the gradient of the sample
average~\eqref{eqn:sampleobjective}.

In many applications, the dimension $\pdim$ is substantially larger
than the sample size $\numobs$, in which case the sample loss defined
above can never be strongly convex. However, it can be
shown~\cite{RasWaiYu10,NegRavWaiYu09} that under suitable a condition,
the sample objective~\eqref{eqn:sampleobjective} does satisfy a
suitably restricted form of the LSC condition formalized in
Assumption~\ref{ass:rsc}, one that is valid even when $\pdim \gg
\numobs$. As a result, the generalized form of
Theorem~\ref{thm:lipschitz} we provide in
Section~\ref{sec:proof-lipschitz} does apply to this setting as well
and we can obtain the following corollary. We will present this result
only for settings where $\opt$ is exactly sparse, the extension to
approximate sparsity is identical to the above discussion for
obtaining Corollary~\ref{cor:lipschitz-pop-weak} from
Corollary~\ref{cor:lipschitz-pop-exact}. Moreover, we also specialize
to the logistic loss
\begin{align}
\label{EqnDefnLogisticLoss}
\Loss(\param;(\samplex,\sampley)) & \defn \log(1 + \exp(-\sampley
\ip{\param}{\samplex} ) ),
\end{align}
which suffices to illustrate the main aspects of the result.  We also
introduce the shorthand \mbox{$\psi(\alpha) = \exp(\alpha)/(1 +
  \exp(\alpha))^2$}, corresponding to the second derivative of the
logistic function. Before stating the corollary, we state a condition
on the design that is needed to ensure the RSC condition. The
condition is stated on the design matrix $\sampleX \in \R^{\numobs
  \times \pdim}$ with $x_i^T$ as its $i^{th}$ row.

\begin{assumption}[Sub-Gaussian design]
  The design matrix $\sampleX$ is sub-Gaussian with parameters
  $(\CovMat, \xnoise^2)$ if
  \begin{itemize}
  \item[(a)] Each row $\samplex_i \in \R^{\pdim}$ is sampled
    independently from a zero-mean distribution with covariance
    $\CovMat$, and
  \item[(b)] For any unit-norm vector $u \in \R^{\pdim}$, the random
    variable $\inprod{u}{\samplex_i}$ is sub-Gaussian with parameter
    $\xnoise$, meaning that $\Exs[\exp(t \inprod{u}{\samplex_i})] \leq
    \exp(t^2 \xnoise^2/2)$ for all $t \in \real$.
  \end{itemize}
\label{ass:subgauss-design}
\end{assumption}
In this setup, our definition of
$\slackiters_{\totiters}$~\eqref{eqn:slackiters} is modified to
\begin{equation*}
  \slackiters_{\totiters} = \log_2 \left[\frac{\sigmin^2(\CovMat)
      \psi^2(2\xbound \radius_1) \radius_1^2 \totiters}{\spindex^2
      \xbound^2(5 + 4\newdevcon)\log \pdim } \right] \log \pdim.
\end{equation*}
We can now state a convergence result for this setup.
\begin{corollary}[Logistic regression for finite pools]
\label{cor:lipschitz-samp}
Consider the finite-pool loss~\eqref{eqn:sampleobjective}, based on
$\numobs$ i.i.d. samples from a sub-Gaussian design with parameters
$(\CovMat, \xnoise^2)$. Suppose further that
Assumptions~\ref{ass:explips} and \ref{ass:subgauss} are satisfied and
the optimum $\opt$ of the problem~\eqref{eqn:sampleobjective} is
$\spindex$-sparse. Then there are universal constants $(\PREFACT,
\const_1, \const_2, \const_3)$ such that for all $\totiters \geq 2
\slackiters_{\totiters}$ and $\numobs \geq \const_3\,\frac{\log\pdim}
           {\sigmin^2(\CovMat)} \max(\sigmin^2(\CovMat), \xnoise^4)$,
           we have
\begin{align}
 \norm{\finalparam{\totiters} - \opt}_2^2 & \leq
 \frac{\PREFACT}{\sigmin^2(\CovMat)} \; \frac{\spindex \log
   \pdim}{\totiters} \Big \{ \frac{1}{\psi^2(2 \xbound \radius_1)}
 \big \{ \xbound^2 (1 + \newdevcon) \big \}\Big \} + \PREFACT
 \frac{\spindex \noise^2}{\sigmin^2(\CovMat) \psi^2(2\xbound
   \radius_1) \totiters} \log\frac{\slackiters_{\totiters}}{\log
   \pdim}. 
\end{align}
with probability at least \mbox{$1 - 2\exp(-\const_1\numobs
  \min(\sigmin^2(\CovMat)/\xnoise^4,1)) - 6 \exp(-\newdevcon \log
  \pdim/12)$.}
\end{corollary}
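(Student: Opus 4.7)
The plan is to invoke the generalization of Theorem~\ref{thm:lipschitz} provided in Section~\ref{sec:proof-lipschitz}, which replaces Assumption~\ref{ass:rsc} by a local restricted strong convexity (LRSC) condition of the form
\begin{equation*}
\Lossbar(\paramother) - \Lossbar(\param) - \ip{\nabla\Lossbar(\param)}{\paramother-\param} \;\geq\; \tfrac{\realrsc}{2}\norm{\param-\paramother}_2^2 \;-\; \slopfac\frac{\log\pdim}{\numobs}\norm{\param-\paramother}_1^2.
\end{equation*}
Because we are in the regime $\pdim \gg \numobs$, the empirical loss in~\eqref{eqn:sampleobjective} cannot be strongly convex in the classical sense, so the crux of the argument is to verify this LRSC form for the sample logistic loss with the right parameters and then plug them into the general theorem.

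First I would verify Assumptions~\ref{ass:explips} and~\ref{ass:subgauss}. Since $|\philog'| \leq 1$ and (under Assumption~\ref{ass:subgauss-design}) $\norm{\samplex_i}_\infty \leq \xbound$ with high probability via a standard maxima bound on sub-Gaussians, the computation in Example~\ref{example:lipschitz} directly yields the global Lipschitz constant $\lips = \xbound$ and the sub-Gaussian gradient parameter $\noise = 2\xbound$ for the stochastic gradient $\samplex_j \sampley_j \philog'(\sampley_j \ip{\param}{\samplex_j})$ obtained by drawing $j$ uniformly from $\{1,\ldots,\numobs\}$.

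The heart of the proof is the LRSC verification. Expanding to second order, the Bregman remainder equals
\begin{equation*}
\tfrac{1}{2}\cdot\frac{1}{\numobs}\sum_{i=1}^\numobs \psi\big(\ip{\widetilde\theta}{\samplex_i}\big)\,\ip{\samplex_i}{\Delta}^2,
\end{equation*}
where $\Delta = \paramother-\param$ and $\widetilde\theta$ lies on the segment between $\param$ and $\paramother$. On the set $\{\norm{\param}_1,\norm{\paramother}_1 \leq \radius_1\}$ we have $|\ip{\widetilde\theta}{\samplex_i}| \leq 2\xbound \radius_1$, so $\psi(\cdot) \geq \psi(2\xbound \radius_1)$ can be pulled outside the sum, reducing matters to a quadratic lower bound on $\frac{1}{\numobs}\sum_i \ip{\samplex_i}{\Delta}^2$. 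Under Assumption~\ref{ass:subgauss-design}, the sub-Gaussian matrix concentration result of Raskutti--Wainwright--Yu (as applied in~\cite{RasWaiYu10,NegRavWaiYu09}) gives
\begin{equation*}
\frac{1}{\numobs}\sum_{i=1}^\numobs \ip{\samplex_i}{\Delta}^2 \;\geq\; \tfrac{1}{2}\sigmin(\CovMat)\norm{\Delta}_2^2 \;-\; c\,\xnoise^4\,\frac{\log\pdim}{\numobs}\norm{\Delta}_1^2
\end{equation*}
uniformly over $\Delta \in \R^\pdim$, with probability at least $1 - 2\exp(-\const_1\numobs\min(\sigmin^2(\CovMat)/\xnoise^4,1))$. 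Combining these two bounds yields LRSC with effective curvature $\realrsc = \tfrac{1}{2}\sigmin(\CovMat)\psi(2\xbound \radius_1)$ and tolerance parameter of order $\xnoise^4 \psi(2\xbound\radius_1)$.

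Finally, I would substitute $\lips = \xbound$, $\noise = 2\xbound$, and $\lossrsc = \tfrac{1}{2}\sigmin(\CovMat)\psi(2\xbound \radius_1)$ into the bound~\eqref{EqnEpochBound} of Theorem~\ref{thm:lipschitz}, use $\totalslacksqsimple = 0$ by exact $\spindex$-sparsity, and simplify to obtain the stated rate. The sample-size hypothesis $\numobs \geq \const_3\log\pdim/\sigmin^2(\CovMat)\,\max(\sigmin^2(\CovMat),\xnoise^4)$ is precisely the condition that forces the LRSC tolerance term $\slopfac(\log\pdim/\numobs)\norm{\Delta}_1^2$ to be dominated by $\realrsc\norm{\Delta}_2^2$ on the $\ell_1$-cone that the iterates of Algorithm~\ref{alg:epochdualavg} remain in, allowing the generalized theorem to be applied as if the empirical loss were strongly convex with modulus $\lossrsc$. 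The main obstacle I anticipate is propagating the $\radius_1$-dependence in $\psi(2\xbound \radius_1)$ cleanly through the epoch analysis while simultaneously controlling the LRSC slack via the $\ell_1$-cone structure enforced by the algorithm; once this coupling is carried through, the remainder is bookkeeping on the guarantees already established for Theorem~\ref{thm:lipschitz}.
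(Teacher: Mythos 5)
Your proposal follows essentially the same route as the paper: lower-bound the Bregman remainder of the sample logistic loss by $\psi(2\xbound\radius_1)$ times the quadratic form $\frac{1}{\numobs}\norm{\sampleX\Delta}_2^2$, invoke a sub-Gaussian restricted-eigenvalue concentration bound (the paper uses a Loh--Wainwright-style lemma, giving $\lossrsc = \tfrac{1}{2}\sigmin(\CovMat)\psi(2\xbound\radius_1)$ and $\slopfac \asymp \frac{\log\pdim}{\numobs}\max\{\sigmin(\CovMat),\xnoise^4/\sigmin(\CovMat)\}$), and then plug these parameters into the generalized Theorem~\ref{thm:lipschitz-general}, with the sample-size condition ensuring the tolerance term does not destroy the effective curvature. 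This matches the paper's proof in all essential respects.
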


Once again we observe optimal dependence on the quantities $\spindex$,
$\log\pdim$, $T$ and $\sigmin^2(\CovMat)$ in our convergence rate. For
the purposes of optimization, a dependence on the strong convexity of
the loss through $1/\psi^2(2 \xbound \radius_1)$ also seems
unavoidable. Indeed, the lower bound of Agarwal et
al.~\cite{AgarwalBaRaWa12} for the complexity of stochastic convex
optimization with strongly convex objectives implies that such a
scaling is necessary for any stochastic first-order method. While the
result in their Theorem 2 is stated in terms of
$\Lossbar(\finalparam{\totiters}) - \Lossbar(\opt)$ and posits a
$1/\lossrsc$ scaling, it can be easily extended to also imply a
$1/\lossrsc^2$ scaling for the error $\norm{\finalparam(\totiters) -
  \opt}_2^2$. Finally, we observe that the bound only holds once the
number of samples $\numobs$ in the
objective~\eqref{eqn:sampleobjective} is large enough. This arises
since the sample objective is not strongly convex by itself, but does
satisfy a restricted version of the LSC condition once the sample size
is large enough. These ideas are further clarified in the proof of the
corollary that we present in Section~\ref{sec:proof-lipschitz-sample}.


\subsection{Optimal rates for least squares regression}

In this section, we specialize to the case of least squares regression
described previously in Example~\ref{example:leastsquares}. For ease
of presentation, we further assume that the linear model
assumption~\eqref{eqn:linearmodel} holds. Since the least-squares cost
function is not Lipschitz over the entire set $\Parset$, we need the
general local setting of our assumptions. For brevity, we introduce
the shorthand notation $\lips_i = \lips(2\radius_i)$ and $\noise_i =
\noise(2\radius_i)$, and note that all of these parameters now depend
on the epoch $i$.

The following theorem characterizes the convergence rate of
Algorithm~\ref{alg:epochdualavg} for least-squares regression, when
applied to independently and identically distributed (i.i.d.)  samples
generated from the linear model~\eqref{eqn:linearmodel} with
$\xbound$-bounded covariates (i.e., $\norm{\samplex}_\infty \leq
\xbound$ with probability one), and additive Gaussian noise with
variance $\gaussnoise^2$. For this example, we (re)define
\begin{align}
  \label{eqn:slackiters-leastsq}
  \slackiters_{\totiters} & \defn \frac{\spindex^2\xbound^4 +
    \lossrsc^2}{\lossrsc^2} (\devcon^2 + \log\pdim) \log_2 \left[
    \frac{\lossrsc^2\radius_1^2
      \totiters}{\spindex^2\gaussnoise^2\xbound^2(\devcon^2 + \log
      \pdim) } \right].
\end{align}
In stating the result, we make use of the shorthand
\begin{align*}
\MESS & \defn \log \frac{\slackiters_{\totiters} \;
  \lossrsc^2}{(\spindex^2 \xbound^4 + \lossrsc^2)(\devcon^2 + \log
  \pdim)}
\end{align*}


\begin{theorem}
\label{thm:leastsquares}
Consider the updates~\eqref{EqnOverallDual} with epoch
lengths~\eqref{eqn:epochlength-simple} and regularization/stepsize
parameters
\begin{align}
\label{eqn:paramsettings-leastsq}
\regpar_i^2 = \frac{\radius_i \lossrsc}{\spindex \sqrt{T_i} }
\sqrt{(\lips_i^2 + \noise_i^2) \log \pdim + \devcon_i^2 \noise_i^2}
\quad \mbox{and} \quad \mystep{t} = 5 \radius_i \sqrt{ \frac{\log
    \pdim}{(\lips_i^2 + \regpar_i^2 +\noise_i^2) t}}.
\end{align}
Then there is a universal constant $\PREFACT$ such that for any
$\totiters \geq 2 \slackiters_{\totiters}$ and for any subset $\Sset$
of $\{1, \dots, \pdim\}$ of cardinality $\spindex$, we have
\begin{align*}
\norm{\finalparam{\totiters} - \opt}_2^2 & \leq
\frac{\PREFACT}{\sigmin^2(\CovMat)} \left[ \frac{\spindex
    \gaussnoise^2 \xbound^2}{\totiters} \Big(\devcon^2 + \log \pdim +
  \MESS \Big) + \totalslacksqsimple \right]
\end{align*}
with probability at least \mbox{$1 - 6 \exp(-\devcon^2/12)$.}
\end{theorem}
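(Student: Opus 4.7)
}
The plan is to recycle the master argument that underlies Theorem~\ref{thm:lipschitz}, namely a per-epoch halving of the $\ell_1$-radius, and carefully propagate through the radius-dependence of the local Lipschitz and sub-Gaussian parameters that is unavoidable for the least-squares loss. Throughout I will use the computations from Example~\ref{example:leastsquares}: the population expected loss $\Lossbar$ satisfies Assumption~\ref{ass:rsc} with $\lossrsc = \sigmin(\CovMat)$, Assumption~\ref{ass:explips} with $\lips(\radius) = 2\CovMax \radius \le 2\xbound^2 \radius$, and Assumption~\ref{ass:subgauss} with $\noise^2(\radius) = 24\xbound^4\radius^2 + 36\xbound^2\gaussnoise^2$. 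The optimum of $\Lossbar$ under model~\eqref{eqn:linearmodel} is $\opt$, so we are in position to apply the single-epoch machinery used for Theorem~\ref{thm:lipschitz}.

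First I would invoke the one-epoch lemma underlying Theorem~\ref{thm:lipschitz}: under the parameter choice~\eqref{eqn:paramsettings-leastsq} specialized to $(\lips_i,\noise_i)$, and epoch length~\eqref{eqn:epochlength-simple}, the contraction $\|\proxcenter{i+1}-\opt\|_1^2 \le R_i^2/2 = R_{i+1}^2$ holds on the event $\mathcal{E}_i$ of probability at least $1-3\exp(-\devcon_i^2/12)$, conditioned on $\|\proxcenter{i}-\opt\|_1 \le R_i$. A union bound over epochs, using $\devcon_i^2 = \devcon^2+24\log i$, gives $\sum_i \exp(-\devcon_i^2/12) \le 2\exp(-\devcon^2/12)\sum_i i^{-2} \le 6\exp(-\devcon^2/12)$, which is the failure probability claimed in the theorem. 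The per-epoch halving bootstraps to $\|\proxcenter{\TOTEPOCH+1}-\opt\|_1^2 \le R_1^2/2^{\TOTEPOCH}$ on the intersection of these events.

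Next I would count epochs. Plugging $\lips_i^2 \lesssim \xbound^4 R_i^2$ and $\noise_i^2 \lesssim \xbound^4 R_i^2 + \xbound^2\gaussnoise^2$ into~\eqref{eqn:epochlength-simple} yields
\begin{align*}
T_i \;\lesssim\; \frac{\spindex^2}{\lossrsc^2}\Big[\xbound^4(\log\pdim + \devcon_i^2)\Big] + \frac{\spindex^2\xbound^2\gaussnoise^2(\log\pdim+\devcon_i^2)}{\lossrsc^2 R_i^2} + \log\pdim.
\end{align*}
The first bracket is $R_i$-independent and contributes $\TOTEPOCH$ copies to $\sum_i T_i$; the second is geometrically increasing in $i$ (since $R_i^2$ halves) and is dominated by the last epoch, contributing $\lesssim 2^{\TOTEPOCH}\spindex^2\xbound^2\gaussnoise^2(\log\pdim+\devcon^2)/(\lossrsc^2 R_1^2)$. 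The $\TOTEPOCH$-linear piece is exactly what is tracked by the prefactor $(\spindex^2\xbound^4+\lossrsc^2)(\devcon^2+\log\pdim)/\lossrsc^2$ in the definition~\eqref{eqn:slackiters-leastsq} of $\slackiters_\totiters$, and is absorbed by the burn-in requirement $\totiters \ge 2\slackiters_\totiters$. What remains of the budget must therefore satisfy $2^{\TOTEPOCH} \lesssim \frac{\lossrsc^2 R_1^2 \totiters}{\spindex^2\xbound^2\gaussnoise^2(\devcon^2+\log\pdim)}$, and inverting gives $R_{\TOTEPOCH+1}^2 = R_1^2/2^{\TOTEPOCH} \lesssim \spindex^2\xbound^2\gaussnoise^2(\devcon^2+\log\pdim+\MESS)/(\lossrsc^2\totiters)$, where the $\MESS$ term absorbs the $\log$ of the logarithmic slack from the union bound, exactly as in Corollary~\ref{cor:lipschitz-pop-exact}.

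Finally I would convert the $\ell_1$ bound on $\widehat\Delta \defn \proxcenter{\TOTEPOCH+1}-\opt$ into the desired $\ell_2$-statement. The per-epoch lemma actually produces a cone-type constraint on $\widehat\Delta$ relative to any subset $\Sset$ of cardinality $\spindex$, so standard sparse-recovery manipulations give $\|\widehat\Delta\|_2^2 \lesssim \|\widehat\Delta\|_1^2/\spindex + \totalslacksqsimple$; combined with the radius bound above and the factor $\lossrsc = \sigmin(\CovMat)$ this yields the claimed form. The main obstacle will be the second paragraph: because $(\lips_i,\noise_i)$ depend on $R_i$, the epoch lengths are non-uniform and must be split into a $\TOTEPOCH$-linear (radius-free) piece driven by $\xbound^4$ and a geometric (radius-dependent) piece driven by $\gaussnoise^2$, and showing that both pieces package cleanly into the $\slackiters_\totiters$ threshold and the $\MESS$ term — while keeping all the probabilistic events aligned across epochs — is the delicate bookkeeping that distinguishes this argument from the Lipschitz case.
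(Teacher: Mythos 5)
Your first three paragraphs follow essentially the paper's route: the paper also computes $\lips_i \lesssim \xbound^2\radius_i$ and $\noise_i^2 \lesssim \xbound^4\radius_i^2 + \xbound^2\gaussnoise^2$, plugs them into the epoch-length formula, splits $T_i$ into a radius-free $\xbound^4$ piece (linear in the number of epochs, absorbed by $\slackiters_{\totiters}$) and a geometric $\gaussnoise^2\xbound^2/\radius_i^2$ piece dominated by the last epoch, and then ``mimics'' the inversion argument from Theorem~\ref{thm:lipschitz-general}. The gap is in your final step. The conversion you claim, $\norm{\widehat{\Delta}}_2^2 \lesssim \norm{\widehat{\Delta}}_1^2/\spindex + \totalslacksqsimple$ ``by standard sparse-recovery manipulations from a cone constraint,'' goes in the wrong direction: a cone condition $\norm{\Delta_{\SsetComp}}_1 \leq \norm{\Delta_\Sset}_1 + \epsilon$ yields $\norm{\Delta}_1^2 \lesssim \spindex \norm{\Delta}_2^2 + \epsilon^2$ (this is exactly how it is used in Lemma~\ref{lemma:rsc-simple}), not the reverse; e.g.\ a vector supported on a single coordinate of $\Sset$ satisfies the cone condition with zero slack yet has $\norm{\Delta}_2^2 = \norm{\Delta}_1^2 \gg \norm{\Delta}_1^2/\spindex$. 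If you only carry the $\ell_1$-radius halving to the end and then use $\norm{\cdot}_2 \leq \norm{\cdot}_1$, you lose a factor of $\spindex$ relative to the stated theorem. The factor $1/\spindex$ in the $\ell_2$ bound comes instead from strong convexity applied to the \emph{function-value} gap: per epoch one combines the bound~\eqref{eqn:fvaluebound} of Proposition~\ref{prop:mainepoch} with the restricted strong convexity statement of Lemma~\ref{lemma:rsc-simple} to get $\norm{\DelBarAvg(T_i)}_2^2 \lesssim \radius_i^2/\spindex + \cdots$, and adds the statistical error $\norm{\epochopt{i}-\opt}_2$ from~\eqref{EqnOptEllTwo} of Lemma~\ref{lemma:opterror} by the triangle inequality. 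Your proof needs this per-epoch $\ell_2$ argument; the $\ell_1$ radius alone cannot produce it.

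A second, smaller omission: the per-epoch halving and the feasibility of $\opt$ can only be maintained while $\radius_i^2$ stays above the approximation-error threshold (the condition~\eqref{EqnCaseBound}, $\radius_i^2 \gtrsim \spindex\,\totalslacksq$). Since Theorem~\ref{thm:leastsquares} is an oracle inequality for an arbitrary subset $\Sset$ (so $\totalslacksqsimple$ may be nonzero), your ``bootstrap over all epochs'' breaks once $\totiters$ is large enough that the radius drops below this threshold; the paper handles these later epochs through the critical epoch $\Kstar$ defined in~\eqref{eqn:Kstar} and Lemma~\ref{lemma:badepochell2}, showing the error cannot degrade beyond the level achieved at epoch $\Kstar$. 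Your proposal should either restrict to the regime where~\eqref{EqnCaseBound} holds and invoke this two-case analysis for the rest, or reproduce an argument of that type; as written it silently assumes $\opt$ remains feasible for all epochs.
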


Once again, if we focus only on the scaling with iteration number
$\totiters$, the above theorem gives an overall convergence rate of
$\order(1/\totiters)$. The dominant term in the above bound scales as
$\order \left( \frac{\gaussnoise^2 \xbound^2}{\sigmin^2(\CovMat)} \;
\frac{\spindex \log \pdim}{\totiters} \right)$. In a stochastic
optimization setting where each stochastic gradient is based on
drawing one fresh sample from the underlying distribution, the number
of iterations $\totiters$ also corresponds to the number of samples
seen. In such a scenario, the above iteration complexity bound is
unimprovable in general due to matching sample complexity lower bounds
for the sparse linear regression problem~\cite{RasWaiYu11}. This
optimality is further clarified in the corollaries that we present
below for the exact and approximately sparse cases. The corollaries
are analogous to our earlier result in
Corollaries~\ref{cor:lipschitz-pop-exact}
and~\ref{cor:lipschitz-pop-weak} for the case of Lipschitz losses.

\begin{corollary}
  Under the conditions of Theorem~\ref{thm:leastsquares}, we have the
  following guarantees.
  \begin{itemize}
  \item[(a)] Exact sparsity: Suppose that $\opt$ is supported on a
    subset of cardinality $\spindex$.  Then there is a universal
    constant $\PREFACT$ such that for all $\totiters \geq
    2\slackiters_{\totiters}$, we have
 \begin{align}
 \norm{\finalparam{\totiters} - \opt}_2^2 & \leq \PREFACT \;
 \frac{\spindex \log \pdim}{\totiters} \; \frac{\gaussnoise^2
   \xbound^2}{ \sigmin^2(\CovMat)}(1 + \newdevcon) + \PREFACT \;
 \frac{\spindex \gaussnoise^2 \xbound^2}{\totiters \sigmin^2(\CovMat)}
 \; \MESS
 \end{align}
 with probability at least \mbox{$1 - 6 \exp(-\newdevcon \log
   \pdim/12)$.}
\item[(b)] Weak sparsity: Suppose $\opt \in \Ball_\qpar(\radq)$ for
  some $\qpar \in (0,1]$.  Then there is a universal constant
    $\PREFACT$ such that for all $\totiters \geq 2
    \slackiters_{\totiters}$, we have
\begin{align}
\norm{\finalparam{\totiters} - \opt}_2^2 & \leq \PREFACT \; \radq
\left\{ \frac{\gaussnoise^2 \xbound^2}{\sigmin^2(\CovMat)} \frac{\log
  \pdim( 1 + \newdevcon) + \MESS}{\totiters} \right \}^{1 - \qpar/2}
\end{align}
 with probability at least \mbox{$1 - 6 \exp(-\newdevcon \log
   \pdim/12)$.}
  \end{itemize}
  \label{cor:leastsquares-pop}
\end{corollary}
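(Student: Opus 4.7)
Both parts are consequences of Theorem~\ref{thm:leastsquares} applied with an appropriately chosen subset $\Sset$. Before choosing $\Sset$, I would first substitute $\devcon^2 = \newdevcon \log \pdim$ throughout, which transforms the probability guarantee $1-6\exp(-\devcon^2/12)$ into the stated $1-6\exp(-\newdevcon \log \pdim / 12)$ and rewrites the leading $(\devcon^2 + \log\pdim)$ factor as $(1 + \newdevcon)\log \pdim$, so the theorem guarantees
\[
\norm{\finalparam{\totiters} - \opt}_2^2 \;\leq\; \frac{\PREFACT}{\sigmin^2(\CovMat)}\left[\frac{\spindex \gaussnoise^2 \xbound^2}{\totiters}\bigl((1+\newdevcon)\log \pdim + \MESS\bigr) + \frac{\coneslack^2}{\spindex}\right].
\]

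For part (a), I would take $\Sset$ equal to the support of $\opt$, which has cardinality $\spindex$ by assumption. Then $\opt_{\Sset^c} = 0$ and the residual term $\coneslack^2/\spindex$ vanishes, leaving exactly the bound in the corollary statement.

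For part (b), I would use a truncation argument. Let
\[
\alpha \;\defn\; \frac{\gaussnoise^2 \xbound^2}{\sigmin^2(\CovMat)} \cdot \frac{(1+\newdevcon)\log \pdim + \MESS}{\totiters}
\]
denote the effective per-coordinate error scale. For a threshold $\tau > 0$ to be chosen, define $\Sset = \{j : |\opt_j| > \tau\}$. The $\ell_\qpar$-ball membership gives the two key estimates
\[
|\Sset|\,\tau^\qpar \;\leq\; \sum_{j \in \Sset}|\opt_j|^\qpar \;\leq\; \radq, \qquad \|\opt_{\Sset^c}\|_1 \;=\; \sum_{j \notin \Sset}|\opt_j|^{1-\qpar}|\opt_j|^\qpar \;\leq\; \tau^{1-\qpar}\,\radq,
\]
so that $\spindex \defn |\Sset| \leq \radq \tau^{-\qpar}$ and $\coneslack^2/\spindex \leq \radq^2 \tau^{2-2\qpar}/\spindex \leq \radq\,\tau^{2-\qpar}$. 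Plugging into the main bound yields a sum of order $\radq \tau^{-\qpar}\,\alpha + \radq\,\tau^{2-\qpar}$; the optimal choice $\tau^2 = \alpha$ balances the two contributions to give the claimed $\radq\,\alpha^{1-\qpar/2}$ rate.

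The only mildly delicate point is consistency of the condition $\totiters \geq 2\slackiters_{\totiters}$ after fixing $\spindex = \lceil \radq\,\alpha^{-\qpar/2}\rceil$: since $\slackiters_{\totiters}$ depends on $\spindex$ only polynomially inside a logarithm, this condition is subsumed by the hypothesis of the corollary (where $\slackiters_{\totiters}$ is understood to be evaluated at the truncation level chosen in the proof). No additional probabilistic arguments are needed — both parts reduce to deterministic algebra on the conclusion of Theorem~\ref{thm:leastsquares}.
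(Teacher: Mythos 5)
Your proposal is correct and follows essentially the same route as the paper: part (a) is exactly the paper's observation that $\totalslacksqsimple=0$ when $\Sset$ is the support, and part (b) reproduces the paper's argument (done explicitly for Corollary~\ref{cor:lipschitz-pop-weak} and invoked by analogy here) of thresholding $\opt$ at level $\tau$, bounding $|\Sset|\leq \radq\tau^{-\qpar}$ and $\coneslack \leq \radq\tau^{1-\qpar}$, and balancing the two terms by optimizing the threshold. The only cosmetic difference is that you threshold by magnitude rather than taking the top $\radq\ellqtol^{-\qpar}$ coordinates, and like the paper you absorb the $1/\sigmin^2(\CovMat)$ prefactor on the residual term and the mild $\spindex$-dependence inside $\MESS$ and $\slackiters_{\totiters}$ into constants, which is exactly the level of bookkeeping the paper itself adopts.
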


Part (a) of the corollary follows from observing that
$\totalslacksqsimple = 0$ in the result of
Theorem~\ref{thm:leastsquares}, under our assumptions here. Part (b)
involves setting $\Sset$ based on the assumption $\opt \in
\Ball_{\qpar}(\radq)$, analogous to the proof of
Corollary~\ref{cor:lipschitz-pop-weak}.

A corollary analogous to Corollary~\ref{cor:lipschitz-samp} can also
be obtained from Theorem~\ref{thm:leastsquares}. This involves
replacing the use of the RSC assumption for the sample-averaged
objective as before, and we leave such a development to the reader.


\subsection{A modified method with constant epoch lengths}
\label{sec:adaptive}

Algorithm~\ref{alg:epochdualavg} as described is efficient and simple
to implement. However, the convergence results are based on epoch
lengths $T_i$ set in an appropriate ``doubling'' manner.  In practice,
this setting might be difficult to achieve, since it is not
immediately clear how to set the epoch lengths $T_i$ unless all of the
problem parameters are provided. Juditsky and
Nesterov~\cite{JuditskyNes10} address this issue by proposing an
algorithm that uses fixed epoch lengths, and is also additionally
robust to the knowledge of problem parameters such as the strong
convexity and Lipschitz constant. In this section, we discuss how a
similar approach with fixed epoch lengths also works in our set-up.
At a coarse level, if we have a total budget of $\totiters$
iterations, then this version of our algorithm allows us to set the
epoch lengths to $\order(\log \totiters)$, and guarantees convergence
rates that are $\order((\log \totiters)/\totiters)$, so at most a log
factor worse than our earlier results. We note that unlike past work,
our objective function changes at each epoch, which leads to certain
new technical difficulties.

For ease of presentation in stating a fixed-epoch length result, we
assume $\slopfac = 0$ and $\realrsc = \lossrsc$ throughout this
section. We further restrict ourselves to the setting of
Theorem~\ref{thm:lipschitz} with $\lips_i \equiv \lips$ and $\noise_i
\equiv \noise$, with the extension to least-squares case analogous to
that for obtain Theorem~\ref{thm:leastsquares}.

\begin{theorem}
 \label{thm:lipschitz-fixedepoch}
Suppose the expected loss satisfies
Assumptions~\ref{ass:explips}-~\ref{ass:subgauss} with parameters
$\lips_i \equiv \lips$, $\lossrsc$ and $\noise_i \equiv \noise$
respectively.  Recalling the setting of
$\slackiters_{\totiters}$~\ref{eqn:slackiters}, suppose we run
Algorithm~\ref{alg:epochdualavg} for a total of $\totiters$ iterations
with epoch length $T_i \equiv
\totiters\log\pdim/\slackiters_{\totiters}$, and with parameter
settings~\eqref{eqn:paramsettings-simple}. Assuming that the above
setting ensures that $T_i = \order(\log\pdim)$, for any subset $\Sset
\subseteq \{1, \dots, \pdim\}$ of cardinality at most $\spindex$,
  \begin{equation*}
    \norm{\finalparam{\totiters} - \opt}_2^2 =
    \order\left(\spindex\,\frac{(\lips^2 + \noise^2)\log\pdim +
      (\devcon^2 + \log (\slackiters_{\totiters}/\log \pdim))
      \noise^2}{\totiters} \,
    \frac{\log\pdim}{\slackiters_{\totiters}} \right)  
  \end{equation*}
with probability at least $1 - 3\exp(\devcon^2/12)$.
\end{theorem}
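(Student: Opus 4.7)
The plan is to mirror the chaining argument from the proof of Theorem~\ref{thm:lipschitz}, replacing the geometric doubling schedule of epoch lengths with the uniform choice $T_0 \defn \totiters \log \pdim / \slackiters_{\totiters}$, used for $K \defn \slackiters_{\totiters}/\log \pdim$ epochs. The crux is that the per-epoch contraction lemma underlying Theorem~\ref{thm:lipschitz} is monotone in $T_i$, so it continues to produce a useful contraction when $T_i$ is replaced by the fixed length $T_0$; only the statistical floor at each epoch changes.

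First I would extract from the proof of Theorem~\ref{thm:lipschitz} its central per-epoch statement: conditioned on $\|\proxcenter{i} - \opt\|_1 \leq \radius_i$, running the updates~\eqref{EqnOverallDual} for $T_i$ iterations with the parameters~\eqref{eqn:paramsettings-simple} yields, with probability at least $1 - 3\exp(-\devcon_i^2/12)$, an inequality of the form
\begin{equation*}
\|\proxcenter{i+1} - \opt\|_1^2 \;\leq\; \tfrac12 \radius_i^2 \;+\; \Phi_i(T_i),
\end{equation*}
where $\Phi_i(T_i)$ is the per-epoch statistical floor scaling as $\spindex\bigl[(\lips^2 + \noise^2)\log\pdim + \devcon_i^2 \noise^2\bigr]/(\lossrsc^2 T_i)$. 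The doubling proof chose $T_i$ as in~\eqref{eqn:epochlength-simple} to force $\Phi_i(T_i) \leq \radius_i^2/2$ and obtain the clean halving $\radius_{i+1}^2 = \radius_i^2/2$; here I would not collapse $\Phi_i$ into $\radius_i^2$ but carry it through the recursion explicitly.

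Next I would iterate this bound $K$ times with $T_i \equiv T_0$. Unrolling the recursion $u_{i+1} \leq u_i/2 + \Phi_i(T_0)$ with $u_1 = \radius_1^2$ gives
\begin{equation*}
\|\proxcenter{K+1} - \opt\|_1^2 \;\leq\; 2^{-K}\,\radius_1^2 \;+\; 2 \max_i \Phi_i(T_0),
\end{equation*}
and the choice $K = \slackiters_{\totiters}/\log\pdim$ together with $T_0 = \totiters \log\pdim/\slackiters_{\totiters}$ controls both terms and yields the stated $\ell_2$ bound after an $\ell_1$-to-$\ell_2$ conversion using the cone constraint on the error vector, exactly as in the proof of Theorem~\ref{thm:lipschitz}. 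The scheme $\devcon_i^2 = \devcon^2 + 24\log i$ is what allows a union bound over the $K$ epoch failure events: $\sum_{i=1}^K 3\exp(-\devcon_i^2/12) \leq 3\exp(-\devcon^2/12)\sum_i i^{-2} \leq \tfrac{\pi^2}{2}\exp(-\devcon^2/12)$, which is absorbed into the constant in front of $\exp(-\devcon^2/12)$.

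The main obstacle, noted in the remark preceding the theorem, is that our regularized objective $\Lossbar(\param) + \regpar_i \|\param\|_1$ changes at every epoch, since both $\regpar_i$ and the constraint set $\Parset(\radius_i)$ are epoch-dependent, unlike the Juditsky-Nesterov setup~\cite{JuditskyNes10} where one fixed strongly convex objective is minimized throughout. Consequently one cannot directly transplant their fixed-epoch argument; instead, I have to verify that the per-epoch dual averaging bound from Theorem~\ref{thm:lipschitz}'s proof does not rely on the specific relationship between $T_i$ and $(\radius_i,\regpar_i)$ dictated by~\eqref{eqn:epochlength-simple}. The step-size $\mystep{t} = \stepsize/\sqrt{t}$ is independent of $T_i$, and the $\ell_1$-strong convexity used to invert the regret bound to an iterate bound is a geometric property of $\Parset(\radius_i)$ that holds for any epoch length; once this robustness is checked, the remainder of the argument is routine bookkeeping.
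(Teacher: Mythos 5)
There is a genuine gap, and it sits exactly where you wave it away as ``routine bookkeeping.'' Your unrolled recursion $u_{i+1} \leq u_i/2 + \Phi_i(T_0)$ cannot be derived from the per-epoch analysis at every epoch. The per-epoch guarantee (Proposition~\ref{prop:mainepoch}, inequality~\eqref{eqn:ell1bound}) is conditional on the hypothesis $\norm{\opt - \proxcenter{i}}_p \leq \radius_i$, and its conclusion bounds $\norm{\xavg(T_i)-\opt}_1^2$ in terms of the algorithm's \emph{deterministic} constraint radius $\radius_i = \radius_1 2^{-(i-1)/2}$, not in terms of the actual error $u_i$. Since Algorithm~\ref{alg:epochdualavg} halves $\radius_i^2$ at every epoch no matter what, once the fixed length $T_0$ becomes too short relative to the shrinking target radius the actual error stops halving, and at the next epoch $\opt$ may fall outside the constraint set $\Parset(\radius_{i+1})$ centered at the new prox center. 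From that point on the per-epoch proposition simply does not apply: the dual averaging iterates are confined to a ball that need not contain $\opt$, the epoch objective and its constrained minimizer change with $\regpar_i$ and $\radius_i$, and nothing in your argument prevents the error from growing over these later epochs. So the inequality you iterate is unavailable precisely for the epochs that distinguish the fixed-length setting from Theorem~\ref{thm:lipschitz}.

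This is the core of the paper's proof, not an afterthought: it defines the critical epoch $\kstar$ in~\eqref{eqn:kstar} up to which the halving can be sustained, uses Lemma~\ref{lemma:goodepoch} to run the standard argument for $k \leq \kstar$, and then invokes Lemma~\ref{lemma:badepochell2} (itself resting on Lemma~\ref{lemma:badepochell1}, which controls the $\ell_1$-error drift via the geometric decay of the radii, and Lemma~\ref{LemBart}, which telescopes function-value decrements across the \emph{changing} objectives $f_i$) to show that for epochs $j > \kstar$ the $\ell_2$-error cannot degrade beyond $\const\,\radius_{\kstar}^2/\spindex$, before converting back through the RSC condition. Your proposal contains no substitute for this post-$\kstar$ stability argument, and without it the claimed bound $2^{-K}\radius_1^2 + 2\max_i \Phi_i(T_0)$ on $\norm{\proxcenter{K+1}-\opt}_1^2$ is unsupported. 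The union-bound bookkeeping with $\devcon_i^2 = \devcon^2 + 24\log i$ and the observation that the step sizes do not depend on $T_i$ are fine, but they do not address the feasibility failure that the fixed epoch length creates.
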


The theorem shows that up to logarithmic factors in $\totiters$, not
setting the epoch lengths optimally is not critical which is an
important practical concern. We note that a similar result can also be
proved for the case of least-squares regression.


\section{Proofs of main results}
\label{sec:mainproofs}

We now turn to the proofs of our main results, which are all based on
a proposition that characterizes the convergence rate of the updates
updates~\eqref{EqnOverallDual} within each epoch.  Proving this
intermediate result requires combining the standard analysis of the
dual averaging algorithm with the statistical properties of the
minimizer of the epoch objective $f_i$ at each epoch. We then build on
this basic convergence result using an iterative argument in order to
establish our main Theorems~\ref{thm:lipschitz}
and~\ref{thm:leastsquares}.

\subsection{Set-up and a general result}
\label{SecSetup}

In our proofs, we use a weaker form of the local strong convexity
(LSC) condition, known as locally restricted strong convexity, or
local RSC.  This weakened condition allows us to adapt our proofs to
finite pool optimization (Corollary~\ref{cor:lipschitz-samp}) in a
seamless way, and also to establish slightly more general versions of
our main results:\\

\noindent {\bf{Assumption \BHACKLRSC $\;$}} (Locally restricted
strong convexity)
The function $\Lossbar: \Parset \rightarrow \real$ satisfies a
$\radius$-local form of restricted strong convexity (RSC) if there are
non-negative constants $(\lossrsc, \slopfac)$ such that
\begin{align}
\label{EqnLRSC}
 \Lossbar(\paramother) & \geq \Lossbar(\param) + \inprod{\nabla
   \Lossbar(\param)}{\paramother - \param} + \frac{\lossrsc}{2}
 \norm{\param - \paramother}_2^2 - \slopfac\norm{\param -
   \paramother}_1^2.
\end{align}
for any $\param, \paramother \in \Parset$ with $\norm{\param}_1 \leq
\radius$ and $\norm{\paramother}_1 \leq \radius$. \\

Note that this condition reduces to the standard form of local strong
convexity in Assumption~\ref{ass:rsc} when $\slopfac = 0$.  The key
weakening here is the presence of the additional \emph{tolerance
  term}---namely, the quantity $- \slopfac \|\param -
\paramother\|_1^2$.  Due to this term, the lower bound~\eqref{EqnLRSC}
provides a nontrivial constraint only for pairs of vectors $\param,
\paramother$ such that $\frac{\lossrsc}{2} \norm{\param -
  \paramother}_2^2 \gg \slopfac \norm{\param - \paramother}_1^2$.
Since the ratio of the $\ell_1$ and $\ell_2$ norms is a measure of
sparsity, the local RSC condition thus enforces local strong convexity
only in directions that are relatively sparse.  As a concrete example,
if the difference $\param - \paramother$ is $\spindex$-sparse, then we
have $\|\param - \paramother\|_1^2 \leq \spindex \,\|\param -
\paramother\|_2^2$, so that the condition~\eqref{EqnLRSC} guarantees
that
\begin{align}
\label{EqnMUFCSUX}
 \Lossbar(\paramother) - \Lossbar(\param) + \inprod{\nabla
   \Lossbar(\param)}{\paramother - \param} \big \} & \geq \frac{1}{2}
 \, \big \{\lossrsc - 2 \spindex \slopfac \big \} \norm{\param -
   \paramother}_2^2,
\end{align}
a non-trivial statement whenever $\lossrsc > 2 \spindex \slopfac$.

With this intuition, for applications of the condition~\eqref{EqnLRSC}
with $\slopfac \neq 0$, we introduce the \emph{effective RSC constant}
\begin{equation}
  \label{eqn:realrsc}
  \realrsc = \lossrsc - 16\spindex \slopfac,
\end{equation}
where we have introduced the factor of $16$ for later theoretical
convenience.  In addition, we use a slightly generalized definition of
the approximation-error term $\totalslacksqsimple$, namely
\begin{align}
\label{eqn:totalslackdefn}
\totalslacksq & \defn \frac{\coneslack^2}{\spindex} \left(1 +
\frac{\spindex \slopfac}{\realrsc} \right),
\end{align}
which reduces to $\totalslacksqsimple$ when $\slopfac = 0$.  So as to
simplify notation, we use $f_i(\param) \defn \Lossbar(\param) +
\regpar_i\norm{\param}_1$ to denote the objective at epoch
$i$. Following standard notation in the optimization literature, we
also require a quantity $\proxbound$ such that \mbox{$\proxbound \geq
  \prox(\param)$} for all $\|\param\|_1 \leq 1$; in our case, the
choice of prox-function~\eqref{EqnDefnProx} ensures that $\proxbound =
e \log \pdim$ suffices. We also recall our notation $\devcon_i^2 =
\devcon^2 + 24\log i$.

We now state and prove a slightly generalized form of
Theorem~\ref{thm:lipschitz} that allows for $\slopfac > 0$.  It is
based on the epoch lengths
\begin{align}
\label{eqn:epochlength}
T_i & \defn \const_1 \, \left[\frac{\spindex^2 \lossrsc^2}{ \realrsc^4
    \radius_i^2} \left( \proxbound (\lips^2 + \noise^2) + \devcon_i^2
  \noise^2 \right) + \frac{ \lossrsc \proxbound}{\realrsc} \right],
\end{align}
where $\const_1$ is a universal constant.  The more general form of
Theorem~\ref{thm:lipschitz} also involves the quantity
\begin{align}
\label{eqn:slackiters-gen}
\slackiters_{\totiters} & \defn \log_2 \left [\frac{\realrsc^4
    \radius_1^2 \totiters }{\lossrsc^2 \spindex^2 ( (\lips^2 +
    \noise^2) \proxbound + \devcon^2 \noise^2)} \right] \frac{\lossrsc
  \log \pdim}{\realrsc}.
\end{align}
It applies to the dual averaging updates~\eqref{EqnOverallDual} with
the epoch lengths~\eqref{eqn:epochlength} and regularization/stepsize
parameters
\begin{align}
\label{eqn:paramsettings-lipschitz}
\regpar_i^2 = \frac{\radius_i \realrsc}{\spindex \sqrt{T_i}}
\sqrt{\proxbound (\lips^2 + \noise^2) + \devcon_i^2 \noise^2}, \quad
\mbox{and} \quad \mystep{t} = 5 \radius_i \sqrt{ \frac{\proxbound}{
    (\lips^2 + \regpar_i^2 + \noise^2)t}}.
\end{align}

\begin{theorem}
\label{thm:lipschitz-general}
Suppose the expected loss $\Lossbar$ satisfies
Assumptions~\ref{ass:explips}, \HACKLRSC$\,$ and~\ref{ass:subgauss}
with parameters \mbox{$\lips(\radius) \equiv \lips$}, $(\lossrsc,
\slopfac)$, and \mbox{$\noise(\radius) \equiv \noise$} respectively,
and that we run Algorithm~\ref{alg:epochdualavg} with parameter
settings~\eqref{eqn:paramsettings-lipschitz} and epoch
lengths~\eqref{eqn:epochlength}. Then there is universal constant
$\PREFACT$ such that for any $\totiters \geq
2\slackiters_{\totiters}$, for any integer $\spindex \in [1, \pdim]$
such that $\lossrsc - 16 \spindex \slopfac > 0$, and for any subset
$\Sset \subset \{1, \ldots, \pdim\}$ of cardinality $\spindex$, we
have
\begin{align}
 \norm{\finalparam{\totiters} - \opt}_2^2 & \leq \PREFACT \, \left[
   \frac{\spindex \lossrsc^2}{\realrsc^4 \totiters} \left( \lips^2
   \log \pdim + \noise^2 \left(\log\pdim + \devcon^2 + \log
   \slackiters_{\totiters} \right) \right) + \frac{\lossrsc}{\realrsc}
   \totalslacksq \right].
\end{align}
with probability at least \mbox{$1 - 6\exp(-\devcon^2/12)$.}
\end{theorem}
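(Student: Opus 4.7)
The plan is to reduce the theorem to an inductive per-epoch convergence claim. I would first prove a lemma asserting that, provided $\norm{\proxcenter{i} - \opt}_1 \leq \radius_i$ holds at the start of epoch $i$, then after $T_i$ iterations of dual averaging on the regularized objective $f_i(\param) = \Lossbar(\param) + \regpar_i\norm{\param}_1$ restricted to $\Parset(\radius_i)$, the averaged output $\proxcenter{i+1}$ satisfies $\norm{\proxcenter{i+1} - \opt}_1^2 \leq \radius_{i+1}^2 = \radius_i^2/2$, up to an additive approximation term controlled by $\totalslacksq$, with probability at least $1 - 4e^{-\devcon_i^2/12}$. Iterating this invariant, and noting that $\radius_{\TOTEPOCH+1}^2 \leq \radius_1^2/2^{\TOTEPOCH}$ with $\TOTEPOCH = \order(\log \totiters)$ thanks to the schedule~\eqref{eqn:epochlength}, would yield the stated $\order(1/\totiters)$ rate on $\norm{\proxcenter{\TOTEPOCH+1} - \opt}_2^2$ after one final application of the per-epoch bound in its $\ell_2$ form.

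Each per-epoch bound splits cleanly into an optimization piece and a statistical piece. For the optimization piece, I would apply the standard dual averaging convergence guarantee of Nesterov and Xiao to $f_i$ on $\Parset(\radius_i)$ with the prox-function $\prox_{\proxcenter{i},\radius_i}$, which is strongly convex with respect to $\norm{\cdot}_1$ thanks to the choice \mbox{$\pval = 2\log\pdim/(2\log\pdim - 1)$.} Together with a martingale concentration argument on the stochastic gradient errors (via Assumption~\ref{ass:subgauss} and an Azuma-type inequality), the step size $\mystep{t}$ in~\eqref{eqn:paramsettings-lipschitz} is chosen to minimize the resulting bound and yields \mbox{$f_i(\proxcenter{i+1}) - f_i(\iteropt_i) \lesssim \radius_i\sqrt{\proxbound (\lips^2 + \regpar_i^2 + \noise^2)/T_i} + \noise\radius_i\sqrt{\proxbound\devcon_i^2/T_i}$,} where $\iteropt_i$ denotes the constrained minimizer of $f_i$ on $\Parset(\radius_i)$.

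For the statistical piece, I would invoke the standard high-dimensional analysis of $\ell_1$-regularized $M$-estimators: choosing $\regpar_i$ to dominate the effective gradient noise forces the error $\iteropt_i - \opt$ into a cone of the form $\norm{\Delta_{\Sset^c}}_1 \leq 3\norm{\Delta_\Sset}_1 + 4\norm{\opt_{\Sset^c}}_1$, whereupon the local RSC condition (Assumption~\HACKLRSC) combined with the effective constant $\realrsc$ from~\eqref{eqn:realrsc} yields $\norm{\iteropt_i - \opt}_2^2 \lesssim \regpar_i^2 \spindex/\realrsc^2 + (\lossrsc/\realrsc)\totalslacksq$. Combining the two via the triangle inequality $\norm{\proxcenter{i+1} - \opt}_2^2 \leq 2\norm{\proxcenter{i+1} - \iteropt_i}_2^2 + 2\norm{\iteropt_i - \opt}_2^2$ (with the first term controlled by applying RSC to the optimization gap), and then translating back to $\ell_1$ via the cone-sparsity inequality $\norm{\Delta}_1 \leq 4\sqrt{\spindex}\norm{\Delta}_2 + 4\norm{\opt_{\Sset^c}}_1$, produces an $\ell_1$ bound that, under the calibrated choices of $\regpar_i$ and $T_i$ in~\eqref{eqn:epochlength} and~\eqref{eqn:paramsettings-lipschitz}, is at most $\radius_{i+1}^2$.

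The main obstacle will be twofold. First, the invariant must be maintained in the $\ell_1$ sense even though local RSC only yields $\ell_2$ control; this forces the argument to stay within the cone at every epoch and is exactly why $\regpar_i$ cannot shrink too fast, motivating the coupled schedule above (a naive scheme with $\regpar_i$ fixed or set too small would lose the cone and break the induction). Second, the high-probability claim must hold jointly over all $\TOTEPOCH = \order(\log \totiters)$ epochs, which is precisely why the confidence parameter must be inflated to $\devcon_i^2 = \devcon^2 + 24\log i$: this inflation makes $\sum_{i\geq 1} e^{-\devcon_i^2/12}$ summable and keeps the total failure probability within a constant multiple of $e^{-\devcon^2/12}$ via the union bound.
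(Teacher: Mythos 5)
Your proposal reproduces, in essentially the same form, the paper's argument for what it calls Case~1: an induction showing $\opt$ remains feasible ($\norm{\proxcenter{i} - \opt}_1 \leq \radius_i$), a per-epoch split into a dual-averaging/martingale bound and a statistical bound on the epoch minimizer via the cone condition and local RSC with the effective constant $\realrsc$, a triangle inequality to combine them, and the inflated confidence levels $\devcon_i^2 = \devcon^2 + 24 \log i$ to make the union bound summable. That part is sound and matches Proposition~\ref{prop:mainepoch}, Lemmas~\ref{lemma:opterror} and~\ref{lemma:rsc-simple}, and the first half of the paper's proof.

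The genuine gap is that your induction cannot be ``iterated'' for all epochs when $\opt$ is only approximately sparse. The per-epoch guarantee requires both that $\opt$ lies in $\Parset(\radius_i)$ and that $\radius_i^2 \gtrsim (\lossrsc/\realrsc)\, \spindex\, \totalslacksq$ (this is the hypothesis $\radius_i^2 \geq 4 \lossrsc \totalslacksq/\realrsc$ in Proposition~\ref{prop:mainepoch}, and the condition~\eqref{EqnCaseBound} in the paper). Since the radii are halved every epoch while $\totalslacksq$ is fixed, for $\totiters$ large enough the algorithm necessarily runs epochs beyond the critical index $\Kstar$ defined in~\eqref{eqn:Kstar}; past that point the additive $\totalslacksq$ term is not merely an extra term in the bound --- it destroys the inductive hypothesis itself, because $\norm{\proxcenter{i+1} - \opt}_1^2 \leq \radius_{i+1}^2$ can no longer be guaranteed and $\opt$ may be infeasible for the shrinking constraint sets. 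The theorem, however, is stated for every $\totiters \geq 2\slackiters_{\totiters}$ and arbitrary $\Sset$, so these late epochs must be handled. The paper does this with a separate argument (Lemma~\ref{lemma:badepochell2}, proved via Lemmas~\ref{lemma:badepochell1} and~\ref{LemBart}): it bounds the per-epoch \emph{increase} in function value and the $\ell_1$-drift of the prox centers after epoch $\Kstar$, sums the resulting geometric series, and converts back to $\ell_2$ error through RSC at $\opt$, showing the error never deteriorates beyond a constant multiple of its value at epoch $\Kstar$. Your proposal contains no substitute for this step, so as written it only establishes the theorem in the exactly sparse case $\totalslacksq = 0$ (or for $\totiters$ small enough that all epochs satisfy~\eqref{EqnCaseBound}). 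A secondary, more minor point: the cone condition for the epoch minimizer here comes from $f_i(\epochopt{i}) \leq f_i(\opt)$ together with the fact that $\opt$ minimizes the population loss (Lemma~\ref{lemma:cone} with $\epsilon = 0$), not from choosing $\regpar_i$ to dominate gradient noise as in the usual Lagrangian $M$-estimator analysis; the place where the size of $\regpar_i$ genuinely matters is in establishing the cone-type inequality for the averaged iterate (Lemma~\ref{lemma:rsc-simple}), which your sketch glosses over.
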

\noindent In order to prove this theorem, we require some intermediate
results on the convergence rates within each epoch.  We state these
results here, deferring their proofs to the appendices, before
returning to prove Theorem~\ref{thm:lipschitz} and its corollaries.


\subsection{Convergence within a single epoch}

This intermediate result applies to iterates generated using the dual
averaging updates~\eqref{EqnOverallDual} for $T_i$ rounds with
parameters~\eqref{eqn:paramsettings-lipschitz}, where $\lips =
\lips_i$, and the error bound is stated in terms of the averaged
parameter at the $i_{th}$ epoch, namely the vector $\xavg(T_i) \defn
\frac{1}{T_i} \sum_{t=1}^{T_i} \iter{t}$.
\begin{proposition}
\label{prop:mainepoch}
Suppose $\Lossbar$ satisfies Assumptions~\ref{ass:explips},
\HACKLRSC$\,$ and~\ref{ass:subgauss} with parameters $\lips_i$,
$(\lossrsc, \slopfac)$ and $\noise_i$ respectively, and assume that
$\norm{\opt - \proxcenter{i}}_p \leq \radius_i$. Suppose we apply the
updates~\eqref{EqnOverallDual} with stepsizes based on
equation~\eqref{eqn:paramsettings-lipschitz}. Then there exists a
universal constant $\plaincon > 0$ such that for any radius
\mbox{$\radius_i^2 \geq 4 \lossrsc \totalslacksq/\realrsc$,} any
integer $\spindex \in [1, \pdim]$ such that $\lossrsc - 16 \spindex
\slopfac > 0$, and any subset $\Sset \subseteq \{1, \dots, \pdim \}$
of cardinality at most $\spindex$, we have
\begin{subequations}
\begin{align}
\label{eqn:fvaluebound}
f_i(\xavg(T_i)) - f_i(\epochopt{i}) & \leq 30
\radius_i\sqrt{\frac{2\proxbound(\lips_i^2 + \noise_i^2)}{T_i}} + 
\, \frac{\devcon_i\noise_i\radius_i}{\sqrt{T_i}} + 30
  \radius_i\regpar_i \sqrt{\frac{\proxbound}{T_i}} \quad \mbox{and}
  \\%
\label{eqn:ell1bound}
\norm{\xavg(T_i) - \opt}_1^2 & \leq \plaincon \,
\frac{\lossrsc}{\realrsc} \left[ \frac{\spindex \radius_i}{\realrsc
    \sqrt{T_i}} \left(\sqrt{\proxbound(\lips_i^2 + \noise_i^2) } +
  \devcon_i \noise_i \right) + \frac{\radius_i^2 \proxbound}{T_i} +
  \totalslacksq \right],
\end{align}
\end{subequations}
where both bounds are valid with probability at least \mbox{$1 - 3
  \exp(-\devcon_i^2/12)$} for any $\devcon_i \leq 9 \sqrt{\log T_i}$.
\end{proposition}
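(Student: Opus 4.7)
The plan is to prove the function-value bound \eqref{eqn:fvaluebound} by a standard stochastic dual-averaging analysis, and then convert it to the $\ell_1$-parameter error bound \eqref{eqn:ell1bound} via a cone inclusion combined with the local RSC condition \eqref{EqnLRSC}.

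For \eqref{eqn:fvaluebound}, the starting point is to view the updates \eqref{EqnOverallDual} as stochastic dual averaging applied to the regularized objective $f_i(\plbeta) = \Lossbar(\plbeta) + \regpar_i\|\plbeta\|_1$, with $\stochgrad{t} + \regpar_i\reggrad{t}$ serving as an unbiased estimator of an element of $\partial f_i(\iter{t})$. The prox-function \eqref{EqnDefnProx} is strongly convex with respect to $\|\cdot\|_1$ with modulus of order $1/(\radius_i^2\log\pdim)$, and bounded by $\proxbound = e\log\pdim$ on $\Parset(\radius_i)$. The standard dual-averaging analysis then yields a decomposition
\[
f_i(\xavg(T_i)) - f_i(\epochopt{i}) \;\leq\; O\!\Bigl(\radius_i(\lips_i + \regpar_i + \noise_i)\sqrt{\proxbound/T_i}\Bigr) + \frac{1}{T_i}\sum_{t=1}^{T_i} \inprod{\stochgrad{t} - \nabla\Lossbar(\iter{t})}{\iter{t} - \epochopt{i}}.
\]
The martingale remainder has increments bounded via H\"older's inequality by $\|\graderr(\iter{t})\|_\infty \cdot \|\iter{t} - \epochopt{i}\|_1 \leq 2\radius_i\|\graderr(\iter{t})\|_\infty$; combining Assumption~\ref{ass:subgauss} with a sub-Gaussian Azuma-type concentration inequality yields a tail bound of order $\devcon_i\noise_i\radius_i/\sqrt{T_i}$ with probability at least $1 - \exp(-\devcon_i^2/c)$. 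Plugging in the step-size choice from \eqref{eqn:paramsettings-lipschitz} produces \eqref{eqn:fvaluebound}.

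For the $\ell_1$-parameter bound, set $\Delta = \xavg(T_i) - \opt$. Our setting of $\regpar_i$ in \eqref{eqn:paramsettings-lipschitz} together with Assumption~\ref{ass:subgauss} and a coordinate-wise sub-Gaussian maximal inequality ensures that $\regpar_i \geq 2\|\nabla\Lossbar(\opt)\|_\infty$ on the high-probability event already considered. Combining this with decomposability of the $\ell_1$-norm with respect to $\Sset$ in the standard manner from the analysis of regularized $M$-estimators, any $\plbeta$ satisfying $f_i(\plbeta) - f_i(\opt) \leq \varepsilon$ obeys the approximate cone inclusion $\|\Delta_{\Sset^c}\|_1 \leq 3\|\Delta_\Sset\|_1 + 4\|\opt_{\Sset^c}\|_1 + 2\varepsilon/\regpar_i$, which implies $\|\Delta\|_1^2 \lesssim \spindex\|\Delta\|_2^2 + \|\opt_{\Sset^c}\|_1^2 + \varepsilon^2/\regpar_i^2$. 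Applying the local RSC condition \eqref{EqnLRSC} between $\opt$ and $\xavg(T_i)$, and absorbing $\inprod{\nabla\Lossbar(\opt)}{\Delta}$ via $\regpar_i \geq 2\|\nabla\Lossbar(\opt)\|_\infty$, gives
\[
\tfrac{\lossrsc}{2}\|\Delta\|_2^2 \;\leq\; f_i(\xavg(T_i)) - f_i(\opt) + \slopfac\|\Delta\|_1^2 + O\!\bigl(\regpar_i(\sqrt{\spindex}\|\Delta\|_2 + \|\opt_{\Sset^c}\|_1)\bigr).
\]
Substituting the cone bound, the $\slopfac\|\Delta\|_1^2$ term contributes $16\spindex\slopfac\|\Delta\|_2^2$, leaving the effective constant $\realrsc = \lossrsc - 16\spindex\slopfac > 0$ on the left. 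Solving for $\|\Delta\|_2^2$ and then converting back via $\|\Delta\|_1^2 \lesssim \spindex\|\Delta\|_2^2 + \|\opt_{\Sset^c}\|_1^2$ produces \eqref{eqn:ell1bound}, with the $\lossrsc/\realrsc$ prefactor and the $\totalslack$ terms emerging naturally from the tolerance absorption.

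The main technical obstacle is the careful bookkeeping in this last step. The $-\slopfac\|\Delta\|_1^2$ tolerance in the local RSC condition interacts with the cone inclusion through $\|\Delta\|_1^2 \lesssim \spindex\|\Delta\|_2^2 + (\text{approximation error})$, forcing an effective strong-convexity constant $\realrsc$ rather than $\lossrsc$; the hypothesis $\lossrsc > 16\spindex\slopfac$ is precisely what keeps $\realrsc$ positive. A secondary subtlety is that the bound is stated relative to $\opt$ rather than the epoch minimizer $\epochopt{i}$, so one must convert $f_i(\xavg(T_i)) - f_i(\epochopt{i})$ to $f_i(\xavg(T_i)) - f_i(\opt)$ at the cost of one additional application of decomposability, and exploit the hypothesis $\radius_i^2 \geq 4\lossrsc\totalslacksq/\realrsc$ to ensure that the approximation-error tolerance $\totalslack$ is dominated by the optimization error within the epoch. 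Tracking both tolerances uniformly through the chain of inequalities yields the compact form of \eqref{eqn:ell1bound}.
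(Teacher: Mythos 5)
Your derivation of the function-value bound \eqref{eqn:fvaluebound} is the same as the paper's: the standard stochastic dual-averaging regret bound with the prox-function \eqref{EqnDefnProx}, plus Azuma-type sub-Gaussian concentration for the two noise terms, followed by the step-size substitution from \eqref{eqn:paramsettings-lipschitz}. For the $\ell_1$-bound \eqref{eqn:ell1bound}, however, you take a genuinely different route. The paper never applies RSC between $\xavg(T_i)$ and $\opt$: it (i) applies the regret bound with comparator $\opt$ and uses that $\opt$ minimizes $\Lossbar$ to get $\norm{\xavg(T_i)}_1 \leq \norm{\opt}_1 + \mbox{slack}$, hence a cone inequality for $\xavg(T_i)-\opt$ via Lemma~\ref{lemma:cone}; (ii) transfers that cone to $\xavg(T_i)-\epochopt{i}$ using Lemma~\ref{lemma:opterror}; (iii) applies RSC at the epoch minimizer $\epochopt{i}$, absorbs the $\slopfac$-tolerance to produce $\realrsc$, and optimizes $\regpar_i$; and (iv) returns to $\xavg(T_i)-\opt$ by triangle inequality and Lemma~\ref{lemma:opterror} again. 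You instead run the oracle-inequality argument for an approximate minimizer directly at $\opt$: cone inclusion from $f_i(\xavg(T_i))-f_i(\opt) \leq \varepsilon$, RSC between $\xavg(T_i)$ and $\opt$, absorb, solve, convert back. This is legitimate and shorter for the proposition itself (the comparison point is free, since $f_i(\epochopt{i}) \leq f_i(\opt)$ by feasibility of $\opt$), and after substituting $\regpar_i$ your terms $\spindex^2\regpar_i^2/\realrsc^2$, $\varepsilon^2/\regpar_i^2$ and $\radius_i^2\proxbound/T_i$ do collapse to the claimed form (in fact without the extra $\lossrsc/\realrsc$ prefactor, which is harmless). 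What your route does not give is separate control of $\xavg(T_i)-\epochopt{i}$, which the paper reuses elsewhere (Lemma~\ref{lemma:rsc-simple}, the feasibility induction across epochs, and Lemma~\ref{lemma:badepochell2}), so the paper's detour buys intermediate quantities needed beyond this proposition.

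Two repairs are needed in your argument. First, the justification ``$\regpar_i \geq 2\norm{\nabla\Lossbar(\opt)}_\infty$ by a coordinate-wise sub-Gaussian maximal inequality'' is wrong in this setting: $\Lossbar$ is the population objective, so $\nabla\Lossbar(\opt)$ is deterministic, Assumption~\ref{ass:subgauss} only controls the noise $\plstochgrad(\param)-\Exs[\plstochgrad(\param)]$, and nothing in the choice \eqref{eqn:paramsettings-lipschitz} forces $\regpar_i$ (which shrinks with $\radius_i$ and $T_i$) to dominate a fixed deterministic quantity. The correct and simpler tool is first-order optimality of $\opt$ for $\Lossbar$, i.e.\ $\inprod{\nabla\Lossbar(\opt)}{\param-\opt} \geq 0$ for feasible $\param$, which is what the paper uses; it yields the cone inequality with coefficient one, $\norm{\Delta_{\SsetComp}}_1 \leq \norm{\Delta_\Sset}_1 + 2\coneslack + \varepsilon/\regpar_i$. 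Second, this matters for constants: with the generic ``$3\norm{\Delta_\Sset}_1 + 4\coneslack$'' cone you would absorb a tolerance of order $32\spindex\slopfac\norm{\Delta}_2^2$ or more, forcing a condition stronger than the stated $\realrsc = \lossrsc - 16\spindex\slopfac > 0$; the coefficient-one cone gives $\norm{\Delta}_1^2 \leq 8\spindex\norm{\Delta}_2^2 + \cdots$, which after combining with the $\lossrsc/2$ factor produces exactly the $16\spindex\slopfac$ of the proposition. With these two fixes your outline goes through.
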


On one hand, inequality~\eqref{eqn:fvaluebound} is a relatively direct
consequence of known convergence results about stochastic dual
averaging~\cite{Nesterov09}. On the other hand, the
bound~\eqref{eqn:ell1bound}---which plays a central role in the our
proofs---requires some additional statistical properties of the
optimal solution at each epoch $i$.  See
Appendix~\ref{app:within-epoch} for further details on these
properties, and the proof of Proposition~\ref{prop:mainepoch}.

Before moving on, we note that the bounds in
Proposition~\ref{prop:mainepoch} can be simplified further based on
the parameter settings in equations~\eqref{eqn:epochlength}
and~\eqref{eqn:paramsettings-lipschitz}. Substituting these choices in
our bounds yields the inequalities
\begin{subequations}
\begin{align}
\label{eqn:fvaluebound-simple}
f_i(\xavg(T_i)) - f_i(\epochopt{i}) & \leq \const\, \frac{\radius_i^2
  \realrsc^2}{\spindex \lossrsc}\quad \mbox{and} \\%
\label{eqn:ell1bound-simple}
\norm{\xavg(T_i) - \opt}_1^2 & \leq \plaincon \,
\left[\radius_i^2 + \frac{\lossrsc}{\realrsc}\,\spindex\totalslacksq\right]. 
\end{align}
\end{subequations}

 In addition to this proposition, we need to state two more technical
 lemmas, the first of which bounds the error $\DelIt{i} \defn
 \epochopt{i} - \opt$.
\begin{lemma}
\label{lemma:opterror}
At epoch $i$, assume that $\norm{\opt - \proxcenter{i}}_p \leq
\radius_i$. Then the error $\DelIt{i} = \epochopt{i} - \opt$ satisfies
the bounds
\begin{subequations}
\begin{align}
\label{EqnOptEllTwo}
\norm{\epochopt{i} - \opt}_2 & \leq \frac{4}{\realrsc} \sqrt{\spindex}
\regpar_i + 2 \sqrt{\frac{\regpar_i\coneslack + 4 \slopfac
    \coneslack^2}{\realrsc}}, \quad \mbox{and} \\
\label{EqnOptEllOne}
\norm{\epochopt{i} - \opt}_1 & \leq \frac{8}{\realrsc} \spindex
\regpar_i + 4 \sqrt{\frac{\spindex (\regpar_i \coneslack + 4 \slopfac
    \coneslack^2)}{\realrsc}} + 2 \coneslack.
\end{align}
\end{subequations}
\end{lemma}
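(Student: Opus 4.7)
}

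The plan is to combine the optimality of $\epochopt{i}$ for the regularized epoch objective $f_i$ with the local RSC condition~\eqref{EqnLRSC}, producing a deterministic basic inequality of Lasso type, and then to execute the standard two-step cone argument (first deriving an $\ell_2$ bound, then upgrading to an $\ell_1$ bound via a cone condition). The assumption $\|\opt - \proxcenter{i}\|_p \le \radius_i$ ensures $\opt \in \Parset(\radius_i)$, so the inequality $f_i(\epochopt{i}) \le f_i(\opt)$ holds. Rearranging gives
\begin{equation*}
\Lossbar(\epochopt{i}) - \Lossbar(\opt) \;\le\; \regpar_i \big( \|\opt\|_1 - \|\epochopt{i}\|_1 \big).
\end{equation*}
By first-order optimality of $\opt$ on $\Parset$, $\inprod{\nabla \Lossbar(\opt)}{\DelIt{i}} \ge 0$, so Assumption~\BHACKLRSC\ applied at $(\opt, \epochopt{i})$ yields the lower bound $\Lossbar(\epochopt{i}) - \Lossbar(\opt) \ge \tfrac{\lossrsc}{2}\|\DelIt{i}\|_2^2 - \slopfac \|\DelIt{i}\|_1^2$.

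For the $\ell_1$ side, splitting coordinates along $\Sset$ and its complement and using the triangle inequality in the style of standard sparse recovery proofs gives
\begin{equation*}
\|\opt\|_1 - \|\epochopt{i}\|_1 \;\le\; \|\DelIt{i}_{\Sset}\|_1 - \|\DelIt{i}_{\Sset^c}\|_1 + 2\,\coneslack.
\end{equation*}
Combining the two inequalities produces the master inequality
\begin{equation*}
\tfrac{\lossrsc}{2}\|\DelIt{i}\|_2^2 - \slopfac \|\DelIt{i}\|_1^2 \;\le\; \regpar_i\big(\|\DelIt{i}_{\Sset}\|_1 - \|\DelIt{i}_{\Sset^c}\|_1 + 2\coneslack\big).
\end{equation*}
Next I would split into two regimes. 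If $\|\DelIt{i}_{\Sset^c}\|_1 > 3\|\DelIt{i}_{\Sset}\|_1 + 4\coneslack$, the right-hand side is sufficiently negative that the inequality combined with $\|\DelIt{i}\|_1^2$ bounded in terms of $\|\DelIt{i}_{\Sset^c}\|_1$ forces a direct contradiction-style bound on $\|\DelIt{i}\|_2$, matching the claim. Otherwise we are in the cone $\|\DelIt{i}_{\Sset^c}\|_1 \le 3\|\DelIt{i}_{\Sset}\|_1 + 4\coneslack$, which yields $\|\DelIt{i}\|_1 \le 4\sqrt{\spindex}\|\DelIt{i}\|_2 + 4\coneslack$ and hence $\|\DelIt{i}\|_1^2 \le 32\spindex\|\DelIt{i}\|_2^2 + 32\coneslack^2$. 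Substituting and collecting the $\|\DelIt{i}\|_2^2$ terms reveals the effective RSC constant $\realrsc = \lossrsc - 16\spindex\slopfac$ (defined in~\eqref{eqn:realrsc}), giving an inequality of the form
\begin{equation*}
\tfrac{\realrsc}{4}\|\DelIt{i}\|_2^2 \;\lesssim\; \regpar_i\sqrt{\spindex}\,\|\DelIt{i}\|_2 + \regpar_i\coneslack + \slopfac\coneslack^2.
\end{equation*}
A standard Young/AM-GM step (absorbing the linear $\|\DelIt{i}\|_2$ term) then produces the quadratic bound~\eqref{EqnOptEllTwo}, up to the numeric constants displayed in the lemma.

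Finally, I would feed the $\ell_2$ bound back into the cone inclusion $\|\DelIt{i}\|_1 \le 4\sqrt{\spindex}\|\DelIt{i}\|_2 + 4\coneslack$ and use $\sqrt{a+b}\le\sqrt{a}+\sqrt{b}$ to separate the two stochastic contributions, yielding precisely~\eqref{EqnOptEllOne}. The main obstacle is the handling of the non-standard tolerance term $-\slopfac\|\DelIt{i}\|_1^2$ in the LRSC condition: unlike the classical Lasso argument, the inequality coming from strong convexity is no longer unconditionally positive on the left-hand side, so one has to either (i) verify the cone condition up front to bound $\|\DelIt{i}\|_1^2$ by $\spindex\|\DelIt{i}\|_2^2 + \coneslack^2$ and recover a genuine strong convexity constant $\realrsc$, or (ii) dispose of the non-cone case separately. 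The precise constant $16$ in the definition of $\realrsc$ is dictated by this step, and keeping track of the linear-in-$\coneslack$ versus quadratic-in-$\coneslack$ contributions cleanly is what produces the two additive pieces in both~\eqref{EqnOptEllTwo} and~\eqref{EqnOptEllOne}.
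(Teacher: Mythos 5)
Your overall skeleton (basic inequality from $f_i(\epochopt{i}) \leq f_i(\opt)$, local RSC with the gradient term dropped, cone argument, quadratic in $\norm{\DelIt{i}}_2$, then feed the $\ell_2$ bound back through the cone to get the $\ell_1$ bound) is the same as the paper's, but your handling of the RSC tolerance term has a genuine gap in the non-cone branch. In that regime, $\norm{\DelIt{i}_{\SsetComp}}_1 > 3\norm{\DelIt{i}_\Sset}_1 + 4\coneslack$, your master inequality only gives
\begin{equation*}
\frac{\lossrsc}{2}\norm{\DelIt{i}}_2^2 \;\leq\; \slopfac\norm{\DelIt{i}}_1^2 + \regpar_i\Big(-\tfrac{2}{3}\norm{\DelIt{i}_{\SsetComp}}_1 + \tfrac{2}{3}\coneslack\Big),
\end{equation*}
with $\norm{\DelIt{i}}_1 \leq \tfrac{4}{3}\norm{\DelIt{i}_{\SsetComp}}_1$. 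The negative term is \emph{linear} in $\norm{\DelIt{i}_{\SsetComp}}_1$ while the tolerance term is \emph{quadratic}, so for large $\norm{\DelIt{i}_{\SsetComp}}_1$ nothing forces a contradiction; and outside the cone you cannot trade $\norm{\DelIt{i}}_1$ for $\sqrt{\spindex}\norm{\DelIt{i}}_2$. The only a priori control available is feasibility ($\norm{\DelIt{i}}_1 \lesssim \radius_i$), which would inject $\slopfac\radius_i^2$-type terms that do not appear in the lemma. So "the non-cone case forces the claimed bound" is not substantiated.

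The missing observation — and the paper's route — is that $\opt$ is the minimizer of the \emph{unregularized} loss $\Lossbar$ over $\Parset$, so $\Lossbar(\epochopt{i}) \geq \Lossbar(\opt)$; combined with $f_i(\epochopt{i}) \leq f_i(\opt)$ this gives $\norm{\epochopt{i}}_1 \leq \norm{\opt}_1$ outright. Feeding this into Lemma~\ref{lemma:cone} (with $\epsilon = 0$, $\Tset = \Sset$) yields the cone inequality $\norm{(\DelIt{i})_{\SsetComp}}_1 \leq \norm{(\DelIt{i})_\Sset}_1 + 2\coneslack$ \emph{unconditionally}, so there is no case split at all; the paper then uses $\norm{\DelIt{i}}_1 \leq 2\sqrt{\spindex}\norm{\DelIt{i}}_2 + 2\coneslack$ together with $\frac{\lossrsc}{2}\norm{\DelIt{i}}_2^2 \leq \regpar_i\norm{\DelIt{i}}_1 + \slopfac\norm{\DelIt{i}}_1^2$ to get the quadratic $\frac{1}{2}(\lossrsc - 16\spindex\slopfac)\,\epsilon^2 - 2\sqrt{\spindex}\regpar_i\,\epsilon - (2\regpar_i\coneslack + 8\slopfac\coneslack^2) \leq 0$ for $\epsilon = \norm{\DelIt{i}}_2$, which produces exactly~\eqref{EqnOptEllTwo} and then~\eqref{EqnOptEllOne}. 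Note also that even your cone branch, with cone constant $3$, gives $\slopfac\norm{\DelIt{i}}_1^2 \leq 32\spindex\slopfac\norm{\DelIt{i}}_2^2 + 32\slopfac\coneslack^2$ and hence an effective constant $\lossrsc - 64\spindex\slopfac$ rather than $\realrsc = \lossrsc - 16\spindex\slopfac$; since $\realrsc$ is fixed by~\eqref{eqn:realrsc} and reused throughout the paper, this is not merely a cosmetic constant.
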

\noindent For future reference, it is convenient to note that the
bound~\eqref{EqnOptEllOne} implies that
\begin{equation}
\label{eqn:epocherrorbound}
\norm{\opt - \epochopt{i}}_1 \leq \frac{9}{\realrsc}\spindex \regpar_i
+ 8 \coneslack \sqrt{\frac{\slopfac \spindex}{\realrsc}} + 6
\coneslack,
\end{equation}
where we have made use of the elementary inequalities $\sqrt{a+b} \leq
\sqrt{a} + \sqrt{b}$ and $2 \sqrt{ab} \leq a + b$, valid for all
non-negative $a,b$. \\

Our next lemma provides a simplified version of the RSC condition that
holds under conditions of Proposition~\ref{prop:mainepoch}. The lemma
is stated in terms of the error
\begin{align}
\label{EqnDefnDelBarAvg}
\DelBarAvg(T_i) & \defn \xavg(T_i) - \epochopt{i}
\end{align}
between the average \mbox{$\xavg(T_i) \defn \frac{1}{T_i}
  \sum_{t=1}^{T_i} \iter{t}$} over trials $1$ through $T_i$ in epoch
$i$, and the epoch \mbox{optimum $\epochopt{i}$.}

\begin{lemma}
  \label{lemma:rsc-simple}
  Under conditions of Proposition~\ref{prop:mainepoch} and with
  parameter settings~\eqref{eqn:epochlength}
  and~\eqref{eqn:paramsettings-lipschitz}, we have
  \begin{align*}
    \frac{\realrsc}{2} \norm{\DelBarAvg(T_i)}_2^2 &\leq
    f_i(\xavg(T_i)) - f_i(\epochopt{i}) + \const\, \slopfac \left(
    \frac{\realrsc}{\lossrsc} \radius_i^2 + \spindex \totalslacksq
    \right)
  \end{align*}
with probability at least $1 - 3\exp(-\devcon_i^2/12)$.
\end{lemma}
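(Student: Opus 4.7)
The plan is to start from the local RSC condition~\eqref{EqnLRSC} applied at the pair $(\epochopt{i}, \xavg(T_i))$, and use the optimality of $\epochopt{i}$ for $f_i$ on $\Parset(R_i)$ in order to cancel the first-order term. Concretely, by the RSC inequality,
\[
\Lossbar(\xavg(T_i)) - \Lossbar(\epochopt{i}) - \ip{\nabla \Lossbar(\epochopt{i})}{\DelBarAvg(T_i)} \; \geq \; \frac{\lossrsc}{2}\norm{\DelBarAvg(T_i)}_2^2 - \slopfac \norm{\DelBarAvg(T_i)}_1^2,
\]
and by convexity of the $\ell_1$-norm there exists $z \in \partial \norm{\epochopt{i}}_1$ satisfying $\regpar_i(\norm{\xavg(T_i)}_1 - \norm{\epochopt{i}}_1) \geq \regpar_i \ip{z}{\DelBarAvg(T_i)}$. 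Since $\xavg(T_i) \in \Parset(R_i)$ (it is a convex combination of feasible iterates) and $\epochopt{i}$ is the constrained minimizer of $f_i$ over $\Parset(R_i)$, the first-order optimality condition gives $\ip{\nabla \Lossbar(\epochopt{i}) + \regpar_i z}{\DelBarAvg(T_i)} \geq 0$. Adding $\regpar_i(\norm{\xavg(T_i)}_1 - \norm{\epochopt{i}}_1)$ to both sides of the RSC inequality and combining with these two facts yields
\[
\frac{\lossrsc}{2}\norm{\DelBarAvg(T_i)}_2^2 \; \leq \; f_i(\xavg(T_i)) - f_i(\epochopt{i}) + \slopfac\,\norm{\DelBarAvg(T_i)}_1^2.
\]

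The second step is to control the extra $\ell_1$ term. By the triangle inequality and the bound $(a+b)^2 \leq 2a^2 + 2b^2$,
\[
\norm{\DelBarAvg(T_i)}_1^2 \; \leq \; 2\norm{\xavg(T_i) - \opt}_1^2 + 2\norm{\opt - \epochopt{i}}_1^2.
\]
The first summand is controlled directly by the simplified Proposition~\ref{prop:mainepoch} bound~\eqref{eqn:ell1bound-simple}, which holds with probability at least $1 - 3\exp(-\devcon_i^2/12)$ under our parameter settings. The second summand is handled by Lemma~\ref{lemma:opterror} via the consequence~\eqref{eqn:epocherrorbound}: squaring, bounding $\coneslack^2 \leq \spindex\totalslacksq$ and $\coneslack^2\,\slopfac\spindex/\realrsc \leq \spindex\totalslacksq$ (both immediate from the definition~\eqref{eqn:totalslackdefn}), and using the choice of $\regpar_i$ in~\eqref{eqn:paramsettings-lipschitz} together with the lower bound~\eqref{eqn:epochlength} on $T_i$ to show $\spindex^2\regpar_i^2/\realrsc^2 \leq \const (\realrsc/\lossrsc)\radius_i^2$. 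Combining these yields $\norm{\DelBarAvg(T_i)}_1^2 \leq \const\bigl(\radius_i^2 + (\lossrsc/\realrsc)\,\spindex\totalslacksq\bigr)$.

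Substituting this into the display of the first paragraph, using $\realrsc \leq \lossrsc$ to replace $\frac{\lossrsc}{2}\norm{\DelBarAvg(T_i)}_2^2$ on the left by the weaker $\frac{\realrsc}{2}\norm{\DelBarAvg(T_i)}_2^2$, and absorbing the extra $\lossrsc/\realrsc$ factors into the universal constant, produces the claimed inequality. The only delicate point is matching the precise form $\slopfac\bigl((\realrsc/\lossrsc)\radius_i^2 + \spindex\totalslacksq\bigr)$ of the additive slack: a direct triangle inequality produces the coarser form $\slopfac\bigl(\radius_i^2 + (\lossrsc/\realrsc)\spindex\totalslacksq\bigr)$, and tightening this likely requires a cone-style decomposition $\norm{\DelBarAvg(T_i)}_1 \leq \sqrt{\spindex}\norm{\DelBarAvg(T_i)}_2 + \norm{\DelBarAvg(T_i)_{\Sset^c}}_1$ so that a $\spindex\slopfac\norm{\DelBarAvg(T_i)}_2^2$ contribution can be moved to the left-hand side and absorbed into the $16\spindex\slopfac$ slack built into $\realrsc = \lossrsc - 16\spindex\slopfac$. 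This is the main technical obstacle; the high-probability guarantee $1 - 3\exp(-\devcon_i^2/12)$ is inherited verbatim from Proposition~\ref{prop:mainepoch}, since that is the only stochastic ingredient.
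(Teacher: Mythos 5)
Your opening step is exactly the paper's closing step: applying the local RSC condition to the pair $(\epochopt{i}, \xavg(T_i))$ and using first-order optimality of $\epochopt{i}$ for $f_i$ to obtain $\frac{\lossrsc}{2}\norm{\DelBarAvg(T_i)}_2^2 \leq f_i(\xavg(T_i)) - f_i(\epochopt{i}) + \slopfac\norm{\DelBarAvg(T_i)}_1^2$. The gap is in how you then control $\slopfac\norm{\DelBarAvg(T_i)}_1^2$. Your crude triangle-inequality route (via $2\norm{\xavg(T_i)-\opt}_1^2 + 2\norm{\opt-\epochopt{i}}_1^2$ and the proposition's bound~\eqref{eqn:ell1bound-simple}) only yields the slack $\const\,\slopfac\bigl(\radius_i^2 + \tfrac{\lossrsc}{\realrsc}\spindex\totalslacksq\bigr)$, which you correctly flag is weaker than the claimed $\const\,\slopfac\bigl(\tfrac{\realrsc}{\lossrsc}\radius_i^2 + \spindex\totalslacksq\bigr)$ --- and not merely by a universal constant, since $\lossrsc/\realrsc$ can be arbitrarily large when $16\spindex\slopfac$ is close to $\lossrsc$. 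The statement "tightening this likely requires a cone-style decomposition" is a correct diagnosis but is not a proof: that decomposition \emph{is} the substance of the paper's argument. Concretely, the paper applies the within-epoch function-value bound~\eqref{eqn:epochmainbound} at the feasible point $\opt$, uses the fact that $\opt$ minimizes $\Lossbar$ to turn it into the comparison $\norm{\xavg(T_i)}_1 \leq \norm{\opt}_1 + \mbox{(slack)}$, invokes Lemma~\ref{lemma:cone} and then Lemma~\ref{lemma:opterror} to transfer the resulting cone inequality from $\xavg(T_i)-\opt$ to $\DelBarAvg(T_i)$, and so arrives at $\norm{\DelBarAvg(T_i)}_1^2 \leq 8\spindex\norm{\DelBarAvg(T_i)}_2^2 + \const\bigl(\tfrac{\realrsc}{\lossrsc}\radius_i^2 + \spindex\totalslacksq\bigr)$ after substituting the parameter settings. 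The term $8\spindex\slopfac\norm{\DelBarAvg(T_i)}_2^2$ is then moved to the left and absorbed via $\lossrsc - 16\spindex\slopfac = \realrsc$, which is precisely why the left-hand side is $\tfrac{\realrsc}{2}$ rather than your wasteful replacement of $\lossrsc$ by $\realrsc$. Without carrying out this step, the lemma as stated is not established.

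A secondary structural problem: you invoke the proposition's $\ell_1$-error bound~\eqref{eqn:ell1bound-simple} as an input, but in the paper that inequality is proved \emph{after} this lemma, using the cone bound~\eqref{eqn:iterconebound} derived inside this lemma's proof; only the function-value bound~\eqref{eqn:fvaluebound} is available at this point. Your argument is not viciously circular (the cone bound itself does not rely on the lemma's conclusion, so one could reorganize), but as written it inverts the logical order of the paper's development, and in any case it is this shortcut that produces the weaker slack term.
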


\comment{
In particular, using the setting of $\regpar_i$ from
Proposition~\ref{prop:mainepoch}, the above bound simplifies to
\begin{align}
\label{eqn:cone-to-epochopt}
\norm{\DelBarAvg(T)}_1^2 & \leq 8 \spindex \norm{\DelBarAvg(T)}_2^2 +
c\,\left( \frac{\spindex \radius_i}{\realrsc \sqrt{T}}\sqrt{ \left(
  \proxbound(\lips_i^2 + \noise_i^2) + \devcon^2 \noise_i^2\right)} +
\frac{\radius_i^2 \proxbound}{T} + \totalslacksq \right).
\end{align}
}


\subsection{Proof of Theorem~\ref{thm:lipschitz-general}}
\label{sec:proof-lipschitz}

\newcommand{\UKRAINE}{\ensuremath{ \frac{4 \lossrsc}{\realrsc} \spindex} \;}
\newcommand{\HACKTOTEPOCH}{\TOTEPOCH_\totiters}
\newcommand{\Plaincon}{\ensuremath{C}}

We are now equipped to prove Theorem~\ref{thm:lipschitz-general}. The
proof will be broken down into cases, corresponding to whether
$\totiters$ is ``too large'' or not. We recall that $\TOTEPOCH$ is the
total number of epochs performed after $\totiters$ steps in
Algorithm~\ref{alg:epochdualavg}.

We first consider iterations for which the bound
\begin{align}
\label{EqnCaseBound}
\radius^2_{\TOTEPOCH} & \geq \UKRAINE \totalslacksq
\end{align}
is satisfied.  We then provide an additional lemma which allow us to
control the iterates for epochs $i$ after which the squared
$\radius_i^2$ violates the bound~\eqref{EqnCaseBound}.

\subsubsection{Proof assuming inequality~\eqref{EqnCaseBound} holds}

Our first step is to ensure that the bound \mbox{$\norm{\opt -
    \proxcenter{i}}_\pval \leq \radius_i$} holds at each epoch $i$, so
that Proposition~\ref{prop:mainepoch} can be applied in a recursive
manner. We prove this intermediate claim by induction on the epoch
index.  By construction, this bound holds at the first epoch. Assume
that it holds for epoch $i$. Recall that the epoch length setting in
Theorem~\ref{thm:lipschitz-general} is of the form
\begin{equation*}
  T_i = \Plaincon \, \left [ \frac{\spindex^2 \lossrsc^2}{\realrsc^4
      \radius_i^2} \left( \proxbound (\lips^2 + \noise^2) +
    \devcon_i^2 \noise^2 \right) + \frac{ \lossrsc
      \proxbound}{\realrsc} \right],
\end{equation*}
where $\Plaincon \geq 1$ is a constant that we are free to choose.
Upon substituting this setting of $T_i$ in the
inequality~\eqref{eqn:ell1bound}, the simplified
bound~\eqref{eqn:ell1bound-simple} further yields
\begin{equation*}
  \norm{\xavg(T_i) - \opt}^2_1 \leq \frac{\const}{\sqrt{\Plaincon}} \,
  \left (\radius_i^2 + \frac{\lossrsc \spindex}{\realrsc}
  \totalslacksq \right) \stackrel{(i)}{\leq}
  \frac{\const'}{\sqrt{\Plaincon}} \; \radius_i^2,
\end{equation*}
where step (i) follows due to our assumption $\radius_i^2 \geq
4\lossrsc \spindex \totalslacksq /\realrsc$.  Thus, by choosing
$\Plaincon$ sufficiently large, we may ensure that $\norm{\xavg(T_i) -
  \opt}^2_1 \leq \radius_i^2/2 \defn \radius_{i+1}^2$.  Consequently,
if $\opt$ is feasible at epoch $i$, it stays feasible at epoch $i+1$,
and so by induction, we are guaranteed the feasibility of $\opt$
throughout the run of the algorithm by induction.

As a result, Lemma~\ref{lemma:rsc-simple} applies, and we 
find that
\begin{align*}
  \frac{\realrsc}{2} \norm{\DelBarAvg(T_i)}_2^2 &\leq f_i(\xavg(T_i))
  - f_i(\epochopt{i}) + \const\, \slopfac \left(
  \frac{\realrsc}{\lossrsc} \radius_i^2 + \spindex \totalslacksq
  \right)
\end{align*}
with probability at least $1 - 3\exp(-\devcon^2_i/12)$. Appealing to
the simplified form~\eqref{eqn:fvaluebound-simple} of
Proposition~\ref{prop:mainepoch}, we can further obtain the inequality
\begin{align*}
  \frac{\realrsc}{2} \norm{\DelBarAvg(T_i)}_2^2 &\leq
  \const\,\frac{\radius_i^2 \realrsc^2}{\spindex \lossrsc} + \const\,
  \slopfac \left( \frac{\realrsc}{\lossrsc} \radius_i^2 + \spindex
  \totalslacksq \right).
\end{align*}
Recalling that $\lossrsc = \realrsc + 16\spindex \slopfac$, the above
bound further simplifies to
\begin{align}
  \frac{\realrsc}{2} \norm{\DelBarAvg(T_i)}_2^2 &\leq \const\, \left(
  \frac{\radius_i^2 \realrsc}{\spindex} + \spindex \slopfac
  \totalslacksq \right).
  \label{EqnLaundry}
\end{align}

We have now bounded $\DelBarAvg(T_i) = \xavg(T_i) - \epochopt{i}$, and
Lemma~\ref{lemma:opterror} provides a bound on $\DelIt{i} =
\epochopt{i} - \opt$, so that the error $\DelAvg(T_i) = \xavg(T_i) -
\opt$ can be controlled using triangle inequality. In particular,
combining inequality~\eqref{EqnOptEllTwo} with the
bound~\eqref{EqnLaundry}, we find that
\begin{align*}
\norm{\DelAvg(T_i)}_2^2 & \leq \const \left [ \frac{\radius_i^2}
  {\spindex} + \frac{ \slopfac \spindex \totalslacksq}{\realrsc} +
  \frac{ \spindex \regpar_i^2}{\realrsc^2} + \frac{ \regpar_i
    \norm{\opt_{\SsetComp}}_1 + \slopfac
    \norm{\opt_{\SsetComp}}_1^2}{\realrsc} \right].
\end{align*}
Since $2\regpar_i\norm{\opt_{\SsetComp}}_1 \leq \frac{\spindex
  \regpar_i^2}{\realrsc} + \frac{\realrsc
  \norm{\opt_{\SsetComp}}_1}{\spindex}$ by Cauchy-Schwartz inequality,
we can further simplify the above bound to obtain
\begin{align*}
\norm{\DelAvg(T_i)}_2^2 & \leq \const \left [ \frac{\radius_i^2}
  {\spindex} + \frac{ \slopfac \spindex \totalslacksq}{\realrsc} +
  \frac{ \spindex \regpar_i^2}{\realrsc^2} +
  \frac{\norm{\opt_{\SsetComp}}_1^2}{\spindex} \left(1 +
  \frac{\slopfac \spindex} {\realrsc}\right) \right].
\end{align*}
Substituting our choice of $\regpar_i$ and $T_i$ from
equations~\eqref{eqn:paramsettings-lipschitz}
and~\eqref{eqn:epochlength} respectively yields the final bound
\begin{equation*}
\norm{\DelAvg(T_i)}_2^2 \leq \const\left[\frac{\radius_i^2}{\spindex}
  + \totalslacksq \left(1 + \frac{\spindex\slopfac}{\realrsc} \right)
  \right],
\end{equation*}
a bound that holds probability at least \mbox{$1 - 3
  \exp(-\devcon_i^2/12)$.} Recalling that $\radius_i^2 = \radius_1^2
2^{-(i-1)}$, we see that the error after $i$ epochs is at most
\begin{equation*}
  \norm{\DelAvg(T_i)}_2^2 \leq \const\left[\frac{\radius_1^2
      2^{-(i-1)}}{\spindex} + \totalslacksq \left(1 +
    \frac{\spindex\slopfac}{\realrsc}\right) \right].
\end{equation*} 
Since $\lossrsc = \realrsc + 16\spindex \slopfac$, some algebra then
leads to
\begin{equation}
  \norm{\DelAvg(T_i)}_2^2 \leq \const\left[\frac{\radius_1^2
      2^{-(i-1)}}{\spindex} + \totalslacksq
    \frac{\lossrsc}{\realrsc} \right],
  \label{eqn:almostthere}
\end{equation} 
with probability at least $1 -
3\sum_{j=1}^i\exp(-\devcon_j^2/12)$. Recalling our setting
$\devcon_i^2 = \devcon^2 + 24\log i$, we can apply the union bound and
simplify the error probability as
\begin{equation*}
  \sum_{j=1}^i \exp(-\devcon_j^2/12) = \sum_{j=1}^i \exp(-(\devcon^2 +
  24\log j)/12) = \exp(-\devcon^2/12)\sum_{j=1}^i \frac{1}{j^2}.
\end{equation*}
As a result, we can upper bound the net probability of our bounds
failing after the $\TOTEPOCH$ epochs performed by
Algorithm~\ref{alg:epochdualavg} as 
\begin{equation*}
  \sum_{j=1}^{\TOTEPOCH} 3\exp(-\devcon_j^2/12) \leq
  \sum_{j=1}^{\infty} 3\exp(-\devcon_j^2/12) \leq 3\exp(-\devcon^2/12)
  \sum_{j=1}^{\infty} \frac{1}{j^2} \leq \frac{3\pi^2}{6}
  \exp(-\devcon^2/12),
\end{equation*}
where the last step follows summing the infinite series. Finally
noting that $\pi^2/6 \leq 2$ gives us the stated probability of
$1-6\exp(-\devcon^2/12)$ with which our bounds hold. In order to
complete the proof of the theorem, we need to convert the error
bound~\eqref{eqn:almostthere} from its dependence on the number of
epochs $\TOTEPOCH$ to the number of iterations $\totiters$. This
requires us to obtain $\TOTEPOCH$ in terms of $\totiters$, which we do
next. Letting $T(K)$ be the number of iterations needed to complete
$K$ epochs, we start by computing an upper bound $g(K)$ on $T(K)$
based on our epoch length setting~\eqref{eqn:epochlength}. Then
inverting the bound allows us to deduce the lower bound $\TOTEPOCH
\geq g^{-1}(\totiters)$, which allows us to obtain error bounds in
terms of $\totiters$.
\begin{align*}
T(K) &= \sum_{i=1}^{K} T_i = \sum_{i=1}^{K} c \,
\left[\frac{\spindex^2 \lossrsc^2}{\realrsc^4\radius_i^2}
  \left(\proxbound (\lips^2 + \noise^2) + (\devcon^2 + 24 \log i)
  \noise^2 \right) + \frac{\lossrsc \proxbound}{\realrsc} \right] \\
& = c \, \left[\frac{\spindex^2 \lossrsc^2}{\realrsc^4 \radius_1^2}
  \left (\proxbound ( \lips^2 + \noise^2) + (\devcon^2 + 24\log K)
  \noise^2\right) \sum_{i=1}^{K} 2^{i-1} + \frac{\lossrsc
    \proxbound}{\realrsc} K \right] \\
& \leq \underbrace{c \, \left[\frac{\spindex^2 \lossrsc^2}{ \realrsc^4
      \radius_1^2} \left( \proxbound(\lips^2 + \noise^2) + (\devcon^2
    + 24 \log K) \noise^2 \right) 2^{K} + \frac{\lossrsc
      \proxbound}{\realrsc} K \right]}_{\defn g(K)}
\end{align*}
where the last inequality sums the geometric progression. Inverting
the above inequality to obtain $g^{-1}(\totiters)$, along with some
straightforward algebra completes the proof.


\subsubsection{Case 2:  Extension to arbitrary $\TOTEPOCH$}

As we observed in the previous section, when the
bound~\eqref{EqnCaseBound} holds, we can ensure that $\opt$ stays
feasible at each epoch, thereby allowing us to use the error bounds
from Proposition~\ref{prop:mainepoch}.  However, once $\totiters$
becomes large enough, the bound~\eqref{EqnCaseBound} will no longer
hold, so that the the feasibility of $\opt$ for subsequent epochs can
no longer be guaranteed.  In this section, we deal with this remaining
set of iterations.  In particular, let us define the critical epoch
number
\begin{align}
\label{eqn:Kstar}
\Kstar & \defn \arg\max \big \{ k \geq 1~ \mid ~\radius_k^2 \geq
\UKRAINE \totalslacksq \big\},
\end{align}
beyond which the bound~\eqref{EqnCaseBound} no longer holds. By the
definition of $\Kstar$, we are guaranteed that
\begin{align*}
\radius_{\Kstar}^2 \; \geq \; \UKRAINE \totalslacksq \;
\stackrel{(i)}{\geq} \radius^2_{\Kstar+1} \; \stackrel{(ii)}{=}
\radius_{\Kstar}^2/2, 
\end{align*}
where inequality (i) follows since $\Kstar$ is the largest epoch for
which the bound~\eqref{EqnCaseBound} holds, and step (ii) follows
from our setting
$\radius_{i+1}^2 = \radius_i^2/2$.

Now our earlier argument applies to all epochs $k \leq \Kstar$, and it
guarantees that after $\Kstar$ epochs, we have
\begin{subequations}
  \begin{align}
    \label{eqn:Kstarboundell2}
    \norm{\proxcenter{\Kstar} - \opt}_2^2 &\leq \const\,\left(
    \frac{\radius_{\Kstar}^2}{\spindex} + \totalslacksq
    \frac{\lossrsc}{\realrsc} \right) \leq \const\,
    \totalslacksq \frac{\lossrsc}{\realrsc},
    \\ \norm{\proxcenter{\Kstar} - \opt}_1^2 &\leq \const\,\left(
    \radius_{\Kstar}^2 + \frac{\lossrsc \spindex} {\realrsc}
    \totalslacksq \right) \leq \const\,\frac{\lossrsc \spindex}
                  {\realrsc}.
    \label{eqn:Kstarboundell1}
  \end{align}
  \label{eqn:Kstarbounds}
\end{subequations}
with probability at least $1 - 3 \, \big( \sum \limits
_{i=1}^{\Kstar}\frac{1}{i^2} \big) \: \exp(-\devcon^2/12)$.

Our approach for the remaining epochs $k > \Kstar$ is to show that
even though $\opt$ may no longer be feasible, the error of the
algorithm cannot get significantly worse than that at epoch
$\Kstar$. In order to do so, we need an additional lemma.
\begin{lemma}
\label{lemma:badepochell2}
Suppose that Assumptions~\ref{ass:explips}, \HACKLRSC$\,$
and~\ref{ass:subgauss} are satisfied with parameters $\lips_i$,
$(\lossrsc, \slopfac)$ and $\noise_i$ respectively at epochs $i = 0,
1, 2, \ldots$. Assume that at some epoch $k$, the prox center
$\proxcenter{k}$ satisfies the bound \mbox{$\norm{\proxcenter{k} -
    \opt}_2 \leq \plaincon_1 \, \radius_k / \sqrt{\spindex}$}, and
that for all epochs $j \geq k$, the epoch lengths satisfy the bounds
\begin{equation*}
  \frac{\lossrsc \spindex}{\realrsc^2} \sqrt{\frac{
      \proxbound(\lips_j^2 + \noise_j^2) + \devcon_j^2\noise_j^2}
    {T_j}} \leq \frac{\radius_k}{2}, \quad \mbox{and} \quad
  \frac{\proxbound}{T_j} \leq \const_2.
\end{equation*}
Then for all epochs $j \geq k$, we have the error bound
$\norm{\proxcenter{j} - \opt}_2^2 \leq \const_3
\,\frac{\radius_{k}^2}{\spindex}$ with probability at least $1 - 3
\sum_{i=k+1}^j \exp(-\devcon_i^2/12)$.
\end{lemma}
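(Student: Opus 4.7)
The plan is to establish Lemma~\ref{lemma:badepochell2} by induction on the epoch index $j \geq k$. The base case $j = k$ is immediate from the hypothesis. For the inductive step, assume $\norm{\proxcenter{j} - \opt}_2^2 \leq \const_3 \radius_k^2/\spindex$ and derive the same bound for $\proxcenter{j+1} = \xavg(T_j)$. The essential new difficulty, compared with Proposition~\ref{prop:mainepoch}, is that this inductive hypothesis only provides $\ell_2$-control, so we no longer know that $\norm{\opt - \proxcenter{j}}_p \leq \radius_j$; consequently the feasibility premise of Lemma~\ref{lemma:opterror} may fail, and $\opt$ may lie outside the constraint set $\Parset(\radius_j)$ within which the $T_j$ dual-averaging steps are executed.

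My workaround is to use the always-feasible reference point $\proxcenter{j}$ itself in the dual-averaging convergence analysis (the analysis underlying Proposition~\ref{prop:mainepoch} goes through unchanged once $\opt$ is replaced by any feasible reference in the constraint set). This produces
\[
f_j(\xavg(T_j)) - f_j(\proxcenter{j}) \;\leq\; 30\radius_j\sqrt{\frac{2\proxbound(\lips_j^2+\noise_j^2)}{T_j}} + \frac{\devcon_j\noise_j\radius_j}{\sqrt{T_j}} + 30\radius_j\regpar_j\sqrt{\frac{\proxbound}{T_j}}
\]
with probability at least $1 - 3\exp(-\devcon_j^2/12)$, and the epoch-length hypothesis of the lemma bounds the right-hand side by a quantity of order $\realrsc\radius_k^2/\spindex$. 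To convert this function-value gap into the desired $\ell_2$-distance bound I appeal to locally restricted strong convexity at $\opt$ (using $\nabla \Lossbar(\opt) = 0$ from unconstrained optimality) and the decomposition $f_j(x) - f_j(\proxcenter{j}) = [\Lossbar(x) - \Lossbar(\proxcenter{j})] + \regpar_j(\norm{x}_1 - \norm{\proxcenter{j}}_1)$. Adding and subtracting $\Lossbar(\opt)$ and rearranging yields
\[
\tfrac{\realrsc}{2}\norm{\xavg(T_j)-\opt}_2^2 \;\leq\; (\mbox{opt err}) + [\Lossbar(\proxcenter{j})-\Lossbar(\opt)] + \regpar_j\norm{\proxcenter{j}-\opt}_1 + \slopfac\norm{\xavg(T_j)-\opt}_1^2,
\]
with the $\slopfac$-tolerance terms reabsorbed into the effective constant $\realrsc = \lossrsc - 16\spindex\slopfac$ as in~\eqref{EqnMUFCSUX}.

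The main obstacle is then to control the $\ell_1$-type terms on the right, since the inductive hypothesis only delivers $\ell_2$-control and a crude bound $\norm{\cdot}_1 \leq \sqrt{\pdim}\,\norm{\cdot}_2$ would destroy the sparse-dimension scaling of the rate. The key fact enabling the argument is that both differences $\proxcenter{j} - \opt$ and $\xavg(T_j) - \opt$ satisfy the same approximate-sparsity cone inequality that drives the proof of Proposition~\ref{prop:mainepoch}: because every iterate $\iter{t}$ is generated by an $\ell_1$-penalized dual-averaging update with $\regpar_j$ calibrated to the noise level, the error stays in a cone on which \mbox{$\norm{\cdot}_1 \lesssim \sqrt{\spindex}\,\norm{\cdot}_2 + \coneslack$.} Combining this cone inequality with the Lipschitz bound $\Lossbar(\proxcenter{j}) - \Lossbar(\opt) \leq \lips_j\norm{\proxcenter{j} - \opt}_1$ and the inductive $\ell_2$-hypothesis, the right-hand side of the displayed inequality collapses to $O(\const_3 \radius_k^2/\spindex)$, which closes the induction provided $\const_3$ is chosen sufficiently large relative to the universal constants appearing. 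A union bound over the epoch-level failure events for $i = k+1, \ldots, j$ then yields the stated probability.
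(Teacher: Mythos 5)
Your opening move---running the per-epoch dual-averaging analysis against the always-feasible comparator $\proxcenter{j}$ rather than the possibly infeasible $\opt$---is exactly the right one, and it is also how the paper starts (it is the content of Lemma~\ref{LemBart}). But the way you convert the resulting function-value gap into an $\ell_2$-bound has two genuine gaps. First, the ``cone inequality'' you invoke for $\xavg(T_j)-\opt$ and $\proxcenter{j}-\opt$ is not available in this regime. In the paper that inequality is derived (via Lemma~\ref{lemma:cone} and the bound~\eqref{EqnEngland}) from the comparison $f_j(\xavg(T_j)) \leq f_j(\opt) + \mbox{small}$, which in turn requires $\opt$ to be feasible at epoch $j$ so that the dual-averaging regret bound can be applied with comparator $\opt$. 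The entire point of Lemma~\ref{lemma:badepochell2} is that this feasibility may fail for $j > k$; the mere fact that the iterates are produced by $\ell_1$-penalized updates does not place the error relative to $\opt$ in a sparsity cone, so this step is circular. Second, even if you grant yourself $\ell_1$-control of $\norm{\proxcenter{j}-\opt}_1$ (the paper obtains $\norm{\proxcenter{j}-\opt}_1 \leq 8\radius_k$ by the elementary drift argument of Lemma~\ref{lemma:badepochell1}, using only that successive prox-centers lie within $e\radius_{j-1}$ of each other and that the radii shrink geometrically), your bound $\Lossbar(\proxcenter{j}) - \Lossbar(\opt) \leq \lips_j\norm{\proxcenter{j}-\opt}_1$ produces a term of order $\lips_j\radius_k$, which is \emph{not} of the order $\realrsc\radius_k^2/\spindex$ needed for the RSC step to return $\norm{\cdot}_2^2 \lesssim \radius_k^2/\spindex$; in the Lipschitz-loss setting $\lips_j \equiv \lips$ is a fixed constant while $\radius_k$ is small, so $\lips\radius_k \gg \realrsc\radius_k^2/\spindex$ and your induction does not close.

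The paper's proof avoids both problems by never bounding $\Lossbar(\proxcenter{j})-\Lossbar(\opt)$ through the Lipschitz constant at a single epoch. Instead it telescopes the excess loss across epochs: writing $\Lossbar(\proxcenter{i+1}) - \Lossbar(\opt) = \big(f_i(\proxcenter{i+1}) - f_i(\proxcenter{i})\big) + \regpar_i\big(\norm{\proxcenter{i}}_1 - \norm{\proxcenter{i+1}}_1\big) + \big(\Lossbar(\proxcenter{i}) - \Lossbar(\opt)\big)$, it uses Lemma~\ref{LemBart} to show the per-epoch increment is $O\big(\radius_k^2 2^{-(i-k)/2}\realrsc^2/(\spindex\lossrsc)\big)$, sums the geometric series, and anchors the recursion at epoch $k$, where $\opt$ \emph{is} still feasible, via $\Lossbar(\proxcenter{k+1}) - \Lossbar(\opt) \leq f_k(\proxcenter{k+1}) - f_k(\opt) + 2\regpar_k\radius_k$. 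Only then is RSC applied at $\opt$, with the tolerance term $\slopfac\norm{\proxcenter{j}-\opt}_1^2$ absorbed using the uniform $\ell_1$-bound of Lemma~\ref{lemma:badepochell1}. If you want to repair your argument, you need both of these ingredients: the geometric telescoping of the loss (in place of the Lipschitz bound) and an $\ell_1$-drift bound obtained without any cone condition.
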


\vspace*{.1in}

\noindent See Appendix~\ref{AppLemBad} for the proof of this lemma. \\

Equipped with this lemma, the remainder of the proof is
straightforward. Specifically, inequality~\eqref{eqn:Kstarboundell2}
ensures that for all epochs $j \geq \Kstar$, we have
\begin{equation*}
\norm{\proxcenter{\Kstar} - \opt}_2^2 \leq \const \,\left(
\frac{\radius_{\Kstar}^2}{\spindex} + \totalslacksq \frac{
  \lossrsc}{\realrsc} \right) \; \stackrel{(i)}{\leq} \; \const
\,\frac{\radius_{\Kstar}^2}{\spindex},
\end{equation*}
with probability at least $1 - 3 \,
\big(\sum_{i=1}^{\Kstar}\frac{1}{i^2} \big) \; \exp(-\devcon^2/12)$.
Here step (i) follows from our definition of $\Kstar$.  Now we apply
Lemma~\ref{lemma:badepochell2} to conclude that if
$\TOTEPOCH \geq \Kstar$, then
\begin{equation*}
\norm{\proxcenter{\Kstar} - \opt}_2^2 \leq \const\,\left(
\frac{\radius_{\Kstar}^2}{\spindex}\right) \leq
\const\,\frac{\lossrsc}{\realrsc} \totalslacksq
\end{equation*}
with probability at least $1 - 3 \, \big
(\sum_{i=1}^{\TOTEPOCH}\frac{1}{i^2} \big)
\exp(-\devcon^2/12)$.  Finally, observing that the overall probability
of our bounds failing is at most $6\exp(-\devcon^2/12)$ as before, we
see that the statement of Theorem~\ref{thm:lipschitz-general} holds in
this case as well, thereby completing the proof.


\subsection{Proofs of Corollaries~\ref{cor:lipschitz-pop-weak} and~\ref{cor:lipschitz-samp}}

In this section, we will establish the corollaries of
Theorem~\ref{thm:lipschitz}. We start with
Corollary~\ref{cor:lipschitz-pop-weak} before moving on to
Corollary~\ref{cor:lipschitz-samp}, the latter needing our more
general statement of Theorem~\ref{thm:lipschitz-general}.

\subsubsection{Proof of Corollary~\ref{cor:lipschitz-pop-weak}}

The corollary follows from Theorem~\ref{thm:lipschitz} by making a
particular choice of $\Sset$ based on our assumption $\opt \in
\Ball_\qpar(\radq)$.  Specifically, given a parameter $\ellqtol > 0$,
define
\begin{equation*}
|\Sset_{\ellqtol}| = \quad \mbox{and} \quad \Sset_{\ellqtol} \defn \{j
\in \{1,\ldots,\pdim\} \mid |\opt_j| \geq |\opt_i| ~\mbox{for all} ~ i
\notin \Sset_{\ellqtol}\},
\end{equation*}
to be the set of $\radq\ellqtol^{-\qpar}$ indices corresponding to the
largest coefficients of $\opt$ in absolute value. Given this
definition, some straightforward algebra yields that $|\opt_i| \leq
\ellqtol$ for all $i \notin \Sset_{\ellqtol}$, which further yields
$\norm{\opt_{\Sset^c_{\ellqtol}}}_1 \leq \ellqtol^{1-\qpar} \radq$.
(For instance, see Negahban et al.~\cite{NegRavWaiYu09} for more
detail on these calculations.)  With these choices, the error bound of
Theorem~\ref{thm:lipschitz} simplifies to
\begin{align*}
  \norm{\finalparam{\totiters} - \opt}_2^2 & \leq \const \Big \{
  \frac{\ellqtol^{-\qpar}\radq}{\lossrsc^2\totiters} ((\lips^2 +
  \noise^2)\log\pdim + \devcon^2\noise^2) +
  \frac{\radq^2\ellqtol^{2-2\qpar}}{\radq\ellqtol^{-\qpar}} \Big \}.
\end{align*}
This upper bound is minimized by setting $\ellqtol^* \defn
\sqrt{\frac{(\lips^2 + \noise^2)\log\pdim +
    \devcon^2\noise^2}{\lossrsc^2\totiters}}$; substituting this
choice and performing some algebra yields the claim of the corollary.


\subsubsection{Proof of Corollary~\ref{cor:lipschitz-samp}}
\label{sec:proof-lipschitz-sample}

In order to prove this result, we must first demonstrate that the RSC
condition holds.  For notational simplicity, we introduce the
shorthand
\begin{align*}
\LossN(\param) & \defn \sum_{i=1}^\numobs \Loss(\param; (\samplex_i,
\sampley_i)) \; = \; \sum_{i=1}^\numobs \log \big[1 + \exp(\sampley_i
  \inprod{\param}{\samplex_i} \big) \big].
\end{align*}
Performing a Taylor series expansion of $\paramother$ around $\param$
yields
\begin{equation*}
\LossN(\paramother) - \LossN(\param) - \ip{\nabla
  \LossN(\param)}{\paramother - \param} =
\frac{1}{\numobs}\sum_{i=1}^{\numobs} \psi(a_i)\ip{\samplex_i}{\param
  - \paramother}^2,
\end{equation*}
where $\psi(t) = \frac{\exp(t)}{[1 + \exp(t)]^2}$ is the second
derivative of the logistic function, and $a_i \defn \inprod{\alpha
  \param + (1-\alpha) \paramother}{\samplex_i \sampley_i}$ for some
$\alpha \in [0,1]$.

Under the assumptions of Corollary~\ref{cor:lipschitz-samp}, we
further know that \mbox{$|a_i| \leq 2 \xbound \radius_1$,} and hence
that \mbox{$\psi(a_i) \geq \psi(2 \xbound \radius_1)$.}  Consequently,
in order to establish the local RSC condition~\eqref{EqnLRSC}, it
suffices to lower bound the quantity
\begin{equation*}
  \frac{1}{\numobs} \sum_{i=1}^\numobs \inprod{\samplex_i}{\param -
    \paramother}^2 = \frac{1}{\numobs}\norm{\sampleX(\param -
    \paramother)}_2^2,
\end{equation*}
where $\sampleX \in \real^{\numobs \times \pdim}$ is the design
matrix, with the vector $\samplex_i^T$ as its $i^{th}$ row.
Quantities of this form have been studied in random matrix theory and
sparse statistical recovery. We state a specific result that holds
under our conditions, and provide a proof in
Appendix~\ref{proof:lemma-rsc}. 

\begin{lemma}
Under the conditions of Corollary~\ref{cor:lipschitz-samp}, there are
universal constants $\const, \const_1$ such that we have
  \begin{equation*}
    \frac{\norm{\sampleX v}_2^2} {\numobs} \geq
    \frac{\sigmin(\CovMat)} {2} \norm{v}_2^2 - \const\,
    \frac{\log\pdim} {\numobs} \max\left\{\sigmin(\CovMat),
    \frac{\xnoise^4} {\sigmin(\CovMat)}\right\} \norm{v}_1^2 \qquad
    \mbox{ for all $v \in \R^\pdim$}
  \end{equation*}
with probability at least $1 - 2 \exp( -\const_1 \numobs \min
(\sigmin^2(\CovMat)/\xnoise^4,1) )$,
  \label{lemma:rsc}
\end{lemma}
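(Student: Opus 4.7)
The plan is to establish this bound as a standard restricted eigenvalue result for sub-Gaussian random matrices, following the blueprint developed by Raskutti, Wainwright and Yu for Gaussian designs, with the modifications needed to handle the heavier tails of sub-Gaussian variables. By homogeneity of both sides in $v$, it is enough to prove the inequality on the unit Euclidean sphere, and then extend it to all of $\R^\pdim$ by peeling over the $\ell_1$-radius $\norm{v}_1$.

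The first step is pointwise concentration. For any fixed $v$ with $\norm{v}_2 = 1$, Assumption~\ref{ass:subgauss-design} implies that the random variables $\inprod{\samplex_i}{v}$ are i.i.d.\ zero-mean, sub-Gaussian with parameter $\xnoise$, and of variance $v^T \CovMat v \geq \sigmin(\CovMat)$. Hence $\inprod{\samplex_i}{v}^2$ are sub-exponential, and a standard Bernstein-type inequality for sums of sub-exponentials gives
\begin{equation*}
\P\!\left[\,\left|\frac{\norm{\sampleX v}_2^2}{\numobs} - v^T \CovMat v\right| \geq t \,\right] \leq 2\exp\!\left(-\const\, \numobs \min\!\left(\frac{t^2}{\xnoise^4},\,\frac{t}{\xnoise^2}\right)\right).
\end{equation*}
Choosing $t \asymp \sigmin(\CovMat)$ and using the assumed lower bound on $\numobs$ yields the pointwise bound with failure probability at most $2\exp(-\const_1 \numobs \min(\sigmin^2(\CovMat)/\xnoise^4,1))$.

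To upgrade to a uniform statement, the main tool is a generic chaining argument applied to the sub-Gaussian quadratic process $v \mapsto \norm{\sampleX v}_2^2/\numobs - v^T \CovMat v$. For the intersection of the unit $\ell_2$-ball with the $\ell_1$-ball of radius $\alpha$, the Gaussian width is of order $\alpha \sqrt{\log\pdim}$; combining this with the sub-exponential tail bound above yields, uniformly over all $v$ with $\norm{v}_2 \leq 1$,
\begin{equation*}
\left|\frac{\norm{\sampleX v}_2^2}{\numobs} - v^T \CovMat v\right| \leq \frac{\sigmin(\CovMat)}{2}\norm{v}_2^2 + \const\, \frac{\log\pdim}{\numobs} \max\!\left\{\sigmin(\CovMat),\,\frac{\xnoise^4}{\sigmin(\CovMat)}\right\} \norm{v}_1^2,
\end{equation*}
with the stated probability. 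A peeling argument over dyadic values of the ratio $\norm{v}_1/\norm{v}_2$ then reduces the required uniform bound to a countable union of shells, each of which is controlled by the above display; rescaling removes the assumption $\norm{v}_2 \leq 1$ and yields the conclusion of the lemma after rearranging and using $v^T \CovMat v \geq \sigmin(\CovMat) \norm{v}_2^2$.

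The main technical obstacle is keeping the sharp $\max\{\sigmin(\CovMat),\,\xnoise^4/\sigmin(\CovMat)\}$ prefactor: the sub-exponential Bernstein step produces both a linear $\xnoise^2 t$ and a quadratic $\xnoise^4 t^2$ deviation regime, and balancing these against the Gaussian-width contribution $\alpha \sqrt{(\log \pdim)/\numobs}$ requires a careful choice of the deviation parameter $t$ as a function of $\alpha$, together with an AM-GM step to absorb the resulting cross term into the quadratic $\ell_2$ piece on the right-hand side. Everything else is a routine application of sub-Gaussian concentration and the union bound across the peeling levels.
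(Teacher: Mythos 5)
Your proof is essentially correct, but it follows a genuinely different route than the paper. The paper's proof is short and leans on existing results of Loh and Wainwright: it sets $\Gamma = X^TX/\numobs - \CovMat$, chooses a sparsity level $\kdim \asymp \frac{\numobs}{\log\pdim}\min\{\sigmin^2(\CovMat)/\xnoise^4, 1\}$, and then (i) controls $|v^T \Gamma v|$ uniformly over $2\kdim$-sparse unit vectors by a union bound over supports plus a covering argument and sub-exponential Bernstein concentration (their Lemma 15), and (ii) invokes their deterministic reduction principle (their Lemma 12, in the spirit of Rudelson--Zhou) to upgrade this sparse-vector bound to $|v^T\Gamma v| \le \frac{\sigmin(\CovMat)}{2}\big(\norm{v}_2^2 + \norm{v}_1^2/\kdim\big)$ for all $v \in \R^\pdim$; substituting $\kdim$ gives exactly the stated $\max\{\sigmin(\CovMat), \xnoise^4/\sigmin(\CovMat)\}$ prefactor. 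You instead take the Raskutti--Wainwright--Yu/Rudelson--Zhou style path: control the quadratic empirical process over $\mathbb{B}_2(1)\cap\mathbb{B}_1(\alpha)$ via its width ($\asymp \alpha\sqrt{\log\pdim}$) and generic chaining for sub-Gaussian quadratic processes, absorb the cross term by AM--GM (which is precisely where the $\max$ prefactor arises, as you note), and then peel over dyadic values of $\norm{v}_1/\norm{v}_2$ and rescale. Both routes deliver the same bound and failure probability; the paper's is more elementary (no chaining machinery, just a union bound over ${\pdim \choose 2\kdim}$ supports and a fixed-dimension discretization) and is nearly a two-step appeal to known lemmas, while your route requires the heavier tail bounds for sub-Gaussian quadratic processes but extends more readily to constraint sets other than the $\ell_1$-ball. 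Two small bookkeeping remarks on your sketch: the sample-size condition from Corollary~\ref{cor:lipschitz-samp} is not needed for your pointwise Bernstein step (which is in any case subsumed by the chaining step), but it does matter elsewhere---in the paper it guarantees $\kdim \ge 1$, and in your argument it is what lets you absorb the $\order(\log\log\pdim)$ union-bound cost of the peeling shells into the stated exponent $\const_1\numobs\min(\sigmin^2(\CovMat)/\xnoise^4,1)$.
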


\noindent Consequently, the local RSC~\eqref{EqnLRSC} condition holds
with
\begin{equation*}
\lossrsc = \frac{\sigmin(\CovMat) \psi(2 \xbound \radius_1)}{2} \quad
\mbox{and} \quad \slopfac = \const\, \frac{\log\pdim} {\numobs} \max
\left \{ \sigmin(\CovMat), \frac{\xnoise^4}
      {\sigmin(\CovMat)}\right\}.
\end{equation*}
Substituting these values in the general statement in
Theorem~\ref{thm:lipschitz-general} completes the proof of the
corollary.


\subsection{Proof of Theorem~\ref{thm:leastsquares}}
\label{sec:proof-leastsquares}

The main difference from the proof of Theorem~\ref{thm:lipschitz} is
that here we obtain improving bounds on the Lipschitz and sub-Gaussian
constants at each epoch. Recalling that $\norm{\iter{t} - \opt}_1 \leq
2\radius_i$ at epoch $i$, a little calculation shows that $\lips_i
\leq 2 \norm{\CovMat}_\infty \radius_i$, where $\norm{\CovMat}_\infty$
is the elementwise $\ell_\infty$ norm of $\CovMat$. Since $\CovMat$ is
positive-semidefinite, we can further conclude that
$\norm{\CovMat}_{\infty} \leq \CovMax$. Assuming further that
$\norm{\samplex}_\infty \leq \xbound$, we see that $\CovMax \leq
\xbound$. We further have the bound
\begin{align*}
  \norm{\stochgrad{t}}_\infty^2 & = \norm{(\ip{\samplex_t}{\iter{t}} -
    \sampley_t)\samplex_t}_\infty^2 \\ 
& \leq 2 \norm{\ip{\samplex_t}{\iter{t} - \opt}\samplex_t}_\infty^2 +
  2\norm{\samplex_t \samplew_t}_\infty^2 \\ 
& \leq 8 \xbound^4 \radius_i^2 + 2 \xbound^2 \samplew_t^2.
\end{align*} 
Since $\samplew_t \sim \normal{\gaussnoise^2}$, it is easy to check
that
\begin{equation*}
\E \Big[\exp(\samplew_t^2/\noise_0^2) \Big] \leq \exp(1), \quad
\mbox{where} \quad \noise_0 = e \, \gaussnoise \sqrt{\frac{2}{e^2 -
    1}} \leq 3 \gaussnoise,
\end{equation*}
so that Assumption~\ref{ass:subgauss} is satisfied with
\mbox{$\noise^2_i = c \left[\xbound^4 \radius_i^2 + \xbound^2
    \gaussnoise^2 \right]$,} for a universal constant $c$. Plugging
these quantities into our earlier bound from
Proposition~\ref{prop:mainepoch} on the epoch length, we observe that
with probability at least $1-3\exp(-\devcon_i^2/12)$ the number of
iterations needed at epoch $i$ is at most
\begin{align*}
  T_i & \leq \plaincon \,\left [ \frac{\spindex^2 \lossrsc^2}
    {\realrsc^4 \radius_i^2} \left(\xbound^4 \radius_i^2 +
    \gaussnoise^2 \xbound^2 \right) (\devcon_i^2 + \log \pdim) +
    \frac{\lossrsc \log \pdim}{\realrsc} \right] \\
& \leq c \, \left[ \frac{\spindex^2 \gaussnoise^2 \xbound^2
      \lossrsc^2}{\realrsc^2 \radius_i^2} (\devcon_i^2 + \log \pdim) +
    \frac{\spindex^2 \lossrsc^2}{\realrsc^4} (\xbound^4 + 1)
    (\devcon^2 + \log \pdim) \right]. 
\end{align*}
We can now mimic our earlier argument to obtain the total number of
iterations across all epochs.


\subsection{Proof of Theorem~\ref{thm:lipschitz-fixedepoch}}

The proof relies on an additional technical lemma in addition to our
earlier development. In order to prove the theorem, we observe that
the key argument in the convergence analysis of
Section~\ref{sec:mainproofs} was the ability to reduce the error to
the optimum $\opt$ by a multiplicative factor after every
epoch. However, with a fixed epoch length $T_0$, it may not be
possible to continue reducing the error once the number of epochs
becomes large enough. This is analogous to the difficulty we
encountered in the proof of Theorem~\ref{thm:lipschitz-general}, and
will again be addressed using Lemma~\ref{lemma:badepochell2}. We start
by deducing the epoch number $\kstar$ such that we successfully halve
the error at each epoch up to $\kstar$. We will then use other
arguments to demonstrate that the error does not increase much for
epochs $k > \kstar$, and this requires some delicate treatment of our
changing objective functions.  Specifically, given a fixed epoch
length $T_0 = \order(\log\pdim)$, we define
\begin{equation}
    \kstar \defn \sup\left\{i~:~ 2^{j/2+1} \leq
    \frac{\const\radius_1\lossrsc}{\spindex}
    \sqrt{\frac{T_0}{(\lips_j^2 + \noise_j^2)\log\pdim +
        \devcon^2\noise_j^2}} \quad \mbox{for all epochs}~j \leq
    i\right\}.
    \label{eqn:kstar}
\end{equation}
  
We start with a simple lemma, showing that
Algorithm~\ref{alg:epochdualavg} run with fixed epoch lengths $T_0$
has the desired behavior for the first $\kstar$ epochs.
\begin{lemma}
\label{lemma:goodepoch}
Suppose that $T_0 = \order(\log\pdim)$ and define $\kstar$ based on
equation~\eqref{eqn:kstar}. Then we have
\begin{equation*}
\norm{\proxcenter{k} - \opt}_1 \leq \radius_k \quad \mbox{and} \quad
\norm{\proxcenter{k} - \epochopt{k}}_1 \leq \radius_k \qquad \mbox{for
  all $1 \leq k \leq \kstar+1$}
\end{equation*}
with probabilty at least $1 - 3k\exp(-\devcon^2/12)$.  Under the same
conditions, there is a universal constant $c$ such that
\begin{equation*}
 \norm{\proxcenter{k} - \opt}_2 \leq
 \const\frac{\radius_k}{\sqrt{\spindex}} \quad \mbox{and} \quad
 \norm{\proxcenter{k} - \epochopt{k}}_2 \leq
 \const\frac{\radius_k}{\sqrt{\spindex}} \quad \mbox{ for all $2 \leq
   k \leq \kstar + 1$.}
\end{equation*}
\end{lemma}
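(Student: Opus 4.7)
The plan is to proceed by induction on the epoch index $k$, leveraging Proposition~\ref{prop:mainepoch} within each epoch and exploiting the definition~\eqref{eqn:kstar} of $\kstar$ to ensure that the fixed epoch length $T_0$ suffices to halve the $\ell_1$ radius at each step for $k \leq \kstar$. Since Theorem~\ref{thm:lipschitz-fixedepoch} operates under the simplifying assumptions $\slopfac = 0$ and $\realrsc = \lossrsc$, the tolerance term in the local RSC condition vanishes, Lemma~\ref{lemma:rsc-simple} collapses to the clean bound $\tfrac{\lossrsc}{2}\norm{\DelBarAvg(T_0)}_2^2 \leq f_k(\xavg(T_0)) - f_k(\epochopt{k})$, and $\totalslacksq$ reduces to $\coneslack^2/\spindex$, which is zero whenever $\opt$ is exactly $\spindex$-sparse on $\Sset$.

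For the base case $k = 1$, the initialization $\proxcenter{1} = 0$ together with the choice $\radius_1 \geq \norm{\opt}_1$ immediately gives $\norm{\proxcenter{1} - \opt}_1 \leq \radius_1$, and the companion bound $\norm{\proxcenter{1} - \epochopt{1}}_1 \leq \radius_1$ follows from triangle inequality after invoking Lemma~\ref{lemma:opterror}~\eqref{EqnOptEllOne} and verifying that $\spindex \regpar_1/\lossrsc$ is of order $\radius_1$ under the parameter settings~\eqref{eqn:paramsettings-simple}. For the inductive step, I would assume $\norm{\proxcenter{k} - \opt}_\pval \leq \radius_k$ for some $k \leq \kstar$, which activates Proposition~\ref{prop:mainepoch} at epoch $k$. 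The simplified form~\eqref{eqn:ell1bound} then yields
\begin{equation*}
\norm{\xavg(T_0) - \opt}_1^2 \leq \plaincon \left[\frac{\spindex \radius_k}{\lossrsc \sqrt{T_0}}\left(\sqrt{\proxbound(\lips_k^2 + \noise_k^2)} + \devcon_k \noise_k\right) + \frac{\radius_k^2 \proxbound}{T_0}\right].
\end{equation*}
The definition~\eqref{eqn:kstar} of $\kstar$ is chosen precisely so that, for every $j \leq \kstar$, the first bracketed term is at most $\radius_k^2/4$; combined with $T_0 = \order(\log\pdim)$ and $\proxbound = e\log\pdim$ to control the second term, this produces $\norm{\proxcenter{k+1} - \opt}_1^2 \leq \radius_k^2/2 = \radius_{k+1}^2$, which advances the induction through $k = \kstar + 1$.

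For the $\ell_2$ bounds, I would combine the simplified form of Lemma~\ref{lemma:rsc-simple} with the function value bound~\eqref{eqn:fvaluebound-simple} from Proposition~\ref{prop:mainepoch} to conclude $\norm{\xavg(T_0) - \epochopt{k}}_2^2 \leq \const \radius_k^2/\spindex$. Applying triangle inequality with Lemma~\ref{lemma:opterror}~\eqref{EqnOptEllTwo}, which yields $\norm{\epochopt{k} - \opt}_2 = \order(\sqrt{\spindex}\regpar_k/\lossrsc) = \order(\radius_k/\sqrt{\spindex})$ under the parameter choices~\eqref{eqn:paramsettings-simple}, then produces both $\norm{\proxcenter{k+1} - \opt}_2 \leq \const \radius_{k+1}/\sqrt{\spindex}$ and $\norm{\proxcenter{k+1} - \epochopt{k+1}}_2 \leq \const \radius_{k+1}/\sqrt{\spindex}$ after one more triangle inequality involving the next epoch's optimum. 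The failure probability is then handled via a union bound over the first $k$ epochs, each contributing at most $3\exp(-\devcon^2/12)$ by Proposition~\ref{prop:mainepoch}.

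The main obstacle is pinning down the absolute constants inside the supremum defining $\kstar$ so that they exactly match the multiplicative constants appearing in Proposition~\ref{prop:mainepoch}'s bound~\eqref{eqn:ell1bound}, thereby ensuring that a genuine factor-of-two reduction of $\radius_k^2$ is achieved at each successful epoch. A secondary subtlety is that the lemma extends all bounds up to index $k = \kstar + 1$: by the supremum definition of $\kstar$, the halving condition still holds at epoch $\kstar$, so the output $\proxcenter{\kstar+1}$ inherits the feasibility bound, but one must take care not to invoke the inductive step beyond this point, since the condition in~\eqref{eqn:kstar} may fail at epoch $\kstar + 1$ itself.
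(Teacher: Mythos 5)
Your proposal is correct and follows essentially the same route as the paper's own proof: induction on the epoch index, with the $\ell_1$ halving at each epoch $k\leq\kstar$ obtained from the bound~\eqref{eqn:ell1bound} of Proposition~\ref{prop:mainepoch} together with the definition~\eqref{eqn:kstar} of $\kstar$ and the assumption $T_0=\order(\proxbound)$, and the $\ell_2$ bounds obtained by combining the function-value bound~\eqref{eqn:fvaluebound} with the (simplified, $\slopfac=0$) strong convexity and the error bounds of Lemma~\ref{lemma:opterror} for $\epochopt{k}$, with a union bound over epochs for the failure probability. The paper's argument is exactly this, so no further comparison is needed.
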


The key challenge in proving the theorem is understanding the behavior
of the method after the first $\kstar$ epochs. Since the algorithm
cannot guarantee that the error to $\opt$ will be halved for epochs
beyond $\kstar$, we can no longer guarantee that $\opt$ will even be
feasible at the later epochs. However, this is exactly the same
problem that arose in the proof of
Theorem~\ref{thm:lipschitz-general}. Specifically, we can use
Lemma~\ref{lemma:badepochell2} in order to control the error after the
first $\kstar$ epochs.

In order to check the condition on epoch lengths in
Lemma~\ref{lemma:badepochell2}, we begin by observing that by the
definition~\eqref{eqn:kstar} of $\kstar$, we know that
\begin{equation}
\label{eqn:kstarbound}
\const \frac{\spindex}{\lossrsc}
\sqrt{\frac{\proxbound(\lips_{\kstar}^2 + \noise_{\kstar}^2) +
    \devcon^2 \noise_{\kstar}^2}{T_0}} \leq \radius_1 2^{-\kstar/2-1}
= \frac{\radius_{\kstar+1}}{2}.
  \end{equation}
Since we assume that the constants $\lips_k, \noise_k$ are decreasing
in $k$, the inequality also holds for all $k \geq \kstar+1$, so that
Lemma~\ref{lemma:badepochell2} applies in our setup here.  We further
observe that the setting of the epoch lengths in
Theorem~\ref{thm:lipschitz-fixedepoch} ensures that the total number
of epochs we perform is
\begin{equation*}
  k_0 = \log\left(\frac{\radius_1\lossrsc}{\spindex}
  \sqrt{\frac{\totiters}{(\lips^2 + \noise^2)\log\pdim +
      \devcon^2\noise^2}}\right).
\end{equation*}

Now we have one of two possibilities. Either $k_0 \leq \kstar$ or $k_0
\geq \kstar$. In the first case, Lemma~\ref{lemma:goodepoch} ensures
the error bound $\norm{\proxcenter{k_0} - \opt}_2^2 \leq \const\,
\radius_{k_0}^2/\spindex$ applies. In the second case, we appeal to
Lemma~\ref{lemma:badepochell2} and obtain an error bound of $\const\,
\radius_{\kstar}^2/\spindex$. The proof is completed by substituting
our choices of $k_0$ and $\kstar$ in these bounds.


\section{Simulations}
\label{sec:sims}

In this section, we present the results of various numerical
simulations that illustrate different aspects of our theoretical
convergence results. We focus on least-squares regression, described
in more detail in Example~\ref{example:leastsquares}. Specifically, we
generate samples $(\samplex_t, \sampley_t)$ with each coordinate of
$\samplex_t$ distributed as $\mbox{Unif}[-\xbound, \xbound]$ and
$\sampley_t = \ip{\opt}{\samplex_t} + \samplew_t$. We pick $\opt$ to
be sparse vector with $\spindex = \lceil \log \pdim \rceil$ non-zero
co-ordinates, and $\samplew_t \sim \normal{\gaussnoise^2}$ where
$\gaussnoise^2 = 0.5$. Given an iterate $\iter{t}$, we generate a
stochastic gradient of the expected loss~\eqref{eqn:objective} at
$(\samplex_t, \sampley_t)$. For the $\ell_1$-norm, we pick the sign
vector of $\iter{t}$, with $0$ for any component that is zero, a
member of the $\ell_1$-sub-differential.

Our first set of results evaluate Algorithm~\ref{alg:epochdualavg}
against other stochastic optimization baselines, where all algorithms
are given complete knowledge of problem parameters.  In this first set
of simulations, we terminate epoch $i$ once $\norm{\proxcenter{i+1} -
  \opt}_p^2 \leq \norm{\proxcenter{i} - \opt}_p^2 / 2$, which ensures
that $\opt$ remains feasible throughout, and tests the performance of
Algorithm~\ref{alg:epochdualavg} in the most favorable scenario. We
compare the algorithm against two baselines. The first baseline is the
regularized dual averaging (RDA) algorithm~\cite{Xiao10}, applied to
the regularized objective~\eqref{eqn:epochregobj} with $\regpar = 4
\gaussnoise \sqrt{\log\pdim/\totiters}$, which is the statistically
optimal regularization parameter with $\totiters$ samples. We use the
same prox-function $\prox(\param) = \frac{\norm{\param}_p^2}{2(p-1)}$,
so that the theory~\cite{Xiao10} for RDA predicts a convergence rate
of $\order(s\sqrt{\log\pdim/\totiters})$.  Our second baseline is the
stochastic gradient (SGD) algorithm, a method that exploits the strong
convexity but not the sparsity of the
problem~\eqref{eqn:objective}. Since the squared loss is not uniformly
Lipschitz, we impose an additional constraint $\norm{\param}_1 \leq
\radius_1$, without which the algorithm does not converge. The results
of this comparison are shown in Figure~\ref{fig:compare-simple}, where
we present the error $\norm{\iter{t} - \opt}_2^2$ averaged over 5
random trials. We observe that RADAR comprehensively outperforms both
the baselines, confirming the predictions of our theory.

\begin{figure}[ht]
  \centering
  \begin{tabular}{ccc}
  \widgraph{\FIGSIZE}{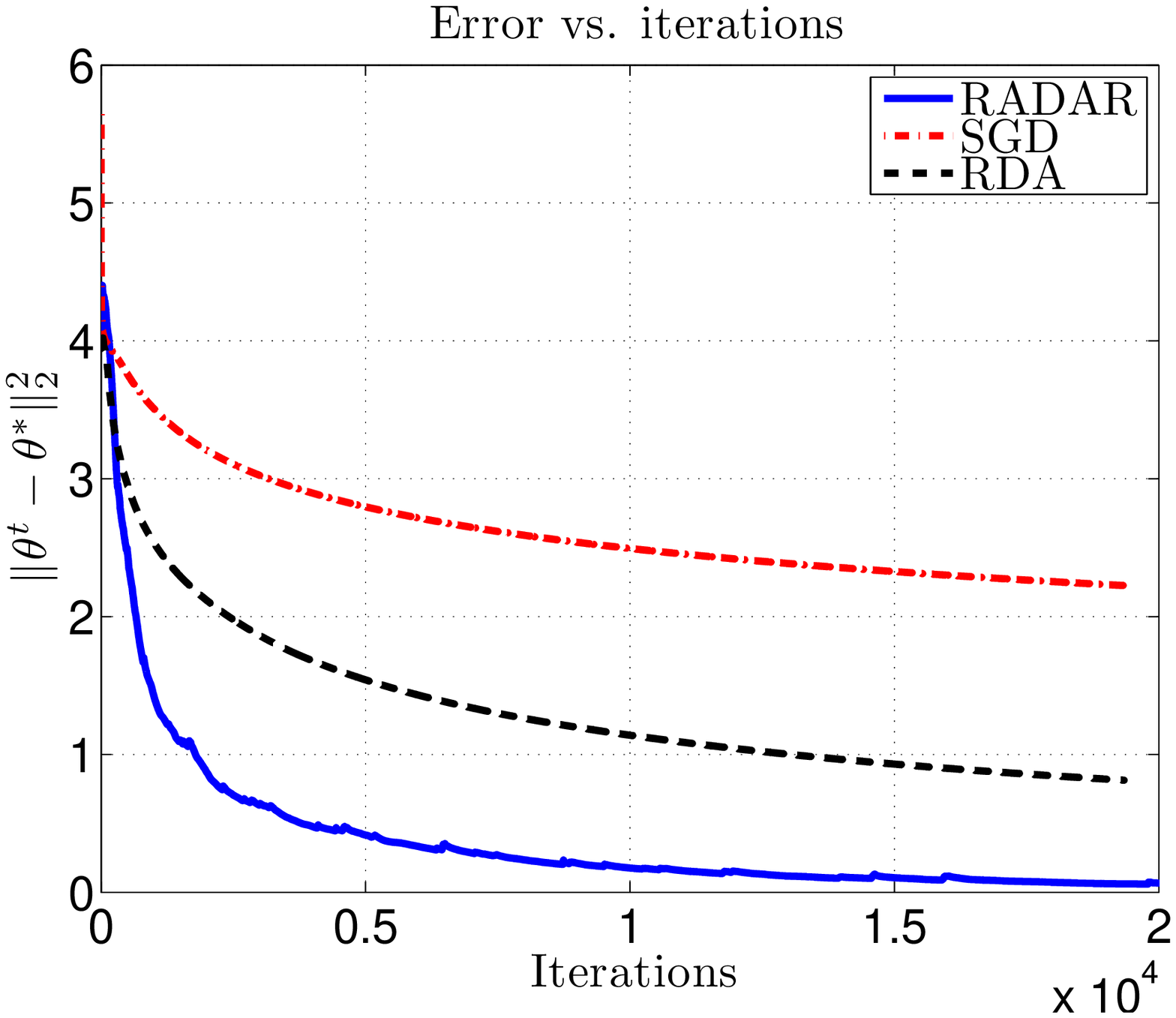}
  & \hspace*{.2in} &
  \widgraph{\FIGSIZE}{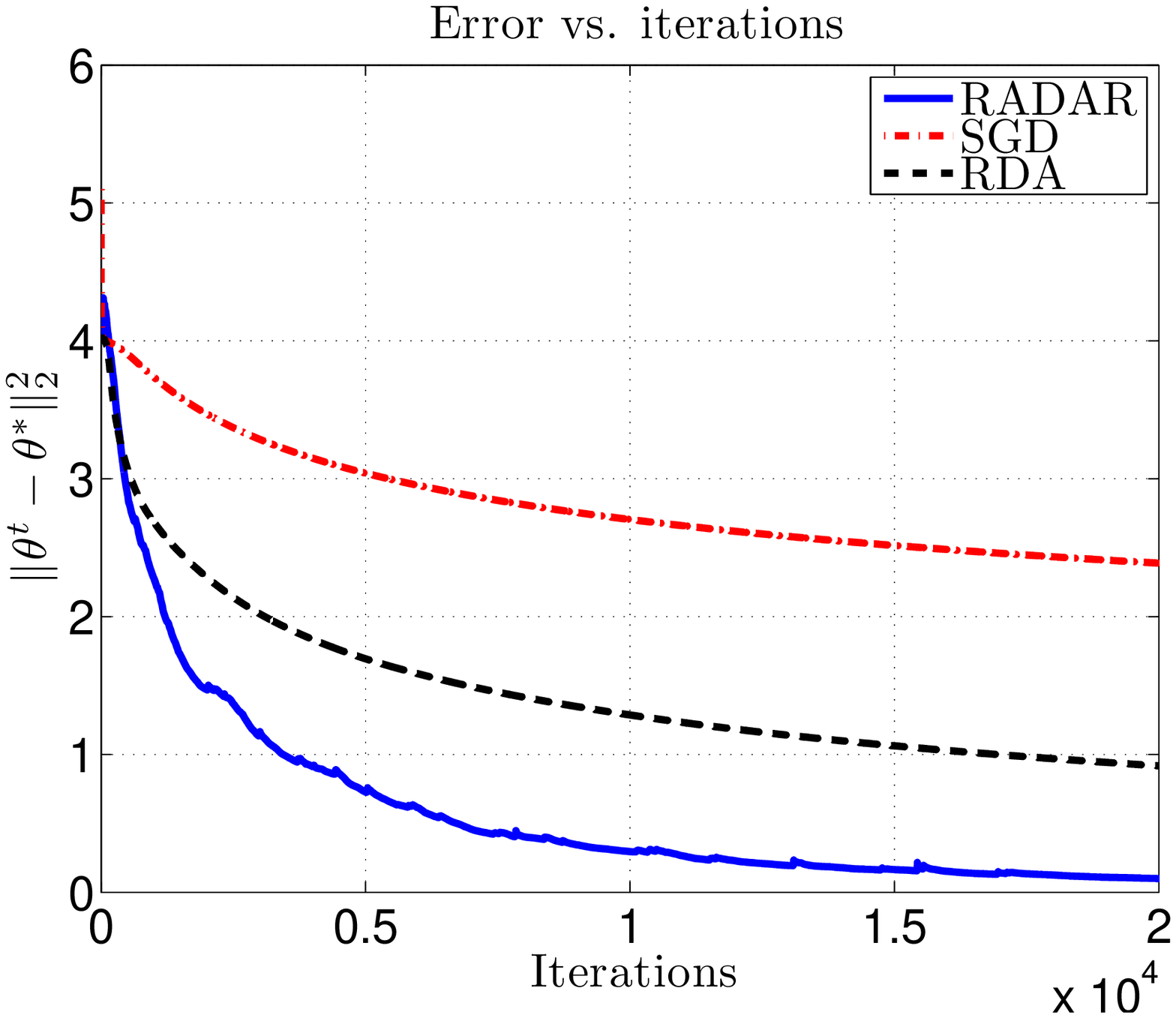}\\ $\pdim
  = 20000$ & & $\pdim = 40000$
  \end{tabular}
  \caption{A comparison of RADAR with the RDA and SGD algorithms for
    $\pdim = 20000$ (left) and $\pdim = 40000$ (right). We plot
    $\norm{\iter{t} - \opt}_2^2$ averaged over 5 random trials versus
    the number of iterations.}
  \label{fig:compare-simple}
\end{figure}

Our second set of results provides comparisons to algorithms that are
tailored to exploit sparsity.  Our first baseline here is the approach
that we described in our remarks following
Theorem~\ref{thm:lipschitz}. In this approach, we use the same
multi-step strategy as Algorithm~\ref{alg:epochdualavg} but keep
$\regpar$ fixed. We refer to this as Epoch Dual Averaging (henceforth
EDA), and again employ $\regpar = 4 \gaussnoise \sqrt{(\log \pdim)/
  \totiters}$ with this strategy. To maintain a fair comparison with
the RADAR algorithm, our epochs are again terminated by halving of the
squared $\ell_\pval$-error measured relative to $\opt$.  Finally, we
also evaluate the version of our algorithm with constant epoch
lengths, as analyzed in Theorem~\ref{thm:lipschitz-fixedepoch} using
epochs of length $\log(\totiters)$, and henceforth referred to as
RADAR-CONST.  As shown in Figure~\ref{fig:compare-fancy}, the
RADAR-CONST has relatively large error during the initial epochs,
before converging quite rapidly, a
phenomenon consistent with our theory.\footnote{To clarify, the epoch
  lengths in RADAR-CONST are set large enough to guarantee that we can
  attain an overall error bound of $\order(1/\totiters)$, meaning that
  the initial epochs for RADAR-CONST are much longer than for
  RADAR. Thus, after roughly 500 iterations, RADAR-CONST has done only
  2 epochs and operates with a crude constraint set
  $\Parset(\radius_1/4)$.  During epoch $i$, the step size scales
  proportionally to $R_i/\sqrt{t}$, where $t$ is the iteration number
  within the epoch; hence, when $\radius_i$ is large, the relatively
  large initial steps in an epoch can take us to a bad solution even
  when we start with a good solution $\proxcenter{i}$. As $\radius_i$
  decreases further with more epochs, this effect is mitigated and the
  error of RADAR-CONST does rapidly decrease like our theory
  predicts.}
Even though the RADAR-CONST method does not use the knowledge of
$\opt$ to set epochs, all three methods exhibit the same eventual
convergence rates, with RADAR (set with optimal epoch lengths)
performing the best.  Although RADAR-CONST is very slow in initial
iterations, its convergence rate remains competitive with EDA (even
though EDA \emph{does} exploit knowledge of $\opt$), but is worse than
RADAR as expected.

Overall, our experiments demonstrate that RADAR and RADAR-CONST have
practical performance consistent with our theoretical predictions.
Although optimal epoch length setting is not too critical for our
approach, better data-dependent empirical rules for determining epoch
lengths remains an interesting question for future research.  The
relatively poorer performance of EDA demonstrates the importance of
our decreasing regularization schedule.

\begin{figure}[ht]
\centering
\begin{tabular}{ccc}
\widgraph{\FIGSIZE}{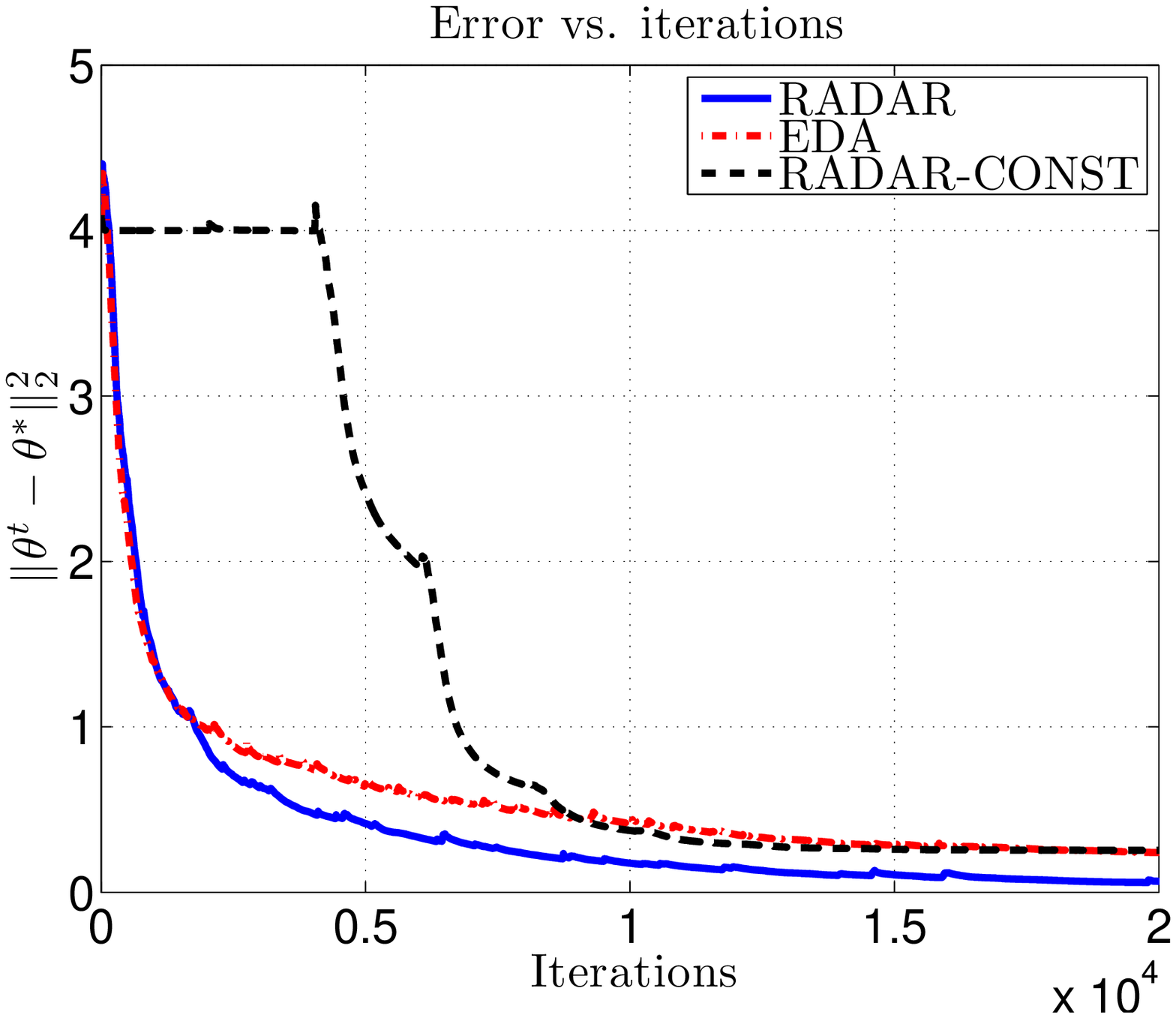}
& \hspace*{.2in} &
\widgraph{\FIGSIZE}{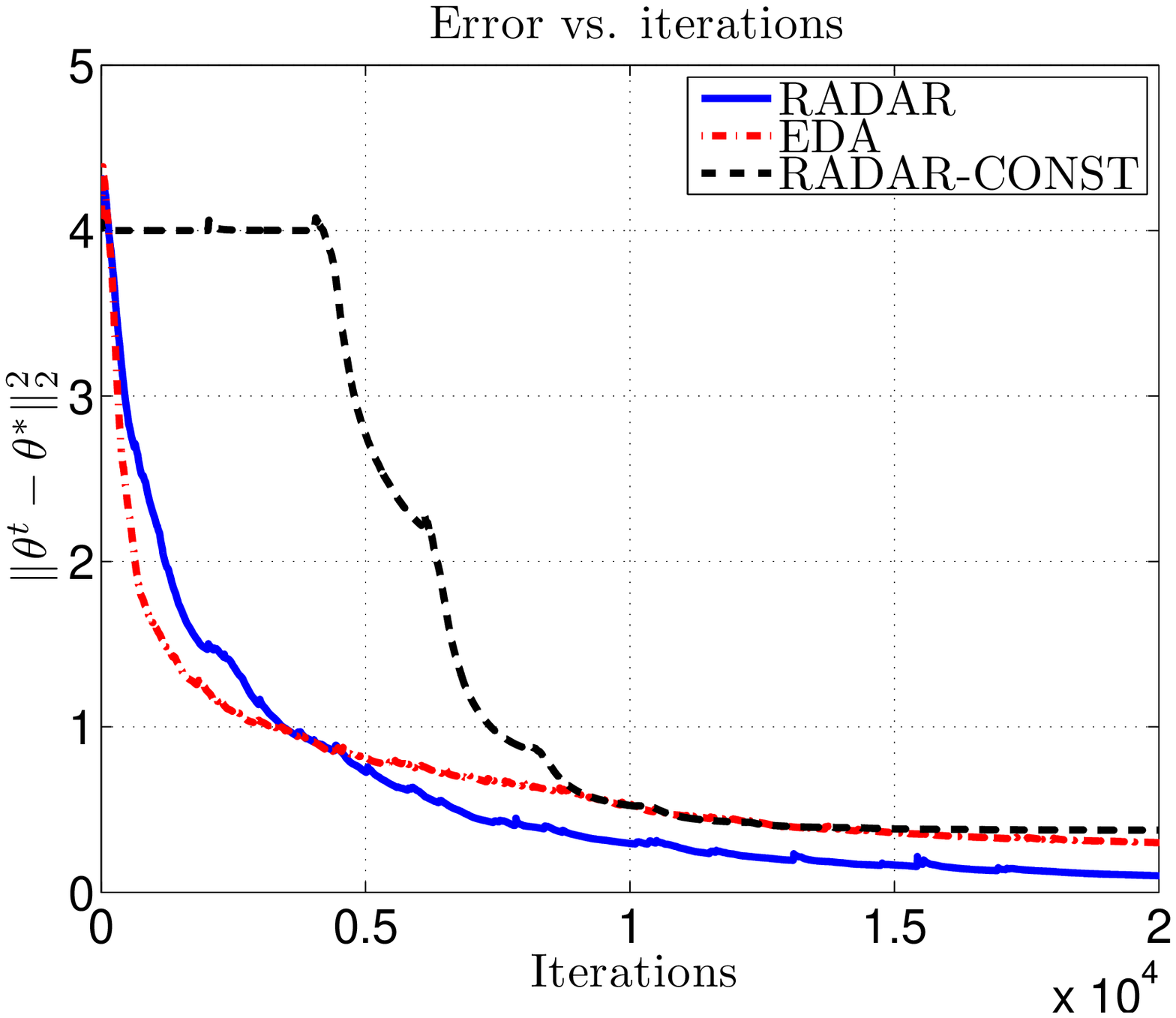}\\ $\pdim =
20000$ & & $\pdim = 40000$
\end{tabular}
\caption{A comparison of RADAR with EDA and RADAR-CONST for $\pdim =
  20000$ (left) and $\pdim = 40000$ (right). We plot $\norm{\iter{t} -
    \opt}_2^2$ averaged over 5 random trials versus the number of
  iterations.}
  \label{fig:compare-fancy}
\end{figure}

\section{Discussion}

In this paper, we presented an algorithm that is able to take
advantage of the strong convexity and sparsity conditions that are
satsified by many common problems in machine learning. Our algorithm
is simple and efficient to implement, and for a $\pdim$-dimensional
objective with an $\spindex$-sparse optima, it achieves the
minimax-optimal convergence rate $\order(\spindex\log\pdim/T)$.  We
also demonstrate optimal convergence rates for problems that have
weakly sparse optima, with implications for problems such as sparse
linear regression and sparse logistic regression. While we focus our
attention exclusively on sparse vector recovery due to space
constraints, the ideas naturally extend to other structures such as
group sparse vectors and low-rank matrices~\cite{NegRavWaiYu09}. It
would be interest to study similar developments for other algorithms
such as mirror descent or Nesterov's accelerated gradient methods,
leading to multi-step variants of those methods with optimal
convergence rates in our setting.

\subsection*{Acknowledgements}

The work of all three authors was partially supported by ONR MURI
grant N00014-11-1-0688 to MJW.  In addition, AA was partially
supported by a Google Fellowship, and SN was partially supported by the
Yahoo KSC award.


\appendix

\section{Closed-form updates}
\label{AppClosed}

In this appendix, we derive a closed form expression for the
update~\eqref{eqn:regdualavg} when $\Parset = \R^\pdim$. Recalling our
definition of the prox-function~(\ref{EqnDefnProx}), the constraint
$\iter{t+1} \in \Parset(\radius_i)$ can be rewritten as
\mbox{$\prox_{\proxcenter{i}, \radius_i}(\param) \leq 2 (\pval-1)$.}
We now form the Lagrangian at iteration $t+1$
\begin{equation*}
  \mystep{t+1} \ip{\diter{t+1}}{\param} + \prox_{\proxcenter{i},
    \radius_i} (\param) + \xi(\prox_{\proxcenter{i}, \radius_i}
  (\param) - 2 (\pval-1) ),
\end{equation*}
where $\xi \geq 0$ is the Lagrangian parameter. The first-order
optimality condition for the Lagrangian allows us to conclude that
$\mystep{t+1} \diter{t+1} + \nabla \prox_{\proxcenter{i},
  \radius_i}(\iter{t+1})(1 + \xi) = 0$, so that the iterate at time
$t+1$ is given by
\begin{equation*}
  \iter{t+1} = \nabla \prox^*_{\proxcenter{i}, \radius_i} \left(-
  \frac{\mystep{t+1} \diter{t+1}} {1 + \xi} \right),
\end{equation*}
where $\prox_{\proxcenter{i}, \radius_i}^*(\dualpar)$ denotes the
Fenchel conjugate~\cite{HiriartUrrutyLe96}. Recalling the form of our
prox-function, we have

\begin{equation*}
  \prox_{\proxcenter{i}, \radius_i}(\param) =
  \frac{1}{2(p-1)\radius_1^2}\norm{\param - \proxcenter{i}}_p^2 \quad
  \mbox{and} \quad \prox_{\proxcenter{i}, \radius_i}^* (\dualpar) =
  \ip{\proxcenter{i}}{\dualpar} + \frac{(p-1)\radius_i^2}{2}
  \norm{\dualpar}_q^2,
\end{equation*}
where $q = p/(p-1)$ is the conjugate exponent to $p$. This is a
straightforward consequence of the Fenchel duality of $\ell_p$ norms
(e.g., see Example 6.0.2~~\cite{HiriartUrrutyLe96}).

We can now take the gradient of the dual function to obtain the
following closed form expression
\begin{equation*}
  \iter{t+1} = \proxcenter{i} +
  \frac{\radius_i^2(p-1)\mystep{t+1}}{(1+\xi)} |\diter{t+1}|^{(q-1)}
  \mbox{sign}(\diter{t+1}) \norm{\diter{t+1}}_q^{(2-q)}. 
\end{equation*}
The value of $\xi$ can now be obtained by backsubstitution in the
constraint $\param \in \Parset(\radius_i)$.  Doing so and performing
some algebra yields
\begin{equation*}
  \xi \defn \max \left \{0, (\pval-1) \mystep{t+1}
  \norm{\diter{t+1}}_q\radius_i - 1\right\},
\end{equation*}
where $|\diter{t+1}|^{(q-1)}$ refers to taking the absolute values and
exponents elementwise and $q = p/(p-1)$ is the conjugate exponent to
$\pval$.


\section{Proofs for convergence within a single epoch}
\label{app:within-epoch}

In this appendix, we prove various results on convergence behavior
within a single epoch, including Lemmas~\ref{lemma:opterror}
and~\ref{lemma:rsc-simple} as well as Proposition~\ref{prop:mainepoch}.

\subsection{Proof of Lemma~\ref{lemma:opterror}}

By the optimality of $\epochopt{i}$, we have
\begin{align}
\label{EqnRaventosInitial}
\Lossbar(\epochopt{i}) + \regpar_i\norm{\epochopt{i}}_1 & \leq
\Lossbar(\opt) + \regpar_i\norm{\opt}_1.
\end{align}
From the local RSC assumption~\eqref{EqnLRSC}, for any vector $\param$
feasible during epoch $i$, we have the lower bound
\begin{align} 
\Lossbar(\param) & \geq \Lossbar(\opt) + \ip{\nabla
  \Lossbar(\opt)}{\param - \opt} + \frac{\lossrsc}{2} \norm{\param -
  \opt}_2^2 - \slopfac \norm{\param - \opt}_1^2 \nonumber \\
\label{EqnRaventos}
& \stackrel{(i)}{\geq} \Lossbar(\opt) + \frac{\lossrsc}{2}
\norm{\param - \opt}_2^2 - \slopfac \norm{\param - \opt}_1^2,
\end{align}
where step (i) follows since $\opt$ minimizes $\Lossbar(\param)$.
Applying inequality~\eqref{EqnRaventos} with $\param = \epochopt{i}$
and combining with the initial bound~\eqref{EqnRaventosInitial} yields
\begin{align*}
\Lossbar(\epochopt{i}) + \regpar_i\norm{\epochopt{i}}_1 & \leq
\Lossbar(\epochopt{i}) - \frac{\lossrsc}{2} \norm{\epochopt{i} -
  \opt}_2^2 + \slopfac \norm{\epochopt{i} - \opt}_1^2 +
\regpar_i\norm{\opt}_1.
\end{align*}
Using the definition $\opt = \epochopt{i} + \DelIt{i}$ and triangle
inequality, we can further simplify to obtain
 \begin{align*}
  \regpar_i\norm{\epochopt{i}}_1 &\leq \regpar_i\norm{\opt}_1 -
  \frac{\lossrsc}{2} \norm{\opt - \epochopt{i}}_2^2 + \slopfac
  \norm{\epochopt{i} - \opt}_1^2 \; \leq \; \regpar_i \norm{
    \epochopt{i}}_1 + \regpar_i \norm{\DelIt{i}}_1 -
  \frac{\lossrsc}{2} \norm{\DelIt{i}}_2^2 + \slopfac
  \norm{\DelIt{i}}_1^2,
\end{align*}
Rearranging the terms above yields
\begin{align*}
\frac{\lossrsc}{2}\norm{\DelIt{i}}_2^2 & \leq \regpar_i
\norm{\DelIt{i}}_1 + \slopfac \norm{\DelIt{i}}_1^2 \leq 2
\sqrt{\spindex} \regpar_i \norm{\DelIt{i}}_2 
+ 2 \regpar_i \coneslack
+ 8 \spindex \slopfac \norm{\DelIt{i}}_2^2 + 8 \slopfac \coneslack^2.
\end{align*}
Some elementary algebra then yields that $\epsilon \defn
\norm{\DelIt{i}}_2$ satisfies the quadratic inequality
\begin{align*}
\frac{1}{2} \big \{ \lossrsc - 16 \spindex \slopfac \big \} \epsilon^2
- \big \{ 2 \sqrt{\spindex} \regpar_i \big \} \epsilon - \big \{2
\regpar_i \coneslack + 8 \slopfac \coneslack^2 \big \} & \leq 0,
\end{align*}
which then implies the $\ell_2$-error bound~\eqref{EqnOptEllTwo}. \\

In order to establish the $\ell_1$-error bound~\eqref{EqnOptEllOne},
we require an auxiliary lemma that allows us to translate between
the $\ell_2$ and $\ell_1$-norms:
\begin{lemma}
\label{lemma:cone}
For any pair of vectors $\param, \paramother \in \Parset$, suppose
that $\norm{\paramother}_1 \leq \norm{\param}_1 + \epsilon$ for some
$\epsilon \geq 0$.  Then for any set $\Tset \subseteq \{1, 2, \ldots,
\pdim\}$, the vector $\Delta \defn \param - \paramother$ satisfies the
inequality
\begin{align}
\norm{\Delta_{\TsetComp}}_1 \leq & \norm{\Delta_{\Tset}}_1 + 2
\norm{\param_{\TsetComp}}_1 + \epsilon.
\end{align}
\end{lemma}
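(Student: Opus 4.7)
The plan is to establish the stated cone-type inequality by decomposing both $\param$ and $\paramother$ along $\Tset$ and $\TsetComp$ and then exploiting the assumed inequality $\norm{\paramother}_1 \leq \norm{\param}_1 + \epsilon$. The key observation is that the $\ell_1$ norm is separable across coordinates, so I can write $\norm{\paramother}_1 = \norm{\paramother_{\Tset}}_1 + \norm{\paramother_{\TsetComp}}_1$ and similarly for $\param$. Then using $\paramother = \param - \Delta$ together with the reverse triangle inequality on each piece will produce lower bounds on $\norm{\paramother_{\Tset}}_1$ and $\norm{\paramother_{\TsetComp}}_1$ in terms of $\param_{\Tset}$, $\param_{\TsetComp}$, $\Delta_{\Tset}$, and $\Delta_{\TsetComp}$.

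More concretely, the first step is to lower bound $\norm{\paramother_{\Tset}}_1 \geq \norm{\param_{\Tset}}_1 - \norm{\Delta_{\Tset}}_1$ and $\norm{\paramother_{\TsetComp}}_1 \geq \norm{\Delta_{\TsetComp}}_1 - \norm{\param_{\TsetComp}}_1$ (the second bound is where the factor $2\norm{\param_{\TsetComp}}_1$ is eventually going to come from, since we are effectively adding $\norm{\param_{\TsetComp}}_1$ to both sides). The second step is to plug these two lower bounds into the hypothesis $\norm{\paramother}_1 \leq \norm{\param}_1 + \epsilon$, where the right-hand side is also split as $\norm{\param_{\Tset}}_1 + \norm{\param_{\TsetComp}}_1 + \epsilon$.

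The third and final step is purely algebraic rearrangement: the $\norm{\param_{\Tset}}_1$ terms cancel on the two sides, leaving an inequality of the form $-\norm{\Delta_{\Tset}}_1 + \norm{\Delta_{\TsetComp}}_1 - \norm{\param_{\TsetComp}}_1 \leq \norm{\param_{\TsetComp}}_1 + \epsilon$, which after moving the $\norm{\Delta_{\Tset}}_1$ term and collecting the $\norm{\param_{\TsetComp}}_1$ terms gives exactly the claimed bound $\norm{\Delta_{\TsetComp}}_1 \leq \norm{\Delta_{\Tset}}_1 + 2\norm{\param_{\TsetComp}}_1 + \epsilon$. There is no real obstacle here---the entire argument is an exercise in careful application of the triangle inequality and coordinate-wise decomposition of the $\ell_1$ norm---but one should be careful with signs in the reverse triangle inequality so that the factor of $2$ in front of $\norm{\param_{\TsetComp}}_1$ comes out correctly rather than an accidental $1$ or $3$.
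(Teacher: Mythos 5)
Your proposal is correct and follows essentially the same route as the paper's own proof: split the $\ell_1$-norm over $\Tset$ and $\TsetComp$, apply the reverse triangle inequality to get $\norm{\paramother_{\Tset}}_1 \geq \norm{\param_{\Tset}}_1 - \norm{\Delta_{\Tset}}_1$ and $\norm{\paramother_{\TsetComp}}_1 \geq \norm{\Delta_{\TsetComp}}_1 - \norm{\param_{\TsetComp}}_1$, then substitute into the hypothesis and rearrange so the $\norm{\param_{\Tset}}_1$ terms cancel and the two $\norm{\param_{\TsetComp}}_1$ terms combine into the factor of~$2$. No differences worth noting.
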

\begin{proof}
Since $\Tset$ and $\TsetComp$ are disjoint, the bound assumed in the
lemma statement can be written
\begin{equation}
\label{EqnGramona}
\norm{\paramother_\Tset}_1 + \norm{\paramother_{\TsetComp}}_1 \leq
\norm{\param_\Tset}_1 + \norm{\param_{\TsetComp}}_1 + \epsilon.
\end{equation} 
Since $\paramother = \Delta + \param$ by definition, triangle
inequality implies that 
\begin{equation*}
\|\paramother_\Tset\|_1 \geq \|\param_\Tset\|_1 - \|\Delta_\Tset\|_1
\quad \mbox{and} \quad \|\paramother_{\TsetComp}\|_1 \geq
    \|\Delta_{\TsetComp}\|_1 - \|\param_{\TsetComp}\|_1. 
\end{equation*}
Substituting into the bound~\eqref{EqnGramona}, we obtain
\begin{align*}
\norm{\Delta_{\TsetComp}}_1 - \norm{\param_{\TsetComp}} +
\norm{\param_\Tset}_1 - \norm{\Delta_\Tset}_1 \leq
\norm{\param_\Tset}_1 + \|\param_{\TsetComp}\|_1 + \epsilon,
\end{align*}
and rearranging terms completes the proof.
\end{proof}

In particular, applying Lemma~\ref{lemma:cone} to the pair $\param =
\epochopt{i}$, and $\paramother = \opt$ with the tolerance $\epsilon =
0$ and the subset $\Tset = \Sset$, we find that the error vector
$\DelIt{i} \defn \epochopt{i} - \opt$ satisfies the bounds
\begin{align}
\label{eqn:optcone}  
\norm{(\DelIt{i})_{\SsetComp}}_1 & \leq \norm{(\DelIt{i})_S}_1 + 2
\coneslack.
\end{align}
Consequently, we have
\begin{align}
\label{EqnKiva}
\norm{\DelIt{i}}_1 & = \norm{(\DelIt{i})_\Sset} +
\norm{(\DelIt{i})_{\SsetComp}}_1 \; \leq \; 2 \norm{(\DelIt{i})_S}_1 +
2 \coneslack \; \leq \; 2 \sqrt{\spindex} \norm{(\DelIt{i})_S}_2 + 2
\coneslack,
\end{align}
where the final step uses the fact that $(\DelIt{i})_\Sset$ is an
$\spindex$-vector.


\subsection{Proof of Proposition~\ref{prop:mainepoch}:  
Inequality~\eqref{eqn:fvaluebound}}

We are now equipped to prove Proposition~\ref{prop:mainepoch},
beginning with the first bound~\eqref{eqn:fvaluebound}.  Introducing
the convenient shorthand $\mygraderr{t} = \stochgrad{t} - \nabla
\Lossbar(\iter{t})$, our assumptions guarantee that there are
constants $\lips_i$ and $\noise_i$ such that $\E \exp \left
(\frac{\norm{\mygraderr{t}}_\infty^2}{\noise_i^2} \right) \leq
\exp(1)$, and
\begin{align}
\label{eqn:epochbounds}
  | \Lossbar(\param) - \Lossbar(\paramother) | & \leq
  \lips_i\norm{\param - \paramother}_1 \quad \mbox{for all $\param,
    \paramother$ satisfying $\norm{\param - \proxcenter{i}}_1 \leq
    \radius_i$.}
\end{align}
Our starting point is a known result for the convergence of the
stochastic dual averaging algorithm.  Recalling the definition
$\xavg(T) = \sum_{t=1}^T \iter{t}/T$, and letting $\compositegrad{t} =
\stochgrad{t} + \regpar_i\reggrad{t}$ be the stochastic subgradient at
iteration $t$, we have
\begin{align}
  \label{eqn:simpledadeterministic}
  f_i(\xavg(T)) - f_i(\epochopt{i}) & \leq \frac{1}{ 2 T} \sum_{t=1}^T
  \frac{ \mystep{t - 1}}{\gamma_{\prox}}
  \norm{\compositegrad{t}}_\infty^2 + \frac{1}{ T \mystep{T}}
  \prox_{\proxcenter{i}, \radius_i}(\epochopt{i}) - \frac{1}{T}
  \sum_{t=1}^T \ip{\mygraderr{t}} {\iter{t} - \epochopt{i}},
\end{align}
where $\gamma_{\prox}$ is the strong convexity coefficient of the
prox-function with respect to the $\ell_\infty$ norm, equal to
$1/(e\radius_i^2)$ in our case. This bound follows directly from the
analysis of Nesterov~\cite{Nesterov09} and Xiao~\cite{Xiao10}; the
specific form~\eqref{eqn:simpledadeterministic} given here corresponds
to Lemma 2 of Duchi et al.~\cite{DuchiAgWa11}.

Now observe that since $\compositegrad{t} = \nabla \Lossbar(\iter{t})
+ \regpar_i \reggrad{t} + \mygraderr{t}$, triangle inequality yields
the upper bound
\begin{align*}
\norm{\compositegrad{t}}_\infty^2 & \leq 2 \big( \norm{\nabla
  \Lossbar(\iter{t}) + \regpar_i \reggrad{t}}_\infty^2 +
\norm{\mygraderr{t}}_\infty^2 \big) \; \stackrel{(i)}{\leq} \; 4
\lips_i^2 + 4\regpar_i^2 + 2\norm{\mygraderr{t}}_\infty^2,
\end{align*}
where inequality (i) uses the Lipschitz condition in
Assumption~\ref{ass:explips}, and the Lipschitz property of the
$\ell_1$-norm.  From this point, further simplifying the error
bound~\eqref{eqn:epochbounds} requires controlling the random terms
\begin{equation}
\label{EqnEURODayOne}
\sum_{t=1}^T \mystep{t-1} \norm{\mygraderr{t}}_\infty^2 \quad
\mbox{and} \quad \sum_{t=1}^T \ip{\mygraderr{t}} { \iter{t} -
  \epochopt{i}}
\end{equation}
Accordingly, we state an auxiliary lemma that provide tail bounds
appropriate for this purpose. \\

\begin{lemma}
\label{lemma:errsquarebound}
Under the sub-Gaussian tail condition (Assumption~\ref{ass:subgauss}):
\begin{enumerate}
\item[(a)] With step sizes $\mystep{t} = \stepsize/\sqrt{t}$, we have
\begin{align}
\sum_{t=1}^T \mystep{t-1} \norm{\mygraderr{t}}_\infty^2 \leq 2
\noise^2_i \stepsize \sqrt{T} + \noise^2_i \devcon \stepsize \sqrt{
  2\log T}
\end{align}
with probability at least $1 - 2 \exp(-\devcon^2/12)$ for all $\devcon
\leq 9 \sqrt{\log T}$.
\item[(b)] We have $\sum_{t=1}^T \inprod{\mygraderr{t}}{\epochopt{i} -
  \iter{t}} \leq \devcon \radius_i \noise_i \sqrt{T}$ with probability
  at least \mbox{$1 - 2 \exp(-\devcon^2/12)$.}
\end{enumerate}
\end{lemma}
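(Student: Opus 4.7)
My plan is to derive the two bounds separately, treating part (b) as a direct consequence of martingale concentration and part (a) via a sub-exponential Bernstein inequality for weighted martingale sums.

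I would start with part (b), the more direct of the two. Observe that $Z_t := \inprod{\mygraderr{t}}{\epochopt{i} - \iter{t}}$ is a martingale difference sequence with respect to the filtration $\mathcal{F}_{t-1}$ generated by the first $t-1$ stochastic subgradients: the point $\iter{t}$ is $\mathcal{F}_{t-1}$-measurable by construction of the algorithm, $\epochopt{i}$ is deterministic, and $\E[\mygraderr{t} \mid \mathcal{F}_{t-1}] = 0$. To control the increments, note that both $\iter{t}$ and $\epochopt{i}$ lie in $\Parset(\radius_i)$; since our choice $\pval = \frac{2\log \pdim}{2\log\pdim - 1}$ gives the norm-equivalence constant $\pdim^{1-1/\pval} = \sqrt{e}$, one has $\norm{\epochopt{i} - \iter{t}}_1 \leq 2\sqrt{e}\,\radius_i$. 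Combined with H\"older's inequality, this yields $|Z_t| \leq 2\sqrt{e}\,\radius_i \norm{\mygraderr{t}}_\infty$, and Assumption~\ref{ass:subgauss} then implies that $Z_t$ is conditionally sub-Gaussian with parameter at most a constant multiple of $\radius_i \noise_i$. An Azuma--Hoeffding inequality for sub-Gaussian martingale differences gives
\begin{equation*}
\Pr\Bigl(\sum_{t=1}^{T} Z_t \geq a \Bigr) \leq 2\exp\Bigl(-\frac{a^2}{\const\, T \radius_i^2 \noise_i^2}\Bigr),
\end{equation*}
and setting $a = \devcon\,\radius_i \noise_i \sqrt{T}$ produces the stated tail (with the specific constant $1/12$ absorbing the $2\sqrt{e}$ and other universal factors).

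For part (a), set $X_t := \norm{\mygraderr{t}}_\infty^2$. Jensen's inequality applied to Assumption~\ref{ass:subgauss} gives $\E[X_t \mid \mathcal{F}_{t-1}] \leq \noise_i^2$, so
\begin{equation*}
\E\Bigl[\sum_{t=1}^T \mystep{t-1} X_t\Bigr] \leq \stepsize \noise_i^2 \sum_{t=1}^T \frac{1}{\sqrt{t-1}} \leq 2 \stepsize \noise_i^2 \sqrt{T},
\end{equation*}
matching the leading deterministic term. The remaining task is to control the centered weighted sum $\sum_t \mystep{t-1}(X_t - \E X_t)$. The sub-Gaussian hypothesis on $\norm{\mygraderr{t}}_\infty$ implies that $X_t$ is sub-exponential with parameters of order $\noise_i^2$, so a standard computation yields the conditional MGF bound
\begin{equation*}
\E\bigl[\exp(\lambda(X_t - \E X_t)) \mid \mathcal{F}_{t-1}\bigr] \leq \exp\bigl(\const\, \lambda^2 \noise_i^4\bigr) \quad \text{whenever}~ |\lambda| \leq \frac{1}{\const\, \noise_i^2}.
\end{equation*}
Applying Bernstein's inequality for martingale difference sequences with variance proxy $\sum_t \mystep{t-1}^2 \noise_i^4 \leq \const\, \stepsize^2 \noise_i^4 \log T$ and scale parameter $\const\, \stepsize \noise_i^2$ then gives a tail of the form $\exp(-\const\, u^2/(\stepsize^2 \noise_i^4 \log T))$ in the sub-Gaussian regime. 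Choosing $u = \noise_i^2 \devcon \stepsize \sqrt{2\log T}$ produces the target $\exp(-\devcon^2/12)$ exactly when $\devcon \leq 9\sqrt{\log T}$, which is the regime stated in the lemma.

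The main obstacle is ensuring that Assumption~\ref{ass:subgauss}'s exponential-moment bound $\E[\exp(X_t/\noise_i^2)] \leq e$---the only hypothesis available---delivers the parametric sub-exponential MGF needed for the Bernstein step. This conversion is standard but requires a short calibration of constants, and must be applied conditionally on $\mathcal{F}_{t-1}$ to preserve the martingale structure; carefully tracking these constants through both the Bernstein step in part (a) and the Azuma step in part (b) is what pins down the precise numerical factors of $2$ and $\sqrt{2\log T}$ appearing in the statement.
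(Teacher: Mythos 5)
Your proposal is correct and follows essentially the same route as the paper: part (b) is the same Azuma-type bound for conditionally sub-Gaussian martingale differences obtained via H\"older's inequality and the $\ell_1$-diameter of the feasible set, and part (a) is the same sub-exponential martingale concentration for the weighted sum of $\norm{\mygraderr{t}}_\infty^2$, with the restriction $\devcon \leq 9\sqrt{\log T}$ playing the identical role of keeping the deviation in the sub-Gaussian regime. The only difference is packaging: the paper invokes the ready-made inequality of Lan et al.\ (Lemma~\ref{lemma:concentration}), whose $\norm{\noise^T}_1 + \delta \norm{\noise^T}_2$ form already encodes your mean-plus-deviation decomposition and is the source of the specific constants ($2\noise_i^2\stepsize\sqrt{T}$, $\sqrt{2\log T}$, the exponent $1/12$), which your from-scratch Bernstein calibration would reproduce only up to universal constants.
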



See Appendix~\ref{AppLemErrsquarebound} for the proof of this result.
We now use it to control the terms in equation~\eqref{EqnEURODayOne}.
Starting with the first term, we observe from
Lemma~\ref{lemma:errsquarebound} (a) that for $T \geq 1$, we are
guaranteed that
\begin{align*}
  \sum_{t=1}^T \mystep{t-1} \norm{\compositegrad{t}}_\infty^2 & \leq
  \sum_{t=1}^T \mystep{t-1} (4 \lips_i^2 + 4 \regpar_i^2 +
  2\norm{\mygraderr{t}}_\infty^2) \\
& \leq 4 (\lips_i^2 + \regpar_i^2) \sum_{t=1}^T \mystep{t-1} + 2
  \noise_i^2 \stepsize (2\sqrt{T} + \devcon_i \sqrt{\log T}) \\
& \leq 8 (\lips_i^2 + \regpar_i^2) \stepsize \sqrt{T} + 22 \noise_i^2
  \stepsize \sqrt{T},
\end{align*}
with probability at least $1 - 2 \exp(-\devcon_i^2/12)$.  Here the
last step uses the inequality $9 \log T \leq \sqrt{T}$ valid for all
$T \geq 1$, as well as the assumption $\devcon_i \leq 9 \sqrt{\log
  T}$. Thus, we have established an upper bound on the gradient terms
with an \emph{effective Lipschitz constant}
\begin{equation}
  \label{eqn:errsquarebound}
  \sum_{t=1}^T \mystep{t-1} \norm{\compositegrad{t}}_\infty^2 \leq 22
  \stepsize \sqrt{T} (\lips_i^2 + \regpar_i^2 + \noise_i^2).
\end{equation}
Part (b) of Lemma~\ref{lemma:errsquarebound} directly controls the
second random quantity.  

We now plug in the results of these lemmas into our earlier error
bound~\eqref{eqn:simpledadeterministic}, which yields, with
probability at least $1- 3 \exp(-\devcon_i^2/12)$, the upper bound
\begin{align*}
  f_i(\xavg(T)) - f_i(\epochopt{i}) &\leq \frac{22 \stepsize}{2
    \gamma_{\prox} \sqrt{T}} (\lips_i^2 + \regpar_i^2 + \noise_i^2) +
  \frac{\proxbound}{ \stepsize \sqrt{T}} + \frac{\devcon_i \noise_i
    \radius_i}{\sqrt{T}} \\
& \leq 10 \sqrt {\lips_i^2 + \noise_i^2 + \regpar_i^2}
  \sqrt{\frac{\proxbound}{\gamma_{\prox} T}} + \frac{\devcon_i
    \noise_i \radius_i}{\sqrt{T}}.
\end{align*}
Here the second inequality uses the setting $\stepsize = 5\sqrt{
  \proxbound \gamma_{\prox}/ (\lips_i^2 + \regpar_i^2 +
  \noise_i^2)}$. We also note that under our assumption that $\prox$
is $1$-strongly convex with respect to $\norm{\cdot}_1$, we have that
$\gamma_{\prox} = 1/(e \radius_i^2)$ at epoch $i$. Thus with
probability at least $1 - 3 \exp(-\devcon_i^2/12)$, we have the error
bound
\begin{align}
\label{eqn:epochmainbound}
f_i(\xavg(T)) - f_i(\epochopt{i}) & \leq 30
\radius_i\sqrt{\frac{\proxbound(\lips_i^2 + \noise_i^2 +
    \regpar_i^2)}{T}} + \frac{\devcon_i \noise_i \radius_i}{\sqrt{T}}
\nonumber \\
& \leq 30 \radius_i \sqrt{\frac{\proxbound(\lips_i^2 +
    \noise_i^2)}{T}} + 30 \radius_i \regpar_i
\sqrt{\frac{\proxbound}{T}} + \frac{\devcon_i \noise_i
  \radius_i}{\sqrt{T}},
\end{align}
thus completing the first bound in Proposition~\ref{prop:mainepoch}.


\subsection{Proof of Lemma~\ref{lemma:rsc-simple}}
\label{AppLemIterCone}

\newcommand{\TMP}{\ensuremath{T_i}}

The main idea of this proof is to first convert the error
bound~\eqref{eqn:epochmainbound} from function values into $\ell_1$
and $\ell_2$-norm bounds by exploiting the (approximate) sparsity of
$\opt$. We will then use these bounds to simplify the RSC
condition. Since the error bound~\eqref{eqn:epochmainbound} for the
minimizer $\epochopt{i}$, it also holds for any other feasible vector.
In particular, applying it to $\opt$, we obtain the bound
\begin{align}
\label{EqnSandwich}
f_i(\xavg(\TMP)) - f_i(\opt) & \leq \const\,
\frac{30\radius_i}{\regpar_i} \sqrt{\frac{\proxbound(\lips_i^2 +
    \noise_i^2)}{\TMP}} + 30 \radius_i \sqrt{\frac{\proxbound}{\TMP}}
+ \frac{\devcon_i \noise_i \radius_i}{\regpar_i \sqrt{\TMP}}.
\end{align}
Our next step is to lower bound the left-hand side of this inequality.
We have
\begin{align*}
f_i(\xavg(\TMP)) - f_i(\opt) & = \Lossbar(\xavg(\TMP)) +
\regpar_i\norm{\xavg(\TMP)}_1 - \Lossbar(\opt) -
\regpar_i\norm{\opt}_1 \\
& \stackrel{(i)}{\geq} \Lossbar(\opt) + \regpar_i \norm{\xavg(\TMP)}_1
- \Lossbar(\opt) - \regpar_i \norm{\opt}_1 \\
& = \regpar_i \big \{ \norm{\xavg(\TMP)}_1 - \norm{\opt}_1 \big \},
\end{align*}
where inequality (i) follows since $\opt$ minimizes $\Lossbar$.
Combining with the bound~\eqref{EqnSandwich} yields
\begin{equation}
\label{EqnEngland}
\norm{\xavg(\TMP)}_1 \leq \norm{\opt}_1 + \frac{30
  \radius_i}{\regpar_i} \sqrt{\frac{\proxbound (\lips_i^2 +
    \noise_i^2)}{\TMP}} + 30 \radius_i \sqrt{\frac{\proxbound}{\TMP}}
+ \frac{\devcon_i \noise_i \radius_i}{\regpar_i \sqrt{\TMP}}.
\end{equation}
At this point we recall the shorthand notations $\DelAvg(\TMP) =
\xavg(\TMP) - \opt$ and $\DelBarAvg(\TMP) = \xavg(\TMP) -
\epochopt{i}$. In order to bring the above bound on
$\norm{\xavg(\TMP)}_1$ closer to the statement of the lemma, we can
appeal to Lemma~\ref{lemma:cone}. Indeed an application of the lemma
in conjunction with the inequality~\eqref{EqnEngland} results in the
bound
\begin{equation}
  \norm{\DelAvg(\TMP)_{\SsetComp}}_1 \leq \norm{\DelAvg(\TMP)_S}_1 +
  \frac{30\radius_i}{\regpar_i} \sqrt{\frac{\proxbound(\lips_i^2 +
      \noise_i^2)}{\TMP}} + 30 \radius_i
  \sqrt{\frac{\proxbound}{\TMP}} + \frac{\devcon_i \noise_i
    \radius_i}{\regpar_i \sqrt{\TMP}} + 2 \coneslack.
  \label{eqn:conedelavg}
\end{equation}

Our next step is to convert the above cone bound on $\DelAvg(\TMP)$
into a similar result for $\DelBarAvg(\TMP)$.  In order to do so, we
observe that $\DelAvg(\TMP) - \DelBarAvg(\TMP) = \epochopt{i} - \opt$,
and hence
\begin{align*}
\|\epochopt{i} - \opt\|_1 & = \|\DelAvg_\Sset(\TMP) -
\DelBarAvg_\Sset(\TMP)\|_1 + \|\DelAvg_\Sbar(\TMP) -
\DelBarAvg_\Sbar(\TMP)\|_1 \\
& \geq \big \{\|\DelAvg_\Sset(\TMP)\|_1 - \|\DelBarAvg_\Sset(\TMP)\|_1
\big \} - \big \{ \|\DelAvg_\Sbar(\TMP)\|_1 -
\|\DelBarAvg_\Sbar(\TMP)\|_1 \big\},
\end{align*}
and hence
\begin{equation*}
  \norm{\DelBarAvg_\Sbar(\TMP)}_1 - \norm{\DelBarAvg_\Sset(\TMP)}_1
  \leq \norm{\DelAvg_\Sbar(\TMP)}_1 - \norm{\DelAvg_\Sset(\TMP)}_1 +
  \norm{\epochopt{i} - \opt}_1.
\end{equation*}
Consequently, Lemma~\ref{lemma:opterror} provides the final piece to
complete the proof. Combining inequality~\eqref{eqn:epocherrorbound}
obtained from Lemma~\ref{lemma:opterror} with our earlier
bound~\eqref{eqn:conedelavg} yields 
\begin{equation*}
\norm{\DelBarAvg(\TMP)_{\SsetComp}}_1 \leq \norm{\DelBarAvg(\TMP)_S}_1
+ \frac{9 \spindex \regpar_i}{\realrsc} + \coneslack \left ( 6 + 8
\sqrt{\frac{\spindex \slopfac}{\realrsc}} \right) +
\frac{30\radius_i}{\regpar_i} \sqrt{\frac{\proxbound (\lips_i^2 +
    \noise_i^2)}{\TMP}} + 30 \radius_i \sqrt{\frac{\proxbound}{\TMP}}
+ \frac{\devcon_i \noise_i \radius_i}{\regpar_i \sqrt{\TMP}}.
\end{equation*}
Consequently, a further use of the inequality
$\norm{\DelBarAvg(\TMP)_{\Sset}}_1 \leq \sqrt{\spindex}
\norm{\DelBarAvg(\TMP)}_2$ allows us to conclude that there is a
universal constant $\plaincon$ such that
\begin{align}
  \norm{\DelBarAvg}_1^2 & \leq 8 \spindex\norm{\DelBarAvg}_2^2 + c \,
  \left( \frac{\spindex^2 \regpar_i^2}{\realrsc^2} +
  \frac{\radius_i^2}{\regpar_i^2 \TMP} \left(\proxbound (\lips_i^2 +
  \noise_i^2) + \devcon_i^2 \noise_i^2 \right) +
  \frac{\radius_i^2\proxbound}{\TMP} + \coneslack^2 \left(1 +
  \frac{\spindex \slopfac}{\realrsc} \right) \right)
  \label{eqn:iterconebound}
\end{align}
with probability at least \mbox{$1 - 3\exp(-\devcon_i^2/12)$.}
Substituting the settings~\eqref{eqn:paramsettings-lipschitz}
and~\eqref{eqn:epochlength} of $\regpar_i$ and $\TMP$ respectively
into the above bound yields
\begin{align}
  \norm{\DelBarAvg(\TMP)}_1^2 & \leq 8 \spindex\norm{\DelBarAvg(\TMP)}_2^2 +
  c \, \left( \radius_i^2 \frac{\realrsc}{\lossrsc} + \spindex
  \totalslacksq \right),
  \label{eqn:iterconebound-simple}
\end{align}
where we recall the notation \mbox{$\totalslacksq =
  \frac{\coneslack^2}{\spindex} \big (1 + \frac{\spindex
    \slopfac}{\realrsc} \big)$}.

In order to complete the proof, we now invoke the RSC assumption
applied to the function $f_i$. Specifically, since $\epochopt{i}$
minimizes $f_i$, the RSC condition implies that
\begin{equation*}
  \frac{\lossrsc}{2} \norm{\DelBarAvg(\TMP)}_2^2  \leq f_i(\xavg(\TMP)) -
  f_i(\epochopt{i}) + \slopfac \norm{\DelBarAvg(\TMP)}_1^2. 
\end{equation*}
Combining the above inequality with the
bound~\eqref{eqn:iterconebound-simple} yields
\begin{equation*}
  \frac{\lossrsc}{2} \norm{\DelBarAvg(\TMP)}_2^2 \leq f_i(\xavg(\TMP)) -
  f_i(\epochopt{i}) + \slopfac \left[
    8\spindex\norm{\DelBarAvg(\TMP)}_2^2 + c \, \left( \radius_i^2
    \frac{\realrsc}{\lossrsc} + \spindex \totalslacksq \right)\right].
\end{equation*}
Rearranging terms and recalling the notation $\realrsc = \lossrsc -
16\spindex \slopfac$ completes the proof. 


\subsection{Proof of Proposition~\ref{prop:mainepoch}:  
Inequality~\eqref{eqn:ell1bound}}

Equipped with Lemma~\ref{lemma:rsc-simple}, we are now ready to prove
the second part of Proposition~\ref{prop:mainepoch}.  In particular,
using the inequality~\eqref{eqn:iterconebound} in the proof of
Lemma~\ref{lemma:rsc-simple}, we observe that with probability at
least \mbox{$1 - 3 \exp(-\devcon_i^2/12)$,} we have
\begin{align*}
\norm{\DelBarAvg}_1^2 & \leq 8 \spindex \norm{\DelBarAvg}_2^2 + c \,
\left( \frac{\spindex^2 \regpar_i^2}{\realrsc^2} +
\frac{\radius_i^2}{\regpar_i^2 \TMP} \left (\proxbound (\lips_i^2 +
\noise_i^2) + \devcon_i^2 \noise_i^2 \right) + \frac{\radius_i^2
  \proxbound}{\TMP} + \spindex \totalslacksq \right) \\
& \leq \frac{16 \spindex}{\lossrsc} (f_i(\xavg(\TMP)) - f_i(\epochopt{i})
+ \slopfac \norm{\DelBarAvg}_1^2) + c \, \left( \frac{\spindex^2
  \regpar_i^2}{\realrsc^2} + \frac{\radius_i^2}{\regpar_i^2 \TMP} \left
(\proxbound ( \lips_i^2 + \noise_i^2) + \devcon_i^2 \noise_i^2 \right) +
\frac{\radius_i^2 \proxbound}{\TMP} + \spindex \totalslacksq \right).
\end{align*}
Here the second inequality uses the local RSC
condition~\eqref{EqnLRSC}, and the fact that $\epochopt{i}$ minimizes
$f_i$. 

From hereonwards, all our inequalities hold with probability at least
\mbox{$1 - 3 \exp(-\devcon_i^2/12)$,} so that we no longer state it
explicitly.  Rearranging terms and recalling the
definition~\eqref{eqn:realrsc} of $\realrsc$, we obtain that
\begin{align*}
\frac{\realrsc}{\lossrsc} \norm{\DelBarAvg}_1^2 & \leq
\frac{16\spindex}{\realrsc} \, (f_i(\xavg(\TMP)) - f_i(\epochopt{i})) + c
\, \left (\frac{\spindex^2 \regpar_i^2}{\realrsc^2} +
\frac{\radius_i^2}{\regpar_i^2 \TMP} \left (\proxbound (\lips_i^2 +
\noise_i^2) + \devcon_i^2 \noise_i^2 \right ) + \frac{\radius_i^2
  \proxbound}{\TMP} + \spindex \totalslacksq \right).
\end{align*}
Combining the above bound with our earlier
inequality~\eqref{eqn:epochmainbound} yields
$\frac{\realrsc}{\lossrsc}\norm{\DelBarAvg}_1^2 \leq \TERMONE +
\TERMTWO$, where
\begin{subequations}
\begin{align}
\label{EqnTermOne}
\TERMONE & \defn \frac{16\spindex\radius_i}{\realrsc\sqrt{\TMP}} \left( 4
\sqrt{\proxbound(\lips_i^2 + \noise_i^2)} + 4 \regpar_i
\sqrt{\proxbound} + \devcon_i\noise_i\right), \quad \mbox{and} \\
\label{EqnTermTwo}
\TERMTWO & \defn c \, \left( \frac{\spindex^2\regpar_i^2}{\realrsc^2}
+ \frac{\radius_i^2}{\regpar_i^2 \TMP} \left(\proxbound(\lips_i^2 +
\noise_i^2) + \devcon_i^2 \noise_i^2 \right) +
\frac{\radius_i^2\proxbound}{\TMP} + \spindex \totalslacksq \right)
\end{align}
\end{subequations}
By the Cauchy-Schwartz inequality, we have
$\frac{\spindex\radius_i\regpar_i}{\realrsc}\sqrt{\frac{\proxbound}{\TMP}}
\leq 2\frac{\spindex^2\regpar_i^2}{\realrsc^2} +
2\frac{\radius_i^2\proxbound}{\TMP}$, and hence
\begin{align}
\label{EqnGoat}
\TERMONE & \leq \frac{16\spindex\radius_i}{\realrsc\sqrt{\TMP}} \left(
4 \sqrt{\proxbound(\lips_i^2 + \noise_i^2)} + \devcon_i \noise_i \right)
+ 128\left( \frac{\spindex^2\regpar_i^2}{\realrsc^2} +
\frac{\radius_i^2\proxbound}{\TMP} + \right).
\end{align}
Noting that $\TERMONE$ and $\TERMTWO$ involve multiple terms, some
increasing and others decreasing in $\regpar_i$, we optimize the
choice of $\regpar_i$, in particular by setting
\begin{align}
\label{eqn:regpar}
\regpar_i^2 & = \frac{\radius_i
  \realrsc}{\spindex\sqrt{\TMP}}\sqrt{\proxbound(\lips_i^2 + \noise_i^2)
  + \devcon_i^2\sigma_i^2}.
\end{align}
Using this setting and combining the upper bound
$\frac{\realrsc}{\lossrsc}\norm{\DelBarAvg}_1^2 \leq \TERMONE +
\TERMTWO$ with the form~\eqref{EqnTermTwo} of $\TERMTWO$ and the upper
bound~\eqref{EqnGoat} on $\TERMONE$, we find that
\begin{align}
  \norm{\DelBarAvg}_1^2 &\leq c\,
  \frac{\lossrsc}{\realrsc}\left[\frac{\spindex\radius_i}{\realrsc\sqrt{\TMP}}
    \left(\sqrt{\proxbound(\lips_i^2 + \noise_i^2)} +
    \devcon_i\noise_i\right) + \frac{\radius_i^2\proxbound}{\TMP} +
    \spindex \totalslacksq\right].
  \label{eqn:epochboundfinal}
\end{align}
Combining the above inequality with the error
bound~\eqref{eqn:epocherrorbound} for $\epochopt{i}$ and triangle
inequality leads to
\begin{align*}
  \norm{\DelAvg}_1^2 & \leq 2 \norm{\DelBarAvg}_1^2 + 2\norm{\opt -
    \epochopt{i}}_1^2 \leq 2\norm{\DelBarAvg}_1^2 + \frac{162
    \spindex^2 \regpar_i^2}{\realrsc^2} + \const\, \spindex \;
  \totalslacksq\\
& \leq 2 \norm{\DelBarAvg}_1^2 + \frac{\lossrsc}{\realrsc} \const
  \left( \frac{\spindex^2 \regpar_i^2}{\realrsc^2} + \,\spindex \;
  \totalslacksq \right),
\end{align*}
where the second inequality follows since $\lossrsc \geq \realrsc$.
Substituting the setting~\eqref{eqn:regpar} of $\regpar_i$ yields an
upper bound identical to our earlier bound~\eqref{eqn:epochboundfinal}
with different constants.

Finally, in order to use $\xavg(\TMP)$ as our next prox-center
$\proxcenter{i+1}$, we would also like to control the error
$\norm{\xavg(\TMP) - \epochopt{i+1}}_1^2$.  Since $\regpar_{i+1} \leq
\regpar_i$ by assumption, we obtain the same form of error
bound~\eqref{eqn:epochboundfinal}. We want to run the epoch till all
these error terms drop to $\radius_{i+1}^2 := \radius_i^2/2$.
Recalling our assumption that $\lossrsc \totalslacksq / \realrsc \leq
\radius_i^2/4$, it suffices to set the epoch length $T_i$ to ensure
that
\begin{equation*}
c \, \frac{\spindex \radius_i \lossrsc}{\realrsc^2 \sqrt{T_i}}
\sqrt{\proxbound (\lips_i^2 + \noise_i^2)} \leq
\frac{\radius_i^2}{12}, \qquad
c \, \frac{\spindex \radius_i \lossrsc}{\realrsc^2 \sqrt{T_i}} \devcon_i
\noise_i \leq \frac{\radius_i^2}{12},
\quad \mbox{and} \quad c \,\frac{\lossrsc \radius_i^2
  \proxbound}{\realrsc T_i} \leq \frac{\radius_i^2}{12}.
\end{equation*}
All the above conditions are met if we choose the epoch length 
\begin{align*}
T_i & = \Plaincon \, \left [\frac{\spindex^2 \lossrsc^2}{\realrsc^4
    \radius_i^2} \left( \proxbound (\lips_i^2 + \noise_i^2) +
  \devcon_i^2 \noise_i^2 \right) + \frac{\proxbound\lossrsc}{\realrsc}
  \right],
\end{align*}
for a suitably large universal constant $\Plaincon$, which completes
the proof of the second part of the proposition.  The stated bound in
function values follows from substituting the choice of $\regpar_i$ in
our earlier bound~\eqref{eqn:epochmainbound} and some straightforward
algebra.



\subsection{Proof of Lemma~\ref{lemma:errsquarebound}}
\label{AppLemErrsquarebound}
It remains to prove Lemma~\ref{lemma:errsquarebound}, a result used
during the proof of Proposition~\ref{prop:mainepoch}.  We do so by
exploiting some classical martingale tail bounds of the
Azuma-Hoeffding type. The particular result given here is due to Lan
et al.~\cite{LanNeSh2011}:
\begin{lemma}
\label{lemma:concentration}
Let $\sample_1,\sample_2,\ldots$ be a sequence of i.i.d.\ random
variables, let $\noise_t > 0$, $t = 1,2,\ldots$ be a sequence of
deterministic numbers, and let $\phi_t = \phi_t(\sample^t)$ be
deterministic (measurable) functions of $\sample^t = (\sample_1,
\sample_2, \ldots, \sample_t)$.  Using $\F_t$ to denote the
$\sigma$-field of $\sample^t$, we have:
\begin{itemize}
\item[(a)] Suppose $\E[\phi_t\,\mid \F_{t-1}] = 0$ with probability
  one and $\E[\exp(\phi_t^2/\noise_t^2)\,\mid \F_{t-1}] \leq \exp(1)$
  with probability one for all $t$. Then
\begin{align}
 \label{eqn:subgauss}
 \P \Big[\sum_{t=1}^T \phi_t > \delta \big(\sum_{t=1}^T
   \noise_t^2\big)^{1/2} \Big] & \leq \exp(-\delta^2/3) \qquad
 \mbox{for all $\delta \geq 0$.}
\end{align}
\item[(b)] Suppose that $\E[\exp(|\phi_t|/\noise_t)\,\mid \F_{t-1}]
  \leq \exp(1)$ w.p. 1 for $t$. Letting $\noise^T =
  (\noise_1,\ldots,\noise_T)$, we have the bound
\begin{align}
\label{eqn:subexp}
\P \left[\sum_{t=1}^T \phi_t > \norm{\noise^T}_1 + \delta
  \norm{\noise^T}_2\right] & \leq \exp(-\delta^2/12) + \exp \left(
-\frac{3\norm{\noise^T}_2}{4\norm{\noise^T}_\infty} \delta \right)
\nonumber \\
& \leq \exp(-\delta^2 /12) + \exp(-3 \delta/4).
\end{align}
\end{itemize}
\end{lemma}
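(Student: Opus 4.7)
The plan is to reduce both parts to the standard Chernoff/Cram\'er argument applied to the partial sum $M_T = \sum_{t=1}^T \phi_t$, using iterated conditional expectations to handle the martingale-difference structure. The two-step recipe is: first convert the assumed exponential-moment hypothesis on each $\phi_t$ into a bound on the conditional moment generating function $\E[\exp(\lambda \phi_t) \mid \F_{t-1}]$, then apply the tower property to obtain $\E[\exp(\lambda M_T)] \leq \exp(\sum_{t=1}^T g(\lambda, \noise_t))$ for an appropriate function $g$, and finally optimize $\lambda$ in Markov's inequality.

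For part (a), the hypotheses $\E[\phi_t \mid \F_{t-1}] = 0$ and $\E[\exp(\phi_t^2/\noise_t^2) \mid \F_{t-1}] \leq e$ together imply a sub-Gaussian conditional MGF bound of the form $\E[\exp(\lambda \phi_t) \mid \F_{t-1}] \leq \exp(c \lambda^2 \noise_t^2)$ for all $\lambda \in \R$ and a universal constant $c$. I would derive this by expanding $e^{\lambda z} = \sum_{k \geq 0} (\lambda z)^k/k!$, killing the $k = 1$ term by the mean-zero assumption, and bounding the even moments $\E[\phi_t^{2k} \mid \F_{t-1}] \leq C^k k! \, \noise_t^{2k}$ via the assumed exponential-of-square moment (using $u^k/k! \leq e^u$). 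Iterating the tower rule and applying Chernoff at $\lambda \propto \delta / (\sum_t \noise_t^2)^{1/2}$ then yields the stated bound $\P(M_T > \delta (\sum_t \noise_t^2)^{1/2}) \leq \exp(-\delta^2/3)$.

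For part (b), the hypothesis $\E[\exp(|\phi_t|/\noise_t) \mid \F_{t-1}] \leq e$ is sub-exponential and the $\phi_t$ need not be centered. I would first extract the bound $|\E[\phi_t \mid \F_{t-1}]| \leq c \noise_t$ from the moment assumption via Jensen's inequality; summing these means gives the $\norm{\noise^T}_1$ offset in the stated bound. A two-regime conditional MGF estimate then holds for the centered increments $\widetilde\phi_t = \phi_t - \E[\phi_t \mid \F_{t-1}]$: for $|\lambda| \leq 1/(c' \noise_t)$ one gets a sub-Gaussian piece $\exp(c \lambda^2 \noise_t^2)$, while for larger $\lambda$ one controls a purely sub-exponential piece. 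Iterating via the tower rule and optimizing $\lambda$ produces a Bernstein-type bound whose sub-Gaussian branch dominates up to $\delta \lesssim \norm{\noise^T}_2/\norm{\noise^T}_\infty$ and whose pure exponential branch kicks in beyond, matching the additive two-term form $\exp(-\delta^2/12) + \exp(-3\delta/4)$.

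The primary technical obstacle is the passage from the single exponential-moment hypothesis to a clean MGF bound with tight constants; in part (b) one must additionally separate the drift contribution (captured by $\norm{\noise^T}_1$) from the fluctuation tail and patch together the sub-Gaussian and sub-exponential regimes additively, which is the source of the awkward two-term form. Once the per-step MGF estimates are in hand, the tower-rule iteration and Chernoff optimization are routine.
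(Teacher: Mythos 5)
The paper does not actually prove this lemma: it is quoted verbatim from Lan et al.\ (the reference \texttt{LanNeSh2011}), and the appendix simply cites that work. Your strategy --- convert the conditional exponential-moment hypothesis into a per-step bound on $\E[\exp(\lambda\phi_t)\mid\F_{t-1}]$, iterate the tower property, and optimize a Chernoff bound, with a Jensen step to extract the $\norm{\noise^T}_1$ drift and a two-regime (Bernstein-type) MGF estimate in part (b) --- is exactly the approach taken in that cited source, so in spirit you are reconstructing the ``official'' proof rather than finding a different one.

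The gap is in the constants, and it matters here because the lemma's numerical constants ($\delta^2/3$, $\delta^2/12$, $3\delta/4$) are used verbatim downstream in the paper's probability statements. In part (a), your series expansion with $u^k/k!\le e^u$ gives $\E[\phi_t^{2k}\mid\F_{t-1}]\le e\,k!\,\noise_t^{2k}$ and hence only $\E[\exp(\lambda\phi_t)\mid\F_{t-1}]\le\exp(c\lambda^2\noise_t^2)$ for an unspecified universal $c$; Chernoff then yields $\exp(-\delta^2/(4c))$, which proves the stated bound only if $c\le 3/4$. The crude moment route does not obviously deliver $c\le 3/4$ (the prefactor $e$ and the odd-order terms $k\ge 3$, which you cannot simply discard since only the $k=1$ term vanishes under mean-zero, both inflate the constant). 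The sharp implication actually needed --- mean zero together with $\E[\exp(\phi^2/\noise^2)\mid\F_{t-1}]\le e$ implies $\E[\exp(\lambda\phi)\mid\F_{t-1}]\le\exp\bigl(\tfrac{3}{4}\lambda^2\noise^2\bigr)$ for all $\lambda$ --- requires a more careful argument (a case split on $|\lambda|\noise$ small versus large, as in Nemirovski et al.), not just termwise series bounds. The same issue recurs in part (b): your two-regime sketch is structurally right, and Jensen does give $\E[|\phi_t|\mid\F_{t-1}]\le\noise_t$ (with constant $1$, which is what produces exactly $\norm{\noise^T}_1$), but to land on $\exp(-\delta^2/12)+\exp\bigl(-\tfrac{3\norm{\noise^T}_2}{4\norm{\noise^T}_\infty}\delta\bigr)$ you must track the sub-Gaussian constant in the restricted range $|\lambda|\le c/\noise_t$ and the threshold at which the Chernoff optimizer hits that boundary. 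So: right method, correct skeleton, but as written it proves a version of the lemma with unspecified constants rather than the lemma as stated; either carry out the sharp MGF estimates or, as the paper does, cite Lan et al.\ for the quantitative form.
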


\noindent We now use this lemma to prove parts (a) and (b)
of Lemma~\ref{lemma:errsquarebound}.

\subsubsection{Proof of part (a)}

We start by showing that the conditions of
Lemma~\ref{lemma:concentration} are satisfied.  Indeed, by
Assumption~\ref{ass:subgauss}, we have
\begin{align*} 
\E \exp \left( \frac{ \mystep{t-1}
  \norm{\mygraderr{t}}_\infty^2}{\noise_i^2 \mystep{t-1}} \right) & = \E
\exp \left (\frac{\norm{\mygraderr{t}}_\infty^2}{\noise_i^2} \right)
\leq \exp(1).
\end{align*}
Consequently, in order to satisfy the condition of
Lemma~\ref{lemma:concentration}(b), it suffices to set $\noise_t =
\noise^2_i \mystep{t-1}$.  Recalling our choice $\mystep{t} =
\stepsize/\sqrt{t}$, we find that
\begin{align*}
\norm{\noise^T}_1 & = \noise_i^2 \sum_{t=1}^T \mystep{t} = \noise_i^2
\stepsize \sum_{t=1}^T \frac{1}{\sqrt{t}} \leq 2 \noise_i^2 \stepsize
\sqrt{T},\\
\norm{\noise^T}_2 & = \noise_i^2 \sqrt{\sum_{t=1}^T (\mystep{t})^2} =
\noise_i^2 \stepsize \sqrt{\sum_{t=1}^T \frac{1}{t}} \leq \noise_i^2
\stepsize \sqrt{2\log T} \quad \mbox{and} \\
\norm{\noise^T}_\infty & = \noise_i^2 \stepsize.
  \end{align*}
Plugging the above quantities in the statement of
Lemma~\ref{lemma:concentration}(b) with $\delta = \devcon_i$ yields
\begin{align*}
\P \Big[ \sum_{t=1}^T \mystep{t-1} \norm{\mygraderr{t}}_\infty^2 > 2
  \noise_i^2 \stepsize \sqrt{T} + \devcon_i\noise^2_i \stepsize \sqrt{2
    \log T} \Big] & \leq \exp(-\devcon_i^2/12) + \exp\left(-\frac{3
  \noise_i^2 \stepsize \sqrt{ \sum_{t=1}^T 1/t}}{4 \noise_i^2
  \stepsize} \devcon_i \right) \\
& \leq \exp(-\devcon_i^2/12) + \exp\left(-\frac{3\sqrt{\log
    T}}{4}\devcon_i\right) \\  
& \leq 2 \exp(-\devcon_i^2/12),
  \end{align*}
where the last inequality uses our assumption $\devcon_i \leq 9
\sqrt{\log T}$, thus completing the proof.


\subsubsection{Proof of part (b)}

We now turn to part (b) of Lemma~\ref{lemma:errsquarebound}.  By
assumption, we have $\E[\mygraderr{t} \mid \F_{t-1}] = 0$; moreover,
the random variable $\iter{t}$ is measurable with respect to
$\F_{t-1}$ and $\epochopt{i}$ is deterministic.  Consequently, the
first condition of Lemma~\ref{lemma:concentration}(a) is
satisfied. For the second condition, we observe that by H\"older's
inequality and Assumption~\ref{ass:subgauss}, we have
\begin{align*}
\E \exp \left( \frac{\ip{\mygraderr{t}}{\iter{t} - \epochopt{i}}^2} {4
  \radius_i^2 \noise_i^2} \right) & \leq \E \exp \left(
\frac{\norm{\mygraderr{t}}_\infty^2 \norm{\iter{t} -
    \epochopt{i}}_1^2}{4 \radius_i^2 \noise_i^2}\right) \\
& \stackrel{(a)}{\leq} \E \exp \left(
\frac{4\norm{\mygraderr{t}}_\infty^2 \radius_i^2}{4 \radius_i^2
  \noise_i^2} \right) = \E \exp \left(
\frac{\norm{\mygraderr{t}}_\infty^2}{\noise_i^2} \right) \leq \exp(1).
  \end{align*}
Here inequality (a) uses the facts that $\norm{\iter{t} -
  \proxcenter{i}}_1 \leq \radius_i$ and $\norm{\epochopt{i} -
  \proxcenter{i}}_1 \leq \radius_i$ by the definition of our
updates~\eqref{EqnOverallDual}, so that the conditions of
Lemma~\ref{lemma:concentration}(a) are satisfied with $\noise_t = 2
\noise_i \radius_i$. Plugging this setting in the result of the lemma
and setting $\delta = \devcon/2$ completes the proof.


\section{Proof of Lemma~\ref{lemma:badepochell2}}
\label{AppLemBad}

The proof of this lemma is based on pair of auxiliary results, which
we begin by stating.
\begin{lemma}
\label{lemma:badepochell1}
Suppose at some epoch $k$, we have the bound $\norm{\proxcenter{k} -
  \opt}_1 \leq \radius_k$. Then for all epochs $j \geq k$, we have
\begin{equation*}
\norm{\proxcenter{j} - \opt}_1 \leq 8 \radius_k.
\end{equation*}
\end{lemma}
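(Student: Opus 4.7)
The plan is to prove this by telescoping over epochs with the triangle inequality, bounding the per-epoch drift $\norm{\proxcenter{l+1} - \proxcenter{l}}_1$ by the epoch radius $\radius_l$, and then summing the resulting geometric series. Crucially, the argument is entirely deterministic and makes no appeal to the stochastic convergence analysis from Proposition~\ref{prop:mainepoch}; it rests only on feasibility of the iterates within each epoch's prox-ball.

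First I would control the one-step drift in $\ell_1$-norm. By construction, $\proxcenter{l+1} = \xavg(T_l) = \frac{1}{T_l}\sum_{t=1}^{T_l}\iter{t}$ is a convex combination of iterates, each of which lies in the feasible set $\Parset(\radius_l) = \{\param \in \Parset : \norm{\param - \proxcenter{l}}_\pval \leq \radius_l\}$. Convexity of the $\ell_\pval$-ball immediately gives $\norm{\proxcenter{l+1} - \proxcenter{l}}_\pval \leq \radius_l$. Converting to $\ell_1$ via H\"older's inequality with the algorithm's choice $\pval = \frac{2\log\pdim}{2\log\pdim - 1}$ yields
\[
\norm{x}_1 \leq \pdim^{\,1-1/\pval}\,\norm{x}_\pval = \pdim^{\,1/(2\log\pdim)}\,\norm{x}_\pval = \sqrt{e}\,\norm{x}_\pval,
\]
and hence $\norm{\proxcenter{l+1} - \proxcenter{l}}_1 \leq \sqrt{e}\,\radius_l$.

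Next I would telescope from epoch $k$ up to epoch $j$. Applying the triangle inequality and the hypothesis $\norm{\proxcenter{k} - \opt}_1 \leq \radius_k$ gives
\[
\norm{\proxcenter{j} - \opt}_1 \;\leq\; \norm{\proxcenter{k} - \opt}_1 + \sum_{l=k}^{j-1}\norm{\proxcenter{l+1} - \proxcenter{l}}_1 \;\leq\; \radius_k + \sqrt{e}\sum_{l=k}^{j-1}\radius_l.
\]
Since Algorithm~\ref{alg:epochdualavg} halves the squared radius each epoch, we have $\radius_l = \radius_k\,2^{-(l-k)/2}$, and the tail sum is controlled by a convergent geometric series:
\[
\sum_{l=k}^{j-1}\radius_l \;\leq\; \radius_k \sum_{m=0}^{\infty} 2^{-m/2} \;=\; \frac{\radius_k}{1 - 2^{-1/2}} \;=\; (2+\sqrt{2})\,\radius_k.
\]
Putting the pieces together yields $\norm{\proxcenter{j} - \opt}_1 \leq \bigl(1 + \sqrt{e}(2+\sqrt{2})\bigr)\radius_k$, and since $1 + \sqrt{e}(2+\sqrt{2}) < 8$, the stated bound follows.

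There is no real obstacle in this argument: the only subtlety worth flagging is the $\ell_\pval$-to-$\ell_1$ conversion, whose constant is dimension-\emph{free} precisely because of the paper's careful calibration $\pval = \frac{2\log\pdim}{2\log\pdim - 1}$ making $\pdim^{\,1-1/\pval} = \sqrt{e}$. Without this choice, the accumulated drift would pick up a factor growing with $\pdim$ and the lemma as stated could fail.
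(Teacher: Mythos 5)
Your proof is correct and follows essentially the same route as the paper's: bound each per-epoch drift $\norm{\proxcenter{l+1}-\proxcenter{l}}_1$ via feasibility of the averaged iterate in the $\ell_\pval$-ball and the $\ell_\pval$-to-$\ell_1$ conversion, then telescope and sum the geometric series of radii. The only difference is cosmetic: you use the sharp conversion constant $\pdim^{1-1/\pval}=\sqrt{e}$ (the paper uses the cruder $e$), which in fact makes your accounting of the constant slightly cleaner while arriving at the same bound $8\radius_k$.
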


\begin{lemma}
\label{LemBart}
Under the conditions of Lemma~\ref{lemma:badepochell2}, for any epoch
$i > k$, we have
\begin{align}
\label{EqnBart}
f_i(\proxcenter{i+1}) - f_i(\proxcenter{i}) & \leq \const \,
\frac{\radius_{k}^2 \realrsc^2}{\spindex \lossrsc}\, 2^{-(i - k)/2}.
\end{align}
\end{lemma}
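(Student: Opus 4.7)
The plan is to reduce the claimed inequality to the per-epoch dual-averaging gap and then apply the convergence estimate of Proposition~\ref{prop:mainepoch}. Since $\epochopt{i}$ is the minimizer of $f_i$ over the constraint set $\Parset(\radius_i)$ and $\proxcenter{i}$ is the center of this ball (hence feasible), we immediately obtain $f_i(\epochopt{i}) \leq f_i(\proxcenter{i})$, so by the definition $\proxcenter{i+1} = \xavg(T_i)$,
\begin{equation*}
f_i(\proxcenter{i+1}) - f_i(\proxcenter{i}) \;\leq\; f_i(\xavg(T_i)) - f_i(\epochopt{i}).
\end{equation*}
The task thus reduces to controlling this one-epoch gap.

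For the right-hand side, I would invoke the function-value bound~\eqref{eqn:fvaluebound} of Proposition~\ref{prop:mainepoch}. The key observation is that the bound~\eqref{eqn:fvaluebound} comes from the standard stochastic dual averaging analysis on the ball $\Parset(\radius_i)$ together with the tail controls in Lemma~\ref{lemma:errsquarebound}, and the hypothesis $\norm{\opt - \proxcenter{i}}_p \leq \radius_i$ was used only to ensure that the local Lipschitz and sub-Gaussian constants $\lips_i,\noise_i$ are valid on the iterates. In our setting, Lemma~\ref{lemma:badepochell1} supplies $\norm{\proxcenter{j} - \opt}_1 \leq 8\radius_k$ for all $j \geq k$, so the triangle inequality yields $\norm{\iter{t} - \opt}_1 \leq \radius_i + 8\radius_k \leq 9\radius_k$ throughout epoch $i$ and the required local constants remain valid. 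Invoking the hypothesis of Lemma~\ref{lemma:badepochell2}, namely
\begin{equation*}
\sqrt{\frac{\proxbound(\lips_i^2 + \noise_i^2) + \devcon_i^2 \noise_i^2}{T_i}} \;\leq\; \frac{\realrsc^2 \radius_k}{2\lossrsc\spindex},
\end{equation*}
bounds the first two terms of~\eqref{eqn:fvaluebound} by $\const\, \radius_i \cdot \realrsc^2 \radius_k /(\lossrsc \spindex)$. Substituting $\radius_i = \radius_k\, 2^{-(i-k)/2}$ then produces exactly the claimed factor $\radius_k^2\, 2^{-(i-k)/2}$.

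The remaining $\regpar_i$-dependent term $30\radius_i \regpar_i \sqrt{\proxbound/T_i}$ from~\eqref{eqn:fvaluebound} is controlled by plugging in the choice~\eqref{eqn:paramsettings-lipschitz} of $\regpar_i$, using the same hypothesis together with the second condition $\proxbound/T_i \leq \const_2$ of Lemma~\ref{lemma:badepochell2}; a short calculation shows that this term is of order $\radius_i^{3/2}\radius_k^{1/2}$, decaying as $2^{-3(i-k)/4}$, and hence is dominated by the $2^{-(i-k)/2}$ scaling of the main contribution and absorbed into the constant. The main obstacle is the careful bookkeeping around the fact that $\opt$ need not be feasible for $i > k$: one must verify that the function-value portion of Proposition~\ref{prop:mainepoch} is insensitive to feasibility of $\opt$ and requires only that the iterates remain in the Lipschitz domain (which Lemma~\ref{lemma:badepochell1} guarantees). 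Once this is established, the remainder is a direct algebraic combination of the displayed estimates with the recursion $\radius_i^2 = \radius_k^2\, 2^{-(i-k)}$.
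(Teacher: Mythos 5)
Your proposal is correct and follows essentially the paper's own route: both reduce the claim to the single-epoch function-value bound of Proposition~\ref{prop:mainepoch} applied with the feasible comparator $\proxcenter{i}$ (the paper invokes~\eqref{eqn:epochmainbound} directly, you pass through $f_i(\epochopt{i})\le f_i(\proxcenter{i})$, which is the same thing), then use the epoch-length hypothesis of Lemma~\ref{lemma:badepochell2} together with the recursion $\radius_i^2 = \radius_k^2\,2^{-(i-k)}$, correctly noting that this part of the proposition needs only that the iterates stay in the Lipschitz/sub-Gaussian domain, not feasibility of $\opt$. The only cosmetic difference is the $\regpar_i$ term, which you bound by direct substitution of~\eqref{eqn:paramsettings-lipschitz} (yielding a faster $2^{-3(i-k)/4}$ decay) while the paper folds it into the other terms via $2ab\le a^2+b^2$; both variants carry the same mild implicit reliance on $\lossrsc/\realrsc$ being controlled (automatic when $\slopfac=0$) that the paper's own argument also makes.
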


\noindent See Sections~\ref{AppLemAnnoy} and~\ref{AppLemBart} for the
proofs of these two lemmas, respectively.

\subsection{Main argument}

With these auxiliary results in hand, we now turn to the proof of
Lemma~\ref{lemma:badepochell2}.  By the definition of $f_i$, we have
\begin{align*}
\Lossbar(\proxcenter{i+1}) - \Lossbar(\opt) & =
(\Lossbar\proxcenter{i+1} - \Lossbar(\proxcenter{i})) +
(\Lossbar(\proxcenter{i}) - \Lossbar(\opt)) \\
& = (f_i(\proxcenter{i+1}) - f_i(\proxcenter{i})) +
\regpar_k(\norm{\proxcenter{i}}_1 - \norm{\proxcenter{i+1}}_1) +
(\Lossbar(\proxcenter{i}) - \Lossbar(\opt))\\ 
& \stackrel{(i)}{\leq} \const\,\frac{\radius_{k}^2\realrsc^2}{\spindex
  \lossrsc}\,2^{-(i - k)/2} + \regpar_i\radius_i +
(\Lossbar(\proxcenter{i}) - \Lossbar(\opt)),
\end{align*}
where step (i) follows from a combination of Lemma~\ref{LemBart}, and
triangle inequality along with the feasibility of $\proxcenter{i+1}$
at epoch $i$ since $\norm{\proxcenter{i}}_1 -
\norm{\proxcenter{i+1}}_1 \leq \norm{\proxcenter{i} -
  \proxcenter{i+1}}_1 \leq \radius_i$. By applying the Cauchy-Schwarz
inequality to the second term in the bound above, we obtain
\begin{align*}
\Lossbar(\proxcenter{i+1}) - \Lossbar(\opt) & \leq
\const\,\frac{\radius_{\kstar}^2 2^{-(i - k)/2}\realrsc^2}{\spindex
  \lossrsc} + \frac{\realrsc\radius_i^2}{2\spindex} +
\frac{\spindex\regpar_i^2}{2\realrsc} + (\Lossbar(\proxcenter{i}) -
\Lossbar(\opt))\\
& \leq \const \, \frac{\radius_{\kstar}^2 2^{-(k -\kstar)/2}
  \lossrsc}{\spindex} + ( \Lossbar(\proxcenter{k}) - \Lossbar(\opt) ),
  \end{align*}
where the last inequality uses the
setting~\eqref{eqn:paramsettings-simple} of $\regpar_i$ and the
setting~\eqref{eqn:epochlength} of $T_i$. Recursing the argument
further yields
\begin{align*}
 \Lossbar(\proxcenter{j}) - \Lossbar(\opt) & \leq \const \,
 \sum_{i=k+1}^j \frac{\radius_{k}^22^{-(i -k)/2} \realrsc^2}{\spindex
   \lossrsc} + (\Lossbar(\proxcenter{k+1}) - \Lossbar(\opt)) \\
 & \leq \const \frac{\radius_{k}^2 \realrsc^2}{(\sqrt{2}-1) \spindex
   \lossrsc} + (\Lossbar(\proxcenter{k+1}) - \Lossbar(\opt)),
\end{align*}
where the second inequality upper bounds the sum of the geometric
progression. Recalling the given conditions in the lemma, we obtain
the upper bound
\begin{align*}
\Lossbar(\proxcenter{k+1}) - \Lossbar(\opt) & \leq
f_{k}(\proxcenter{k+1}) - f_{k}(\opt) + 2 \regpar_{k} \radius_{k} \;
\leq \; \const \, \frac{\realrsc^2 \radius_{k}^2}{\spindex \lossrsc}.
\end{align*}
Combining the two previous bounds yields $\Lossbar(\proxcenter{j}) -
\Lossbar(\opt) \leq \const \, \frac{\radius_{k}^2 \realrsc^2}{\spindex
  \lossrsc}$.  

Our last step is to apply the RSC condition to this inequality.  Since
$\opt$ minimizes $\Lossbar(\param)$, we have
\begin{align*}
\frac{\lossrsc}{2}\norm{\proxcenter{j} - \opt}_2^2 \leq
\Lossbar(\proxcenter{j}) - \Lossbar(\opt) + \slopfac
\norm{\proxcenter{j} - \opt}_1^2 & \leq \const \,
\left[ \frac{\radius_{k}^2 \realrsc^2}{\spindex \lossrsc} + \slopfac
  \radius_k^2\right], 
\end{align*}
where the second inequality uses
Lemma~\ref{lemma:badepochell1}. Finally, we recall that $\realrsc =
\lossrsc - 16\spindex \slopfac$, which allows us to further simplify
the above upper bound to $\frac{\lossrsc}{2}\norm{\proxcenter{j} -
  \opt}_2^2 \leq \const\, \frac{\radius_k^2 \realrsc}{\spindex}$, and
observing that $\realrsc \leq \lossrsc$ completes the proof.

\subsection{Proof of Lemma~\ref{lemma:badepochell1}}
\label{AppLemAnnoy}

The proof of this lemma is straightforward given the definition of our
updates~\eqref{EqnOverallDual}.  At any epoch $j \geq k$, the prox
center $\proxcenter{j}$ is feasible at epoch $j-1$, so that
\begin{equation*}
\norm{\proxcenter{j} - \proxcenter{j-1}}_1 \leq e\norm{\proxcenter{j}
  - \proxcenter{j-1}}_1 \leq e\radius_{j-1},
\end{equation*}
where we have used the fact that $\|\param\|_1 \leq e
\|\param\|_\pval$, by our choice~\eqref{EqnDefnProx} of $\pval$.
Consequently, by definition of the updates~\eqref{EqnOverallDual}, we
have
\begin{align*}
\|\proxcenter{j} - \opt\|_1 & \leq \| \proxcenter{j} -
\proxcenter{j-1}\|_1 + \| \proxcenter{j-1} - \opt \|_1\\ &\leq e
\radius_{j-1} + \|\proxcenter{j-1} - \proxcenter{j-2} \|_1 +
\|\proxcenter{j-2} - \opt\|_1.
\end{align*}
By repeating this argument, we may unwind the error bound until we
reach epoch $k$, thereby obtaining the bound
\begin{align*}
\|\proxcenter{j} - \opt\|_1 & \leq 
\sum_{i=k+1}^j e \radius_i +
\|\proxcenter{k} - \opt\|_1 \; \stackrel{(i)}{\leq}
\sum_{i=k+1}^j e \radius_i + \radius_{k},
\end{align*}
where inequality (i) follows by the lemma assumption.

Finally, we observe that the last term from epoch $k$ is controlled by
assumption in the lemma. As a result, we can further obtain
\begin{align*}
\|\proxcenter{j} - \opt\|_1 & \leq e \radius_{k+1}
\sum_{j=0}^\infty \sqrt{2}^{-j} + \radius_{k},
\end{align*}
Summing the geometric progression and noting that $e/(\sqrt{2}-1) \leq
7$ completes the proof.

\subsection{Proof of Lemma~\ref{LemBart}}
\label{AppLemBart}

Note that at any epoch $i$, the prox-center $\proxcenter{i}$ is always
feasible by construction. As a result,
equation~\eqref{eqn:epochmainbound} guarantees that
\begin{align}
  f_i(\proxcenter{i+1}) - f_i(\proxcenter{i}) & \leq \const \radius_i
  \left[\frac{\sqrt{\proxbound(\lips_i^2 + \noise_i^2)} + \devcon_i
      \noise_i}{\sqrt{T_i}} + \regpar_i
    \sqrt{\frac{\proxbound}{T_i}}\right] \nonumber \\
\label{EqnChile}
& \stackrel{(i)}{\leq} \const \radius_i \left[\frac{\sqrt{\proxbound
          (\lips_i^2 + \noise_i^2)} + \devcon\noise_i}{\sqrt{T_i}} +
      \frac{\spindex \regpar_i^2}{\radius_i \realrsc} +
      \frac{\radius_i \realrsc \proxbound}{\spindex T_i}\right],
\end{align}
with probability at least $1 - 3\exp(-\devcon_i^2/12)$, where step (i)
uses the elementary inequality \mbox{$2 a b \leq a^2 + b^2$.}
Recalling our setting~\eqref{eqn:paramsettings-lipschitz} of the
regularization parameter $\regpar_i$, we find that
\begin{align*}
  \frac{\spindex\regpar_i^2}{\radius_i\realrsc} & \leq
  \sqrt{\frac{\proxbound (\lips_i^2 + \noise_i^2) + \devcon_i^2
      \noise_i^2} {T_i}} \; \leq \; \frac{\sqrt{\proxbound (\lips_i^2
      + \noise_i^2)} + \devcon_i \noise_i} {\sqrt{T_i}},
\end{align*}
Substituting this upper bound in our earlier
inequality~\eqref{EqnChile} yields
\begin{align*}
  f_i(\proxcenter{i+1}) - f_i(\proxcenter{i}) & \leq \const \radius_i
  \left[\frac{\sqrt{\proxbound(\lips_i^2 + \noise_i^2)} + \devcon
      \noise_i}{\sqrt{T_i}} + \frac{\radius_i \realrsc
      \proxbound}{\spindex T_i}\right].
\end{align*}
Under the conditions of Lemma~\ref{lemma:badepochell2}, the first term
in the above inequality is at most $\realrsc^2 R_k/ (2
\spindex\lossrsc)$ for any $i > k$. Further recalling the assumption
that $T_i = \order(\proxbound)$, we see that for any $i > k$
\begin{align*}
f_i(\proxcenter{i+1}) - f_i(\proxcenter{i}) & \leq \const \, \radius_i
\left( \realrsc^2 R_k/ (2\spindex \lossrsc) + \frac{\realrsc^2
  \radius_i}{\spindex \lossrsc} \right) \nonumber \\
& = \const \, \frac{\realrsc^2}{\spindex \lossrsc} (\radius_1^2
2^{-(k-1)/2} 2^{-(i-1)/2} + \radius_1^2 2^{-(i-1)}) \nonumber \\
& = \const \,\frac{\radius_1^2 2^{-(k-1)} \realrsc^2}{\spindex
  \lossrsc} (2^{-(i-k)/2} + 2^{-(i-k)}) \nonumber \\
& \leq \const \, \frac{\radius_{k}^2 \realrsc^2}{\spindex \lossrsc}\,
2^{-(i - k)/2},
\end{align*}
which completes the proof.


\section{Proof of Lemma~\ref{lemma:rsc}}
\label{proof:lemma-rsc}

Results of this flavor have been established in prior work
(e.g.,~\cite{RudelsonZh2012, LohWa2011}).  We provide a proof here for
completeness, building on the result of Loh and
Wainwright~\cite{LohWa2011}. Following their notation, we define the
$\ell_0$-``ball'' of radius $\kdim$, namely the set $\Ball_0(\kdim)
\defn \big \{ \param \, \mid \, \mbox{$\param_j \neq 0$ for at most
  $\kdim$ indices} \}$, as well as the set
\begin{equation*}
  \SparseSet(\kdim) = \Ball_0(\kdim) \cap \Ball_2(1). 
\end{equation*}
We establish our claim by appealing to Lemma 12 of Loh and Wainwright,
applying their result with the settings
\begin{equation*}
\Gamma =\frac{ X^TX}{\numobs} - \CovMat\quad \mbox{and the sparsity
  parameter} \quad \kdim \defn \const_0\, \frac{\numobs} {\log\pdim}
\min \left\{ \frac{\sigmin^2(\CovMat)} {\xnoise^4}, 1\right\},
\end{equation*}
where $\const_0$ is an appropriate universal constant chosen to ensure
that $\kdim > 1$. Based on this result, we see that it suffices to
establish
\begin{equation*}
  \left|v^T \left( \frac{X^TX}{\numobs} - \CovMat\right) v \right|
  \leq \frac{\sigmin(\CovMat)}{54} \quad \mbox{for all}~~ v \in
  \SparseSet(2\kdim). 
\end{equation*}

Under Assumption~\ref{ass:subgauss-design} on the sub-Gaussianity of
the design matrix $\sampleX$, we can establish the above condition by
appealing to Lemma 15 of Loh and
Wainwright~\cite{LohWa2011}. Specifically, we apply their result with 
$t = \frac{\sigmin(\CovMat)}{54}$ and with $s = \kdim$ as defined
above. Then we can mimic the argument in the proof of Lemma 1 in the
paper~\cite{LohWa2011} to conclude that with probability at least $1 -
2\exp(-\const_1\numobs\min(\sigmin^2(\CovMat)/\xnoise^4,1))$ we have
the bound
\begin{equation*}
  \left| v^T \left( \frac{X^TX}{\numobs} - \CovMat\right) v \right|
  \leq \frac{\sigmin(\CovMat)} {2} \left( \norm{v}_2^2 + \frac{
    \norm{v}_1^2} {\kdim} \right) \quad \mbox{for all}~~ v \in
  \R^{\pdim}. 
\end{equation*}
Substituting our setting of $\kdim$ and rearranging terms completes
the proof.


\section{Proof of Lemma~\ref{lemma:goodepoch}}

The base case for the $\ell_1$-error bound at $k = 1$ is true by our
assumption that $\norm{\opt}_1 \leq \radius_1$. As a result, the
convergence analysis of Proposition~\ref{prop:mainepoch} applies at
the first epoch. Assuming that $\kstar > 1$, our setting of $\kstar$
ensures that
\begin{equation*}
\frac{\radius_1 \spindex}{\lossrsc} \frac{\sqrt{\proxbound(\lips_1^2 +
    \noise_1^2)} + \devcon\noise_i} {\sqrt{T_0}} \; \leq \l \const
\frac{\radius_1^2} {2} \; = \; \const \radius_2^2.
  \end{equation*}
Since $T = \order(\proxbound)$ by assumption, the $\radius_1^2
\proxbound/T_0$ term can also be further upper bounded by $\const
\radius_2^2$. Hence, as long as $\radius_2^2 \geq 2 \totalslacksq$, we
obtain the stated $\ell_1$ error bound at the second epoch by applying
equation~\eqref{eqn:ell1bound} from
Proposition~\ref{prop:mainepoch}. A similar calculation using
equation~\ref{eqn:fvaluebound} yields
\begin{equation*}
\frac{\lossrsc}{2} \norm{\xavg(T_0) - \epochopt{2}}_2^2 \leq
f_2(\xavg(T_0)) - f_2(\epochopt{2}) \leq \const \,
\frac{\radius_2^2\lossrsc}{\spindex}.
\end{equation*}
Finally, we can obtain a similar bound up to constant factors on
$\norm{\xavg(T_0) - \opt}_2^2$ as well by combining with the
$\ell_2$-error bound of Lemma~\ref{lemma:opterror} as before. Thus, we
obtain our inductive claim for $k = 2$. Assuming the inductive
hypothesis for arbitrary $i < \kstar + 1$, the reasoning for obtaining
the inductive claim at $i+1$ is exactly identical to the above
arguments, completing the proof of the lemma.


\bibliographystyle{plain} \bibliography{bib}

\typeout{-----------------------------------------------------}
\typeout{---> MAJOR HACK:  check labels of HACKRSC}
\typeout{-----------------------------------------------------}
\end{document}